\def\argmin{\mathop{\rm argmin}}
\def \S {\mathbb{S}}
\def \C { {\mathbb{B}}}
\def \d {\mathrm{d}}
\def \d {\mathrm{d}}
\renewcommand{\@algocf@capt@plain}{above}
\theoremstyle{definition}
\newtheorem{prop}{Proposition}
\begin{document}

\title{Amplitude Mean of Functional Data on $\mathbb{S}^2$ {and its Accurate Computation}
}


\author{Zhengwu Zhang
\and Bayan Saparbayeva}

\institute{Z. Zhang \at
              356 Hanes Hall, Department of Statistics and Operations Research, UNC Chapel Hill \\
              Tel.: +191-99-627998\\
              \email{zhengwu\_zhang@unc.edu}           
           \and
           B. Saparbayeva \at
              Department of Neurology, University of Rochester Medical Center\\
              \email{bayan\_saparbayeva@urmc.rochester.edu}
}


\maketitle

\begin{abstract}
Manifold-valued functional data analysis (FDA) has become an active area of research motivated by the rising availability of trajectories or longitudinal data observed on non-linear manifolds. The challenges of analyzing such data come from many aspects, including infinite dimensionality and nonlinearity, as well as time-domain or phase variability. In this paper, we study the amplitude part of manifold-valued functions on $\S^2$, which is invariant to random time warping or re-parameterization. {We represent a smooth function on $\S^2$ using a pair of  components: a starting point and a transported square-root velocity curve (TSRVC).Under this representation, the space of all smooth functions on $\S^2$ forms a vector bundle, and the simple $L_2$ norm becomes a time-warping invariant metric on this vector bundle.} Utilizing the nice geometry of $\S^2$, we develop a set of efficient and accurate tools for temporal alignment of functions, geodesic computing, and sample mean and {covariance} calculation. At the heart of these tools, they rely on gradient descent algorithms with carefully derived gradients. We show the advantages of these newly developed tools over its competitors with extensive simulations and real data and demonstrate the importance of considering the amplitude part of functions instead of mixing it with phase variability in manifold-valued FDA. 
\keywords{Spherical manifold \and Functional data \and Amplitude and phase \and Parallel transport \and Gradient descent}
\end{abstract}

\section{Introduction}
\label{intro}

Functional data analysis (FDA) has been an active area of research for decades, motivated by the rich availability of trajectories or longitudinal data observed over time. Well-known monographs include \cite{ramsay2005functional, ramsay2007applied, bowman2010functional} and \cite{ferraty2006nonparametric} among many others. Many advanced statistical techniques have been developed for classical functional data that take values in a vector space $p:[0, 1]\rightarrow \mathbb{R}^d,$ for instance, principal component analysis, functional linear regression and functional quantile regression (see \cite{RamsayDalzell1991, Hsing2015TheoreticalFO, LinYao2019}). 
However, many modern applications consider functional data taking values on 
a non-linear Riemannian manifold raised in the form $p:[0, 1]\rightarrow \mathcal{M}$, where $\mathcal{M}$ is a Riemannian manifold. 
Compared with methods for vector valued functional data, analytical methods for functional data on non-linear Riemannian manifold are very limited. In this paper, we consider a special manifold $\mathcal{M} = \S^2$, and develop fundamental statistical tools, such as geodesic calculation, temporal alignment and sample mean and covariance calculation, for analyzing functional data on $\S^2$. Two examples of such data are shown in Figure \ref{fig:intro_hurrican_bird}, where the left panel shows migration paths of a type of bird called {\it Swainson hawk} and the right shows some tracks for hurricanes originated from the Atlantic ocean. 

\begin{figure}[h]
\centering
\includegraphics[width=0.8\textwidth]{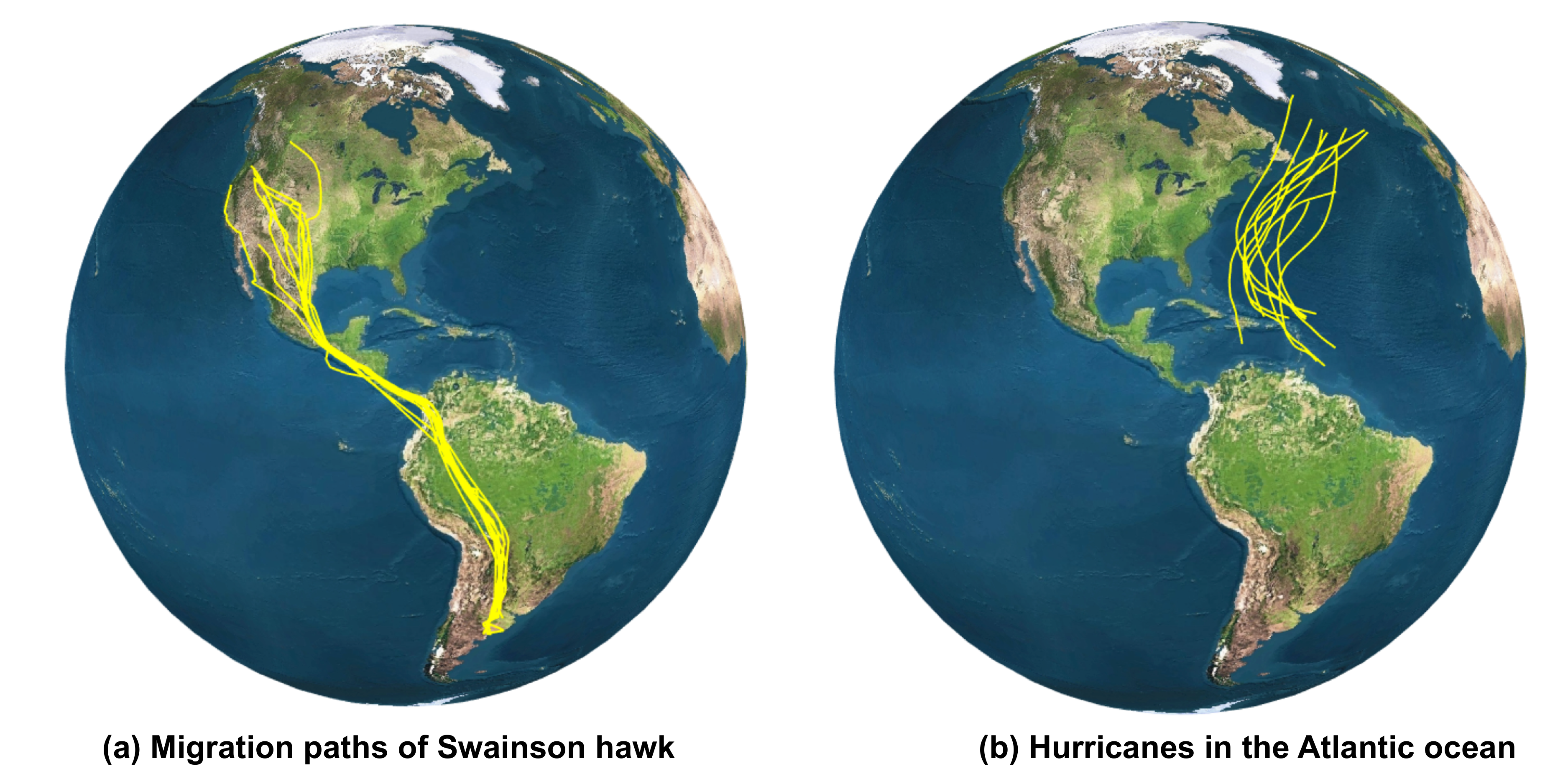}
\caption{Examples of functional data on manifold. The left panel shows migration paths of a type of bird called Swainson hawk and the right shows some tracks for hurricanes originating in 
the Atlantic ocean.}
\label{fig:intro_hurrican_bird}
\end{figure}

The challenges of analyzing functional data on a Riemannian manifold come from many aspects, such  as infinite dimensionality and nonlinearity. While the infinite dimensionality property is easy to understand given the consideration is functional data, the nonlinearity challenge comes from the fact that the functions take values on a non-linear manifold space, making many advanced techniques relying on linear structure ineffective. Another important challenge is the time domain variability in the functional data, which is less often considered in the literature. Taking the bird migration tracks as an example, two birds can fly similar paths but with different speeds. This phenomenon is also presented in traditional functional data, e.g, the misaligned peaks in the biomechanical data presented in Figure 1.11 in \cite{ramsay2005functional}, where alignment of the the functional data is necessary.

To more precisely describe the variability in the time domain, we start with introducing some notations. Let $p:[0, 1]\mapsto \S^2$ denote a smooth function on the 2-sphere, and $\gamma:[0, 1]\mapsto [0, 1]$ be a positive diffeomorphism with $\gamma(0) = 0$ and $\gamma(1) = 1$. Here the $\gamma$ plays the role of re-parameterizing the functional data, that is $\tilde{p} = p(\gamma)$ is a re-parameterized version of $p$.  $p$ and $\tilde{p}$ have the same path on $\S^2$, but $p(t)$ can be significantly different from $\tilde{p}(t)$ for the same $t$. For any two smooth functions or trajectories $p_1, p_2:[0, 1] \mapsto \S^2$, to remove the re-parameterization variability, we often align them first, which is done through finding a $\gamma$ such that $p_1(t)$ is optimally registered to $p_2(\gamma(t))$ for all $t \in [0, 1]$. This problem can be extended to multiple functions $\{p_1, \dots, p_n\}$ and we want to find a template and set of functions $\{\gamma_1, \dots, \gamma_n\}$ so that the template and $\{p_1(\gamma_1), \dots, p_1(\gamma_n)\}$ are optimally aligned. The template is regarded as amplitude mean and is our main consideration in this paper.  

Classical statistical analysis of functional data on  manifolds usually considers only raw amplitude variation (see \cite{RamsayDalzell1991, ramsay2005functional, MullerStadtmuller2005, Hsing2015TheoreticalFO}, and \cite{LinYao2019}) under some distance function, for example
\begin{align}\label{eqn:metric1}
 d(p_1, p_2) = \int_0^1 d_{g} \big(p_1(t), p_2(t)\big)\d t, 
\end{align}
where $d_{g}$ is the geodesic distance function on a manifold $\mathcal{M}.$ Utilizing this distance function, we can calculate summary statistics such as mean and covariance on the tangent space of the mean (see \cite{LinYao2019, MullerLiu2004}). However, due to the presence of the phase variability, the mean based on \eqref{eqn:metric1} does not always reflect the common pattern of the data and the covariance is inflated, as illustrated later in the real data analysis.  Alignment or phase-amplitude separation is an important step in manifold-valued FDA.

Structural analysis was proposed in \cite{KneipGasser1992}  to align curves by identifying the timing of salient features before applying further statistics. Another alignment method is the Procrustes method (see \cite{RamsayLi1998}) that iteratively warps each curve to the sample Fr\'{e}chet mean. The alignment is performed with the help of dynamic time warping algorithms (see \cite{Bertsekas1995DynamicPA, GasserGervini2004, james2007, Sakoe1978DynamicPA, WangGasser1997}), e.g., synchronizing two functions $p_1, p_2$ with respect to some distance function
\begin{align}\label{eqn:metric2}
  d(p_1, p_2) = \inf_{\gamma\in\Gamma} \int^1_0 d_{g} \Big(p_1(t), p_2\big(\gamma(t)\big)\Big)\d t,  
\end{align}
where $\Gamma$ is the set of positive diffeomorphisms $\Gamma = \big\{\gamma\in {\rm Diff}([0, 1]): \gamma(0) = 0, \ \gamma(1) = 1\big\}.$ However, due to the fact that the distance function \eqref{eqn:metric1} is not invariant under the group of re-parameterization $\Gamma,$ the objective function \eqref{eqn:metric2} has the vanishing effect (\cite{marron2014, Michor2005, Michor2003RiemannianGO}). Sobolev metrics (see \cite{Michor2007AnOO, Sundaramoorthi05sobolevactive, Younes1998ComputableED}) overcome these issues but are not always easy to compute. In \cite{Younes1998ComputableED},  the importance of the fact that the distance function should be invariant under the Lie group of actions was elaborated.

A few recent studies start to develop more appropriate metrics and distance functions to perform FDA on manifolds. Leveraging advancements in shape analysis, 
\cite{su2014} extended the square-root velocity curve in the Euclidean space \cite{Srivastava_2011} to a manifold by parallel transporting all square-root velocity vectors to some reference point $c\in\mathcal{M}$ along geodesic paths
\begin{align*}
 q(t) = \left( \frac{\dot{p}(t)}{\sqrt{|\dot{p}(t)|}} \right)^{||}_{p(t) \rightarrow c}.
\end{align*} 
Although this generalization makes the distance function invariant under $\Gamma,$ the random choice of the reference point $c\in\mathcal{M}$ can introduce distortions and uncertainty into the analysis.  For example, in Figure \ref{fig:comp_su}, we show how the reference point can affect the computed distance between given functional data or trajectories on $\S^2$ using the method in \cite{su2014}. To overcome {these limitations}, more intrinsic methods were proposed in \cite{ZhengwuZhang2018, zhang2018rate, le2019discrete, brigant2016computing}. {Compared with \cite{su2014}, these new methods represent each functional data as a pair, its starting point and  its speed vector field renormalized by the square root of its norm, to 1) reduce the distortion brought by parallel translating to a far away tangent space (of the reference point), and 2) to avoid the need of choosing an arbitrary reference point. With such representation, the manifold-valued functions are considered as elements of an (infinite-dimensional) manifold, and then they equip it with a Riemannian metric that can be invariant with respect to re-parameterization of functions.}  Panel b in Figure \ref{fig:comp_su} demonstrates the advantage of  \cite{ZhengwuZhang2018} over \cite{su2014} using eight simulated trajectories. However, due to the complexity of the proposed metrics and the manifold itself, these methods all have significant computational challenges to calculate geodesics and amplitude mean after alignment.

\begin{figure}[h]
\centering
\includegraphics[width=0.9\textwidth]{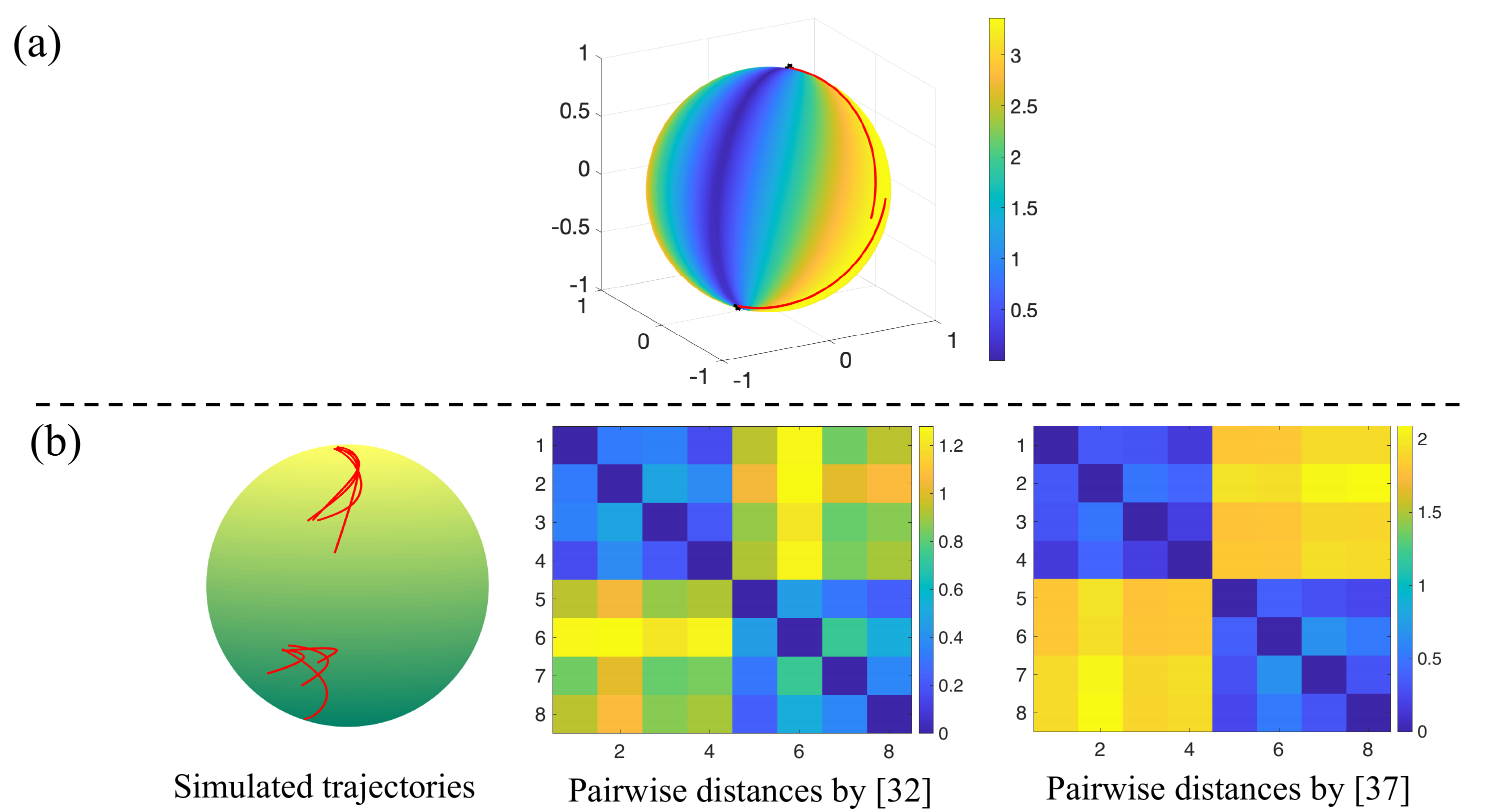}
\caption{Panel (a) shows the distance between two red trajectories at different reference point computed by the method in \cite{su2014}. We sampled 100*100 points on the sphere as reference points and mark the distance at each reference point using the color. Panel (b) compares \cite{su2014} with \cite{ZhengwuZhang2018} using pairwise geodesic distances between eight simulated trajectories.}
\label{fig:comp_su}
\end{figure}

In this work, we adopt the representation framework in \cite{ZhengwuZhang2018} to inherit its advantages in analyzing FDA on manifolds, and develop a set of efficient and accurate tools for analyzing the amplitude mean of functional data on $\S^2$. The most important contribution of this paper is a gradient descent algorithm for more effectively computing the geodesic between the amplitudes of two functions on  $\S^2$. To achieve this, we have to invent novel computational tools and algorithms.  More specifically, compared with \cite{ZhengwuZhang2018}, this paper's novel contributions include:  1) we explicitly derive the analytical formulas for parallel transport along any circular arc $\beta$, where in \cite{ZhengwuZhang2018} the parallel transports along $\beta$ are computed by numerical approximation; 2) we develop an efficient  and accurate gradient descent method to obtain the geodesic between two elements on ${\mathbb{B}}$ (here ${\mathbb{B}}$ denotes the set of all functional data in our framework, see its definition in section 2);  and  3) we utilize the semi-linearity of ${\mathbb{B}}$ to simplify the geodesic calculation. 
The proposed new tools have been implemented in Matlab (source code can be found on GitHub \url{https://github.com/Bayan2019/2DSphericalTrajectories}), and compared with the tools in \cite{ZhengwuZhang2018} in both simulated and real-world data to demonstrate their advantages. 

The rest of the paper is organized as follows. In sections 2 and 3 the Riemannian geometry of smooth functions on $\mathbb{S}^2$ is introduced. It is worth to notice that Theorem 1 is an extended version of Theorem 1 in \cite{ZhengwuZhang2018}.   In section 4, we show how to calculate the sample Fr\'echet mean for a set of functional data and their amplitudes.  {Section 5 discusses the covariance function computation on tangent space of the Fr\'echet mean. In sections 6 and 7} we present simulation studies and real data analyses, respectively. Section 8 concludes the paper. 

\section{Riemannian Geometry of Smooth Functions on $\S^2$}

Let $p:[0,1] \mapsto \S^2$ denote a smooth function on the 2-sphere, and let the set of all such functions be denoted as $\mathcal{F} = \{p: [0, 1]\mapsto \S^2| p \text{ is smooth}\}$. Also, $\Gamma$ is the set of positive diffeomorphisms $\Gamma = \big\{\gamma \in {\rm Diff}([0, 1]): \gamma(0) = 0, \ \gamma(1) = 1\big\}.$ $\Gamma$ forms a group action under the composition: $\mathcal{F} \times \Gamma \mapsto \mathcal{F}$ according to $p \circ \gamma \mapsto p(\gamma)$, where $p$ and $p(\gamma)$ follow the same trajectory on $\S^2$ but with different phase (temporal evaluation speed). We represent the function $p$ using its starting point and a transported square-root vector curve (TSRVC):
\begin{equation*}
 x = p(0)\in \S^2, \qquad \quad q(t) = \left(\frac{\dot{p}(t)}{\sqrt{|\dot{p}(t)|}}\right)_{p(t) \rightarrow x = p(0)}^{||} \in T_x \S^2,
\end{equation*} 
where $(v)^{||}_{p(t) \rightarrow x}$  represents parallel transport of vector $v \in T_{p(t)} \S^2$ to $T_{x}\S^2$, and the parallel transport is done along the curve itself $p(t).$ Note that $q(t)$ is a function in $T_{x}\S^2$.  We represent $p$ using a pair $(x, q)$, illustrated in Figure \ref{fig:rep_manifold} (a). The TSRVC representation is bijective: any $p \in \mathcal{F}$ can be uniquely represented by a pair $(x, q)$ and we can reconstruct $p$ from $(x, q)$ using covariant integral \cite{zhang2018rate}. When it is convenient, we use $p = (x, q)$ for notation simplicity and we will explicitly point out the notation change. 

\begin{figure}
\begin{tabular}{c|ccc}
 \includegraphics[width=0.28\linewidth]{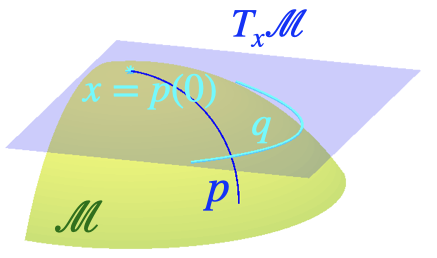} & & & \includegraphics[width=0.5\textwidth]{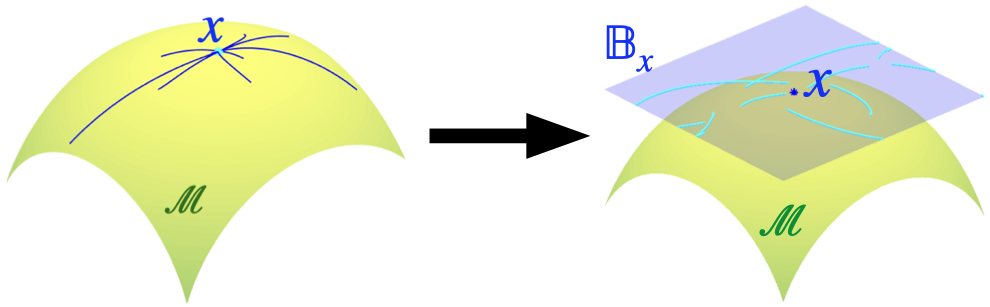} \\
 (a) & & & (b) 
\end{tabular}
\caption{Panel (a) illustrates using $(x, q)$ to represent a function $p$ on $\S^2$, and panel (b) shows the space ${\mathbb{B}}_x$ containing all absolutely continuous functions that start at $x$. Here we have $\mathcal{M} = \S^2$.}
\label{fig:rep_manifold}
\end{figure}

First of all, for any $x \in \S^2$ we have ${\mathbb{B}}_x = \mathbb{L}_2([0, 1], T_{x}\S^2)$ to represent all absolutely continuous functions or trajectories on $\S^2$ that start at $x.$ Namely, we consider a smooth trajectory $p:[0, 1] \rightarrow \S^2$ that starts at $p(0) = x$ as an element in ${\mathbb{B}}_x$
\begin{equation*}
 q(t) = \left(\frac{\dot{p}(t)}{\sqrt{|\dot{p}(t)|}}\right)_{p(t) \rightarrow x}^{||} \in {\mathbb{B}}_x.
\end{equation*}
So the full space of interest becomes a bundle
\begin{equation*}
 {\mathbb{B}} = \bigsqcup_{x\in \S^2}{\mathbb{B}}_x,
\end{equation*}
which is similar to the tangent bundle $T\S^2$ but instead of tangent vectors we consider square-integrable functions on tangent spaces. For the manifold ${\mathbb{B}}$ we have the following tangent space
\begin{equation*}
 T_{(x, q)}{\mathbb{B}} = T_x\S^2 \bigoplus {\mathbb{B}}_x,
\end{equation*}
and define the inner product 
\begin{equation}\label{eqn:t-metric}
 \big\langle(u_1, w_1), (u_2, w_2)\big\rangle_{(x, q)} = \langle u_1, u_2\rangle_x + \int_0^1\langle w_1(t), w_2(t)\rangle_x\d t,
\end{equation}
where $(u_i, w_i) \in T_{(x, q)}{\mathbb{B}}$, and $\langle\cdot, \cdot\rangle_x$ is a scalar product in $T_x\S^2$. {The inner produce defines a simple $L_2$ metric on ${\mathbb{B}}$.}

Our main focus is the amplitude of functions, i.e., the quotient space ${\mathbb{B}}/\Gamma.$ However, since the distance function on $ {\mathbb{B}}/\Gamma$ is determined through the distance function on $ {\mathbb{B}},$ we have to review the geometry of $ {\mathbb{B}}$ first. As for all Riemannian manifolds, the distance between two elements $(x_0, q_0), (x, q) \in {\mathbb{B}}$ is determined by the length of geodesic curve, i.e., the shortest path 
\begin{align*}
 \big(\beta(s), q(s, \cdot)\big)\in  {\mathbb{B}}
\end{align*}
connecting $(x_0, q_0)$ and $(x, q)$ with
\begin{align*}
 \big(\beta(0) = x_0, \quad q(0, \cdot) = q_0\big) \quad {\rm and} \quad \big(\beta(1) = x, \quad q(1, \cdot) = q\big).
\end{align*}

\begin{prop}\label{prop1}
If a path $\big(\beta(s), q(s, \cdot)\big)$ is geodesic on $ {\mathbb{B}}$ under the metric \eqref{eqn:t-metric}, then it satisfies the following properties:
\begin{enumerate}
 \item the base-curve $\beta(s)$ is constant-speed parameterized;
 \item TSRVC $q(s, \cdot)$ is covariantly linear along $\beta,$ and that is, $\nabla_{\dot{\beta}(s)} \big(\nabla_{\dot{\beta}(s)} q(s, t)\big) = 0.$
\end{enumerate}
\end{prop}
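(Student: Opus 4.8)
The plan is to realize $\mathbb{B}$ as a Riemannian bundle over $\S^2$ and characterize its geodesics as the critical points of the energy functional, then read off the two properties from the resulting Euler--Lagrange equations. First I would pin down the velocity of a path $s\mapsto(\beta(s),q(s,\cdot))$ in $\mathbb{B}$: its first component is $\dot\beta(s)\in T_{\beta(s)}\S^2$, while its second component, since $q(s,\cdot)$ lies in the moving fiber $\mathbb{B}_{\beta(s)}=\mathbb{L}_2([0,1],T_{\beta(s)}\S^2)$, is the covariant derivative $\nabla_{\dot\beta(s)}q(s,\cdot)$ obtained by applying the Levi--Civita connection of $\S^2$ pointwise in $t$; this is the bundle connection compatible with the metric \eqref{eqn:t-metric}. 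Thus the energy of the path is
\begin{equation*}
 E(\beta,q)=\int_0^1\left(|\dot\beta(s)|^2+\int_0^1|\nabla_{\dot\beta(s)}q(s,t)|^2\,\d t\right)\d s ,
\end{equation*}
and geodesics are exactly its critical points among paths with fixed endpoints $(x_0,q_0)$ and $(x,q)$.

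Next I would compute the first variation. Take a two--parameter variation $\bigl(\alpha(\epsilon,s),Q(\epsilon,s,t)\bigr)$ with $\alpha(0,\cdot)=\beta$, $Q(0,\cdot,\cdot)=q$, endpoints $s=0,1$ held fixed, and write $V(s)=\partial_\epsilon\alpha|_{\epsilon=0}$ and $W(s,t)=\tfrac{D}{\partial\epsilon}Q|_{\epsilon=0}$ for the base-- and fiber--variation fields (both vanishing at $s=0,1$). Differentiating under the integral sign and using (i) torsion--freeness of the Levi--Civita connection, $\tfrac{D}{\partial\epsilon}\partial_s\alpha=\tfrac{D}{\partial s}\partial_\epsilon\alpha$, and (ii) the curvature identity $\tfrac{D}{\partial\epsilon}\tfrac{D}{\partial s}Q-\tfrac{D}{\partial s}\tfrac{D}{\partial\epsilon}Q=R(\partial_\epsilon\alpha,\partial_s\alpha)Q$, I get at $\epsilon=0$
\begin{equation*}
 \tfrac12 E'(0)=\int_0^1\langle\nabla_{\dot\beta}V,\dot\beta\rangle\,\d s+\int_0^1\!\!\int_0^1\bigl\langle\nabla_{\dot\beta}W+R(V,\dot\beta)q,\ \nabla_{\dot\beta}q\bigr\rangle\,\d t\,\d s .
\end{equation*}

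Integrating by parts in $s$ (boundary terms vanish because $V,W$ vanish at $s=0,1$) and using the pair-- and skew--symmetries of the curvature tensor to rewrite $\langle R(V,\dot\beta)q,\nabla_{\dot\beta}q\rangle=-\langle V,R(q,\nabla_{\dot\beta}q)\dot\beta\rangle$, the first variation becomes
\begin{equation*}
 \tfrac12 E'(0)=-\int_0^1\Bigl\langle V,\ \nabla_{\dot\beta}\dot\beta+\int_0^1 R(q,\nabla_{\dot\beta}q)\dot\beta\,\d t\Bigr\rangle\,\d s-\int_0^1\!\!\int_0^1\langle W,\nabla_{\dot\beta}\nabla_{\dot\beta}q\rangle\,\d t\,\d s .
\end{equation*}
Since $V$ and $W$ are otherwise arbitrary and may probe each $s$, respectively each $(s,t)$, independently, the fundamental lemma of the calculus of variations gives the fiber equation $\nabla_{\dot\beta}\nabla_{\dot\beta}q=0$ --- which is precisely property 2 --- and the base equation $\nabla_{\dot\beta}\dot\beta=-\int_0^1 R(q,\nabla_{\dot\beta}q)\dot\beta\,\d t$. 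Pairing the base equation with $\dot\beta$, the right--hand integrand $\langle R(q,\nabla_{\dot\beta}q)\dot\beta,\dot\beta\rangle$ vanishes by skew--adjointness of $R(q,\nabla_{\dot\beta}q)$, so $\tfrac{d}{ds}|\dot\beta(s)|^2=2\langle\nabla_{\dot\beta}\dot\beta,\dot\beta\rangle=0$; hence $\beta$ is constant--speed, which is property 1.

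The main obstacle is bookkeeping rather than depth: one must handle the covariant calculus on the bundle carefully --- tracking that $q(s,\cdot)$ lives in a fiber that moves with $s$, reading its $s$--derivative as a covariant derivative along $\beta$, and in particular commuting $\tfrac{D}{\partial\epsilon}$ with $\tfrac{D}{\partial s}$ on $Q$, which is exactly the step that injects the curvature coupling $R(q,\nabla_{\dot\beta}q)\dot\beta$ between the base and fiber equations. The infinite dimensionality of the fibers is harmless: each step is an $\mathbb{L}_2$ identity in which the $t$--integral commutes with the $\epsilon$-- and $s$--derivatives, so the classical first--variation argument carries over verbatim, as long as $(\beta,q)$ and its variations are smooth enough to license these interchanges, which the standing smoothness assumption provides.
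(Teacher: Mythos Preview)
Your derivation is correct: writing the energy with the fiber velocity taken as the covariant derivative $\nabla_{\dot\beta}q$, computing the first variation, and extracting the two Euler--Lagrange equations is exactly the right way to prove this statement, and your handling of the curvature cross-term and the skew-adjointness argument for $\langle R(q,\nabla_{\dot\beta}q)\dot\beta,\dot\beta\rangle=0$ is clean.

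As for comparison with the paper: the paper does not actually prove Proposition~\ref{prop1}. Its entire proof reads ``The proof can be found in \cite{GasserGervini2004},'' which is almost certainly a citation slip (that reference concerns self-modelling warping functions for Euclidean curve registration, not geodesic equations on vector bundles). So there is nothing substantive in the paper to compare your argument against; you have supplied what the paper omits. Your variational approach is the standard one for this kind of Sasaki-type bundle metric, and it yields more than the proposition states---you also obtain the full base equation $\nabla_{\dot\beta}\dot\beta=-\int_0^1 R(q,\nabla_{\dot\beta}q)\dot\beta\,\d t$, from which the constant-speed property is a corollary rather than an independent fact.
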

\begin{proof}
The proof can be found in \cite{GasserGervini2004}.
\end{proof}

If a path on $ {\mathbb{B}}$ satisfies the two properties listed above, then it is completely determined by the curve $\beta \in \S^2$, which is called the {\it base-curve} in the following context. That is, once we fix $\beta(s)$, we have $q(s, t) = q_{0, \beta(s)}^{||}(t) + s\big(q_{\beta(s)}^{||}(t) - q_{0, \beta(s)}^{||}(t)\big),$ where $q_{0, \beta(s)}^{||}(t)$ is the parallel transport of $q_0(t)$ from $\beta(0)$  to $\beta(s)$ along $\beta$ for $s\in[0, 1]$, and $q_{\beta(s)}^{||}(t)$ is the parallel transport of $q(t)$ from $\beta(1)$ to $\beta(s)$ along $\beta$. Moreover, the length of this path is defined as
\begin{equation}\label{eqn:dbeta}
 d_\beta(p_0, p) = d_{\beta}((x_0, q_0), (x, q)) = \sqrt{\ell_{\beta}^2 + \int_0^1\big|q_{0, \beta(1)}^{||}(t) - q(t)\big|^2\d t},
\end{equation}
where $\ell_{\beta}^2 = \Big(\int_0^1 \sqrt{\big\langle\dot{\beta}(t), \dot{\beta}(t) \big\rangle_{\beta(t)}}\d t\Big)^2 = |\dot{\beta}(0)|^2.$ If a path $(\beta(s), q(s, \cdot))$ is not a geodesic on $ {\mathbb{B}}$, we can still use equation (\ref{eqn:dbeta}) to quantify its length. If $\beta$ is unknown, we find the geodesic distance according to
\begin{equation}\label{eqn:metric3}
 d_{\mathbb{C}}(p_0, p) = \min_{\beta} d_{\beta}\big(p_0, p\big).
\end{equation}

The following theorem presents the main advantage of using TSRVC.

\begin{theorem}\label{thm2}
 For any two trajectories $p_0, p  \in \mathcal{F}$, and their corresponding TSRVC representations $(x_0, q_0), (x, q) \in {\mathbb{B}},$ any smooth base-curve $\beta,$ and any re-parameterization $\gamma\in\Gamma$, there is equality
 \begin{align*}
 d_{\beta}\big(p_0, p\big) = d_{\beta}\big(p_0(\gamma(t)), p(\gamma(t))\big) = d_{\beta}\Big(\big(x_0, q_0(\gamma(t)) \sqrt{\dot{\gamma}(t)}\big), \big(x, q(\gamma(t)) \sqrt{\dot{\gamma}(t)}\big)\Big).
 \end{align*}
\end{theorem}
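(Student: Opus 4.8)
The plan is to prove the two equalities separately, exploiting the two ways in which re-parameterization acts. For the \emph{first} equality, $d_{\beta}(p_0,p) = d_{\beta}(p_0\circ\gamma, p\circ\gamma)$, I would first observe that re-parameterizing a trajectory $p$ by $\gamma$ does not change its starting point, $(p\circ\gamma)(0) = p(0) = x$, so the base-curve $\beta$ and the scalar $\ell_\beta^2 = |\dot\beta(0)|^2$ in \eqref{eqn:dbeta} are unaffected. The work is therefore entirely in the integral term $\int_0^1 |q_{0,\beta(1)}^{||}(t) - q(t)|^2\,\d t$. Here I would recall the standard square-root-velocity identity: if $\tilde p = p\circ\gamma$ then its TSRVC satisfies $\tilde q(t) = q(\gamma(t))\sqrt{\dot\gamma(t)}$ — this uses the chain rule $\dot{\tilde p}(t) = \dot p(\gamma(t))\dot\gamma(t)$, the fact that $\dot\gamma > 0$ so $\sqrt{|\dot{\tilde p}(t)|} = \sqrt{|\dot p(\gamma(t))|}\sqrt{\dot\gamma(t)}$, and that the parallel-transport path of $\tilde p$ to $x$ traces out the same curve in $\S^2$ as that of $p$ (just at a different speed), so parallel transport is unchanged. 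This simultaneously establishes the \emph{second} equality's form for the TSRVCs.

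Next I would reduce both remaining equalities to a single substitution lemma: for any two square-integrable curves $w_1, w_2 \in \mathbb{B}_x$ and any $\gamma \in \Gamma$,
\begin{equation*}
 \int_0^1 \big|w_1(\gamma(t))\sqrt{\dot\gamma(t)} - w_2(\gamma(t))\sqrt{\dot\gamma(t)}\big|^2\,\d t = \int_0^1 \big|w_1(t) - w_2(t)\big|^2\,\d t.
\end{equation*}
This is just the change of variables $u = \gamma(t)$, $\d u = \dot\gamma(t)\,\d t$, combined with pulling the common factor $\sqrt{\dot\gamma(t)}$ out of the norm: the integrand on the left is $|w_1(\gamma(t)) - w_2(\gamma(t))|^2\,\dot\gamma(t)$, which transforms exactly to $|w_1(u) - w_2(u)|^2\,\d u$. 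Applying this with $w_1 = q_{0,\beta(1)}^{||}$ and $w_2 = q$ handles the integral term in \eqref{eqn:dbeta}; since the first term $\ell_\beta^2$ is untouched by any of these operations, both equalities follow by taking square roots.

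The one point that needs genuine care — and which I expect to be the main obstacle — is the claim that parallel transport is invariant under re-parameterization of the path along which one transports. Parallel transport is defined by the covariant ODE $\nabla_{\dot p(t)} v(t) = 0$, which is not manifestly parameterization-independent since $\dot p$ rescales; one must verify that the solution depends only on the \emph{image} of the path, not its speed. This is a classical fact (parallel transport along a curve and along its reparameterization agree because the ODE is linear and first-order, so rescaling the velocity merely rescales the "time" variable of the ODE without changing the endpoint value), but since the entire TSRVC construction rests on it, I would state it explicitly and either cite it or give the one-line ODE argument. Once that is in hand, everything else is the chain rule and a change of variables, so the proof is short. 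I would also note in passing that this invariance of $d_\beta$ for \emph{every} fixed $\beta$ immediately yields, by taking the minimum over $\beta$ in \eqref{eqn:metric3}, that $d_{\mathbb{C}}$ is itself $\Gamma$-invariant — which is the property that makes the quotient $\mathbb{B}/\Gamma$ and hence the amplitude distance well-defined.
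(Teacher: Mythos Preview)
Your proposal is correct and follows the standard route for SRVF-type isometry results: the starting points are fixed so $\ell_\beta^2$ is unaffected, the chain rule together with parameterization-independence of parallel transport along the trajectory gives the TSRVC transformation law $\tilde q(t)=q(\gamma(t))\sqrt{\dot\gamma(t)}$, and the change of variables $u=\gamma(t)$ dispatches the integral term. The paper defers its proof to the supplement, but this is essentially the only natural argument, so your approach almost certainly coincides with it.

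One small point worth making explicit: when you apply your substitution lemma with $w_1=q_{0,\beta(1)}^{||}$ and $w_2=q$, you are tacitly using that parallel transport \emph{along $\beta$} is a linear map $P:T_{x_0}\S^2\to T_x\S^2$ applied pointwise in $t$, so that the transported TSRVC of $p_0\circ\gamma$ satisfies
\[
\big(\tilde q_0\big)_{\beta(1)}^{||}(t)=P\big(q_0(\gamma(t))\sqrt{\dot\gamma(t)}\big)=\sqrt{\dot\gamma(t)}\,P\big(q_0(\gamma(t))\big)=\sqrt{\dot\gamma(t)}\,q_{0,\beta(1)}^{||}(\gamma(t)).
\]
This is the step that puts the integrand into the exact form your lemma requires; it is trivial (linearity of $P$) but is a distinct use of parallel transport from the reparameterization-invariance you flag as the ``main obstacle'' (which concerns transport along the trajectory $p$ itself, not along $\beta$). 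Stating both clearly would make the write-up airtight.
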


The proof of this theorem is presented in the supplement. This theorem highlights the advantage of using TSRVC representation to study functional data or trajectories on $\S^2$: the action of $\Gamma$ on $ {\mathbb{B}}$ under the metric $d_{\beta}$ is by isometries. The isometry property of re-parameterization allows us to focus on the amplitude of a function and analyze it in a manner that is invariant to random time warping. 

To represent the amplitude of a function, we first introduce a closed set $\tilde{\Gamma}$, containing all non-decreasing, absolutely continuous functions $\gamma$ on $[0, 1]$ such that $\gamma(0) = 0$ and $\gamma(1) = 1$. It can be shown that $\Gamma$ is a dense subset of $\tilde{\Gamma}$ and the orbit of a TSRVC under $\tilde{\Gamma}$, defined as $[p] := (x, [q]) = \big\{\big(x, q(\gamma) \sqrt{\dot{\gamma}}\big) \big| \gamma \in \tilde{\Gamma}\big\}$, forms a closed set (which is not the case under $\Gamma$). For theory development, we utilize $\tilde{\Gamma}$, and define the function amplitudes as the set of orbits under the group action $\tilde{\Gamma}$: $ {\mathbb{B}}/\tilde{\Gamma} = \{(x, [q]) | (x, q) \in  {\mathbb{B}}\}$. In practice, we will only use elements in $\Gamma$ for simplicity. According to Theorem \ref{thm2}, we define the distance between amplitudes on $ {\mathbb{B}}/\tilde{\Gamma}$ as
\begin{equation}\label{eqn:unparametic}
 d_{ {\mathbb{B}}/\tilde{\Gamma}}([p_0], [p]) = \min_{\gamma_1, \gamma_2 \in \tilde{\Gamma}} d_{ {\mathbb{B}}}(p_0\circ\gamma_1, p\circ\gamma_2) \approx \min_{\gamma \in {\Gamma}} d_{ {\mathbb{B}}}(p_0, p\circ\gamma) \quad .
\end{equation}

\section{Geodesic Computation between Smooth Functions on $\S^{2}$}

We now study how to obtain the geodesic between two elements on $ {\mathbb{B}}$ and $ {\mathbb{B}}/\tilde{\Gamma}$. Proposition \ref{prop1} describes two important properties about the geodesic path, which will help us to find the geodesic.

\subsection{Geometry of base-curve $\beta$}
\label{sec:basecurve}

To understand the geometry of base-curve $\beta$, we start with introducing the concept named $p$-{\it optimal curves}. A curve $\beta$ connecting two points $x_0$ and $x$ on $\S^2$ is called $p$-{optimal} if $\ell_{\beta}\leq\ell_{\eta}$, where $\ell_{\eta}$ is the length of $\eta$, and $\eta$ is any curve that connects $x_0$ and $x$ with the parallel transport map from $T_{x_0}\S^2$ to $T_{x}\S^2$ equal to the parallel transport map along $\beta$ from $T_{x_0}\S^2$ to $T_{x}\S^2$ (see the detailed definition in \citep{ZhengwuZhang2018}). With this $p$-optimal concept in mind, we have the following lemmas. 
\begin{lemma}\label{lemma1}
If a path $\big(\beta(s), q(s, \cdot)\big)$ on $ {\mathbb{B}}$ is a geodesic, then the base-curve $\beta$ is a $p$-optimal curve.
\end{lemma}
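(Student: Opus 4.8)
\textbf{Proof proposal for Lemma \ref{lemma1}.}

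The plan is to argue by contradiction: suppose $\big(\beta(s), q(s,\cdot)\big)$ is a geodesic on $\mathbb{B}$ but $\beta$ is \emph{not} $p$-optimal. By the definition of $p$-optimality, there then exists a competitor curve $\eta$ on $\S^2$ with the same endpoints $x_0, x$, inducing the \emph{same} parallel transport map $T_{x_0}\S^2 \to T_x\S^2$, but with strictly shorter length $\ell_\eta < \ell_\beta$. The key observation is that the length formula \eqref{eqn:dbeta} for a path over a base-curve depends on the base-curve only through two quantities: the scalar $\ell_\beta$, and the parallel-transported TSRVC $q_{0,\beta(1)}^{||}$ at the terminal point. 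Since $\eta$ induces the identical parallel transport map from $T_{x_0}\S^2$ to $T_x\S^2$, we have $q_{0,\eta(1)}^{||}(t) = q_{0,\beta(1)}^{||}(t)$ for all $t$, so the integral term in \eqref{eqn:dbeta} is unchanged when we replace $\beta$ by $\eta$. Therefore $d_\eta(p_0,p)^2 = \ell_\eta^2 + \int_0^1 |q_{0,\beta(1)}^{||}(t) - q(t)|^2\,\d t < \ell_\beta^2 + \int_0^1 |q_{0,\beta(1)}^{||}(t) - q(t)|^2\,\d t = d_\beta(p_0,p)^2$.

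Next I would promote this inequality between path-lengths into a statement about the ambient geodesic distance. By \eqref{eqn:metric3}, $d_{\mathbb{B}}(p_0,p) = \min_\beta d_\beta(p_0,p) \le d_\eta(p_0,p) < d_\beta(p_0,p)$. But a geodesic path realizes the Riemannian distance between its endpoints, so if $\big(\beta(s), q(s,\cdot)\big)$ is a geodesic connecting $(x_0,q_0)$ and $(x,q)$ then its length equals $d_{\mathbb{B}}(p_0,p)$ — equivalently $d_\beta(p_0,p) = d_{\mathbb{B}}(p_0,p)$, using that (by Proposition \ref{prop1}) the geodesic is constant-speed and covariantly linear so that its length is exactly $d_\beta(p_0,p)$ as given in \eqref{eqn:dbeta}. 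This contradicts $d_{\mathbb{B}}(p_0,p) < d_\beta(p_0,p)$, completing the argument.

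I would make sure two points are handled carefully. First, the replacement curve $\eta$ furnished by the (negation of the) definition of $p$-optimality must be admissible as a base-curve in the sense used in \eqref{eqn:dbeta} and \eqref{eqn:metric3}: it connects $x_0$ and $x$ and is smooth enough for parallel transport along it to be defined, which is exactly what the definition in \cite{ZhengwuZhang2018} provides, so no extra regularity work is needed. Second, and this is the step I expect to be the main obstacle, I must be sure that the length of a geodesic path on $\mathbb{B}$ is \emph{correctly} computed by formula \eqref{eqn:dbeta}; the excerpt states \eqref{eqn:dbeta} as the length of the path determined by $\beta$ once the path is covariantly linear in $q$ and $\beta$ is constant-speed, which is precisely the shape of a geodesic by Proposition \ref{prop1}. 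So the subtlety is just to invoke Proposition \ref{prop1} to guarantee the geodesic has exactly this form, then use \eqref{eqn:dbeta}. Once that identification is in place, the contradiction is immediate, and the lemma follows.
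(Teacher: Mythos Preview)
Your proof is correct and follows essentially the same approach as the paper: both argue by contradiction, replacing a non-$p$-optimal $\beta$ by a shorter $\eta$ inducing the same parallel transport so that the integral term in \eqref{eqn:dbeta} is unchanged while $\ell_\beta^2$ strictly decreases, contradicting minimality. Your version is more carefully spelled out (explicitly invoking Proposition~\ref{prop1} and \eqref{eqn:metric3}), but the logical content is identical.
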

\begin{proof}
Let us assume that we can find a shorter curve than $\beta$ from $x_0$ to $x$ that induces the same parallel transport from $T_{x_0}\S^2$ to $T_{x}\S^2,$ then we can reduce $\ell_{\beta}$ without affecting $\int_0^1 \big|q_{0, \beta(1)}^{||}(t) - q(t)\big|^2\d t.$ Hence $\big(\beta(\cdot), q(\cdot, t)\big)$ does not have the shortest distance, and therefore it would not be a geodesic on $ {\mathbb{B}}.$ So there is a contradiction.
\end{proof}

\begin{lemma}\label{lemma2}
For any two points $x_0, x\in\mathbb{S}^2,$ the only $p$-optimal curves connecting them on $\mathbb{S}^2$ are the circular arcs.
\end{lemma}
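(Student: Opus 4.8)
The strategy is to characterize $p$-optimal curves variationally and then show the resulting curves are exactly circular arcs on $\S^2$. Fix $x_0, x \in \S^2$. Among all curves $\eta$ from $x_0$ to $x$ that induce a prescribed parallel transport map $P: T_{x_0}\S^2 \to T_x\S^2$, we seek those of minimal length; a $p$-optimal curve is a minimizer. The key fact we exploit is the relation between parallel transport around a loop and enclosed (signed) area: on $\S^2$ (constant curvature $1$), the holonomy of a loop is rotation of the tangent plane by an angle equal to the spherical area enclosed, modulo $2\pi$. Concatenating a candidate curve $\eta$ with the reverse of $\beta$ gives a loop based at $x_0$, and the condition that $\eta$ and $\beta$ induce the same parallel transport from $T_{x_0}\S^2$ to $T_x\S^2$ is equivalent to saying this loop has holonomy equal to the identity, i.e. it encloses signed area $\equiv 0 \pmod{2\pi}$.

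First I would set up the constrained optimization: minimize $\ell_\eta = \int_0^1 |\dot\eta(t)|\,\d t$ subject to (i) the endpoints $\eta(0) = x_0$, $\eta(1) = x$, and (ii) the holonomy constraint, which by the Gauss--Bonnet argument above is an integral constraint on the signed area swept relative to a fixed reference curve. This is a classical isoperimetric-type problem: minimize length with an area-type constraint. Introducing a Lagrange multiplier $\lambda$, the Euler--Lagrange equation for the functional $\ell_\eta + \lambda\,(\text{enclosed area})$ says that the geodesic curvature $\kappa_g$ of the minimizer is constant (equal to $-\lambda$, up to sign conventions). Curves of constant geodesic curvature on $\S^2$ are precisely the circles (small circles when $\kappa_g \neq 0$, great-circle arcs when $\kappa_g = 0$), which are the circular arcs in the statement. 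So the second step is to verify that the first-order optimality condition forces $\kappa_g = \text{const}$, and the third step is to identify constant-$\kappa_g$ curves on $\S^2$ with circular arcs — this last identification is standard spherical differential geometry (one can integrate the structure equations, or simply observe that a circle of spherical radius $r$ has constant geodesic curvature $\cot r$, and conversely constancy of $\kappa_g$ together with $\kappa_g$ of a plane curve in $\S^2$ pins down the curve up to isometry).

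I would then close the argument by checking that for \emph{every} admissible pair $(x_0,x)$ and every target parallel-transport map, a circular arc realizing it exists and is a genuine minimizer rather than just a critical point — one can see this directly: small circles through $x_0$ and $x$ form a one-parameter family whose enclosed signed area varies continuously and surjectively onto the required residue class, so the holonomy constraint is met by some arc, and an elementary comparison (or the fact that the area functional's sublevel sets are handled by the convexity/compactness of the problem on $\S^2$) shows it minimizes length in its constraint class.

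\textbf{Main obstacle.} The delicate point is making the holonomy constraint precise and showing it is equivalent to an honest, differentiable integral constraint to which the Lagrange-multiplier/Euler--Lagrange machinery applies — in particular, handling the ``$\pmod{2\pi}$'' ambiguity in the area--holonomy correspondence (different circular arcs between $x_0$ and $x$ can induce the same parallel transport while enclosing areas differing by $2\pi$), and arguing that the length-minimizer within the correct residue class is the one enclosing the area of smallest absolute value. I expect the clean way around this is to invoke the explicit description of $p$-optimal curves from \citep{ZhengwuZhang2018} and reduce the lemma to the differential-geometric fact that constant geodesic curvature $\iff$ circular arc, rather than re-deriving the variational characterization from scratch.
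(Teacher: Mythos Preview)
Your plan would work, but it takes a genuinely different and heavier route than the paper. Both arguments rest on the Gauss--Bonnet identification of holonomy with enclosed area on $\S^2$, but the paper never writes down a Lagrangian or Euler--Lagrange equation. Instead, given an arbitrary curve $\beta$ from $x_0$ to $x$, it directly produces a circular arc $\zeta_1$ through $x_0$ and $x$ with the same parallel transport as $\beta$, and lets $\zeta_2$ be the complementary arc so that $\zeta=\zeta_1\cup\zeta_2$ is a full circle. Now $\beta\cup\zeta_2$ and $\zeta$ are closed loops with the \emph{same} holonomy (hence, by Gauss--Bonnet, the same enclosed area modulo $2\pi$), and the spherical isoperimetric inequality is invoked as a black box: the circle $\zeta$ is the shortest closed curve in that area class. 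Subtracting the common length $\ell_{\zeta_2}$ gives $\ell_\beta\ge\ell_{\zeta_1}$, with equality forcing $\beta=\zeta_1$. This comparison-with-a-competitor trick is short and completely sidesteps the regularity, existence-of-minimizers, and $\bmod\,2\pi$ bookkeeping that you correctly identify as the main obstacle in your variational route. Your approach has the compensating virtue of being self-contained---you would effectively be re-deriving the first-order part of the isoperimetric inequality rather than citing it---and the constant-geodesic-curvature characterization generalizes transparently to other constant-curvature surfaces; but for $\S^2$ specifically the paper's argument is considerably cleaner.
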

\begin{proof}
Let $\beta$ be the curve joining $x_0$ with $x$ on $\S^2, $ and $\mathcal{P}\big(\beta\big)_0^1 : T_{x_0} \mathbb{S}^2\rightarrow T_{x}\mathbb{S}^2$ be the parallel translation map induced by $\beta$. Let a circle passing $x_0$ and $x$ on $\S^2$ be $\zeta = \zeta_1\cup\zeta_2$, where $\zeta_1$ is the circular arc from $x_0$ to $x$ that induces the same parallel transport $\mathcal{P}\big(\beta\big)_0^1$ as $\beta.$ The Gauss Bonnet theorem states that the angle of rotation of the parallel transport map $T_{x_0}\mathbb{S}^2 \rightarrow T_{x_0}\mathbb{S}^2$ induced by the closed curve $\zeta$ is equal of the integral of the Gaussian curvature over the region enclosed by the loop $\zeta.$ Since the Gaussian curvature of $\mathbb{S}^2$ equals $+1$ at every point, the angle of rotation is equal to the area enclosed by $\zeta.$ So the circle $\zeta$ has the shortest distance among all loops with the same parallel transport map. Therefore $\ell_{\beta} + \ell_{\zeta_2} \geq \ell_{\zeta_1} + \ell_{\zeta_2}.$ Hence $\beta = \zeta_1.$
\end{proof}

\begin{figure}
\begin{center}
\begin{tabular}{ccccc}
 \includegraphics[width=0.25\textwidth]{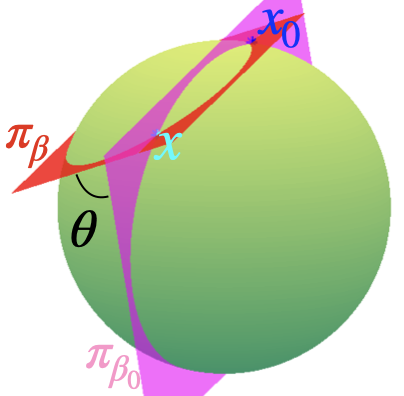} & \qquad & \raisebox{5.5\height}{\larger[7]$\Rightarrow$} & \qquad &
 \includegraphics[width=0.25\textwidth]{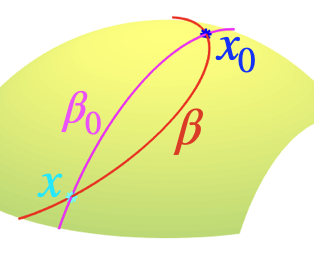} \\
 (a) & & & & (b)
\end{tabular}
\end{center}
\caption{Illustration of $p$-optimal curve $\beta$ connecting $x$ and $x_0$. Panel (a) shows two planes $\pi_{\beta}$ and $\pi_{\beta_0}$ containing $x_0$ and $x$ on $\S^2$. The interaction between $\S^2$ and $\pi_{\beta}$ contains a circular arc $\beta$ and the interaction between $\S^2$ and $\pi_{\beta_0}$ gives $\beta_0$, which is the geodesic on $\S^2$ between $x_0$ and $x$. Panel (b) shows $\beta$ and $\beta_0$.}
\label{fig:circarc}
\end{figure}

The two lemmas significantly reduce the search space for the base-curve $\beta$ in the geodesic calculation between two elements $(x_0, q_0)$ and $(x, q)$ on $ {\mathbb{B}}$, i.e., the base-curve $\beta$ must be among the circular arcs connecting $x_0$ and $x$ on $\S^2$. 

Now let us consider how to construct circular arcs connecting $x_0$ and $x$ on $\S^2$. We start with constructing a plane $\pi_\beta$ containing $x_0$ and $x$ (refer to the red plane in Figure \ref{fig:circarc} panel (a)). The plane $\pi_{\beta}$ is determined by its normal vector $n$ (a vector orthogonal to $\pi_{\beta}$). We construct vector $n$ based on an angle $\theta\in\left[-\frac{\pi}{2}, \frac{\pi}{2}\right]$ between $n$ and the cross product of $x_0$ and $x$, defined as $x_0 \times x$.  When we vary $\theta$, we get different $n$, and thus $\pi_\beta$, and the intersection between $\pi_\beta$ and $\S^2$. $n$ has the following explicit expression as a function of $\theta$:
\begin{equation*}
 n(\theta; x_0, x) = \frac{x_0 + x}{|x_0 + x|}\sin\theta + \frac{x_0\times x}{|x_0\times x|}\cos\theta.
\end{equation*}
The intersections  between $\pi_\beta$ and $\S^2$ give $p$-optimal curves with the following explicit expression:
\begin{equation} \label{eqn:para:beta}
\beta(s; \theta) = x_0 \cos(s\phi) + (n\times x_0) \sin(s\phi) + n \langle n, x_0\rangle \big(1 - \cos(s\phi)\big),
\end{equation}
where $\phi = 2 \arcsin\left(\frac{1}{2} \sqrt{\frac{2 - 2\langle x_0, x\rangle}{1 - \langle n, x_0\rangle^2}}\right)$, and $n$ is determined by $\theta$. For notation simplicity, $\beta(s; \theta)$ is also denoted as $\beta(\theta)$ or $\beta$ when there is no confusion.  Figure \ref{fig:circarc} panel (b) illustrates a circular arc $\beta$ and the geodesic $\beta_0$ between $x_0$ and $x$ on $\S^2$. Note that according to equation \eqref{eqn:para:beta}, the circular arcs between $x_0$ and $x$ are parameterized by a single parameter $\theta$ and have a closed form solution, which allows us to derive an efficient optimization algorithm to find the geodesic base-curve $\beta$. 

\subsection{Geodesic calculation on $ {\mathbb{B}}$}
\label{subsec:distance}

For two smooth functions $p_0$ and $p$ on $\S^2$, their geodesic distance can be calculated using formula \eqref{eqn:metric3}, and the geodesic is completely determined by the base-curve $\beta$. With the help of the two lemmas in section \ref{sec:basecurve}, we have significantly reduced the search domain of $\beta$ and can represent it using only one parameter $\theta$. Now we rewrite the geodesic distance calculation as a function of $\theta$: 
\begin{equation*}
 d^2_{ {\mathbb{B}}}(p_0, p) = \min_{\theta \in\left[-\frac{\pi}{2}, \frac{\pi}{2}\right]} d^2(p_0, p, \theta) = \min_{\theta \in\left[-\frac{\pi}{2}, \frac{\pi}{2}\right]} \ell_{\beta(\theta)}^2 + \int_0^1 \Big|q_{0,\beta(\theta)}^{||}(t) - q(t)\Big|^2 \d t.
\end{equation*}
One very attractive property of the $p$-optimal base-curve constructed according to equation \eqref{eqn:para:beta} is that we have explicit expressions for $\ell_{\beta(\theta)}$ and the parallel transport $q_{0, \beta(\theta)}^{||}(t)$ as functions of $\theta$. We refer the readers to the Propositions 1 and 2 in the supplement for detailed derivations. 

To summarize, the distance function $d(p_0, p, \theta)$ depends on $\theta$ that determines the base-curve $\beta(\theta).$ Therefore, to get $d^2_{ {\mathbb{B}}}(p_0, p)$, we can obtain $\frac{\partial}{\partial\theta} d^2(p_0, p, \theta),$ and rely on the gradient descent method to find the optimal $\theta$. Fortunately, on $\S^2$, we also can obtain an explicit solution for the gradient, which is presented in section 4 in the supplement. We derive  Algorithm \ref{alg:parageo} to find  $\theta(p_0, p) = \argmin_{\theta \in \left[-\frac{\pi}{2}, \frac{\pi}{2}\right]} d^2(p_0, p, \theta).$

\begin{algorithm}[ht]\label{alg:parageo}
\caption{The method for finding geodesical $\theta(p_0, p)$} 
\KwInput{two smooth functions $p_0$ and $p$, $\theta_0 \in \left[-\frac{\pi}{2}, \frac{\pi}{2}\right],$ the gradient step $\lambda \in \mathbb{R},$ and the degree of accuracy $\varepsilon \in \mathbb{R}_+$}
\KwOutput{a point $\theta \in \left[-\frac{\pi}{2}, \frac{\pi}{2}\right]$}
\Begin{
Set $\theta = \theta_0$. \\
 \Repeat { \rm $\left|\frac{\partial}{\partial\theta}d^2(p_0, p, \theta)\right| < \varepsilon$ }
 {
  Compute $\bar{\theta} = \theta - \lambda \frac{\partial}{\partial\theta} d^2(p_0, p, \theta).$ \\
  \eIf{$\bar{\theta} \in \left[-\frac{\pi}{2}, \frac{\pi}{2}\right]$ \ {\rm and} \ $d^2(p_0, p, \bar{\theta}) < d^2(p_0, p, \theta)$}{ $\theta = \theta - \lambda \frac{\partial}{\partial\theta} d^2(p_0, p, \theta)$\;}{ $\lambda = \frac{\lambda}{2}$\;}
 }
}
\end{algorithm}

Finally, after finding the optimal $\theta(p_0, p)$, we build the geodesic path as
\begin{align*}
 \beta(s) =  \beta(s; \theta(p_0, p)) \qquad {\rm and} \qquad q(s, t) = & q_{0,\beta(s)}^{||}(t) + s \big(q_{\beta(1 - s)}^{||}(t) - q_{0, \beta(s)}^{||}(t)\big).
\end{align*} 
Figure \ref{fig:geodesics1} shows one example of $p_0$ and $p$ on $\S^2$, $d^2(p_0, p, \theta)$ and its derivative $\frac{\partial}{\partial\theta} d^2(p_0, p, \theta)$, the base-curve $\beta$ and the final geodesic path between $p_0$ and $p$. 

\begin{figure}
\begin{center}
\begin{tabular}{c|ccc}
 \includegraphics[width=0.35\textwidth]{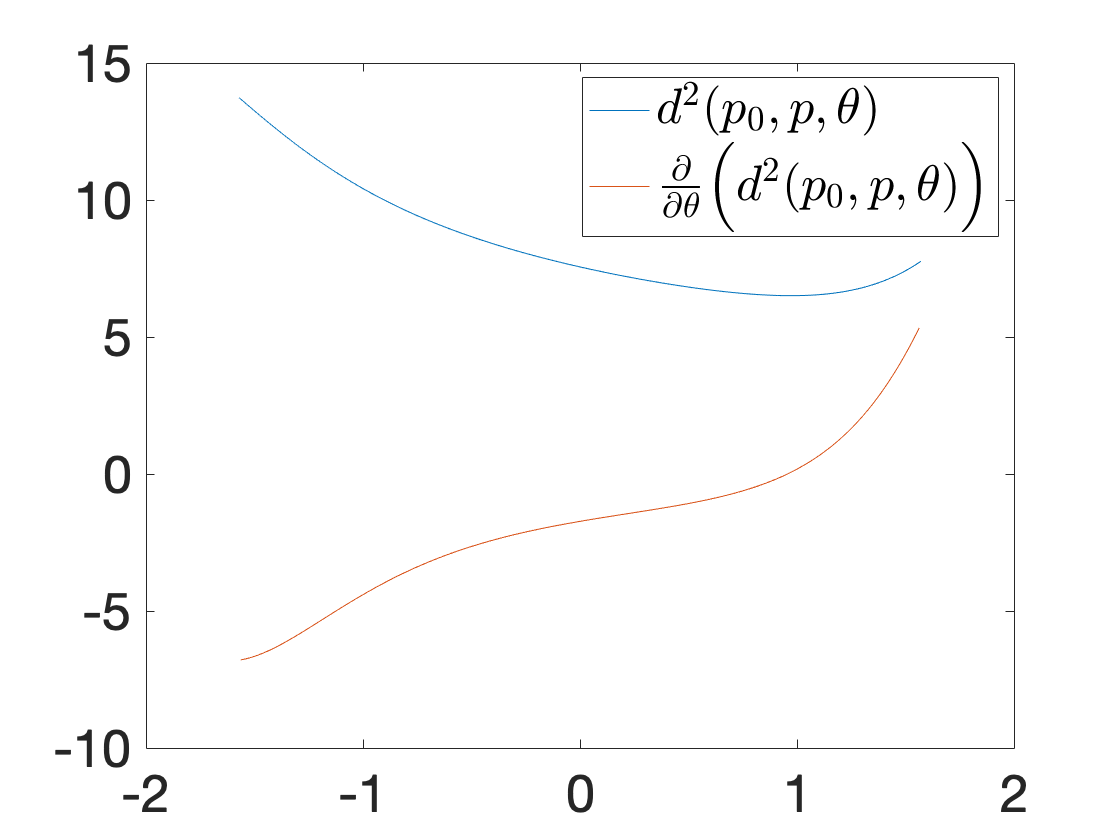} &
 \includegraphics[width=0.25\textwidth]{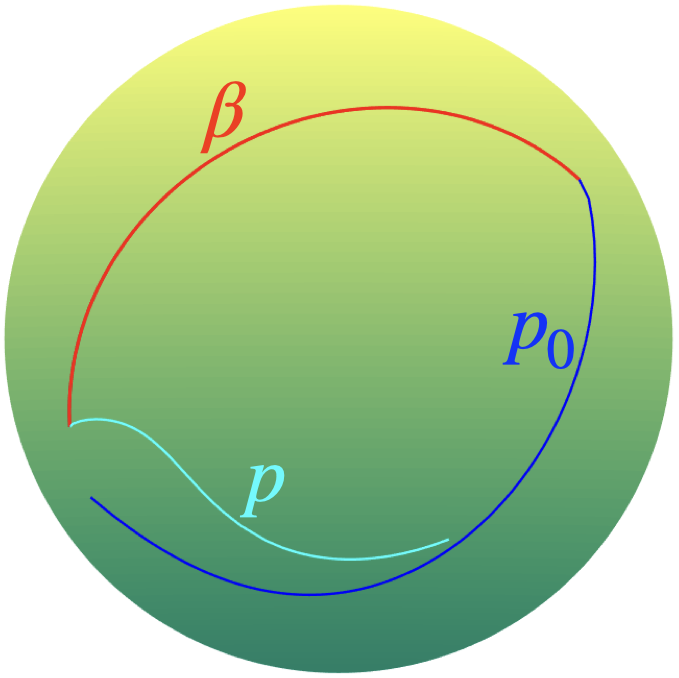} & &
 \includegraphics[width=0.25\textwidth]{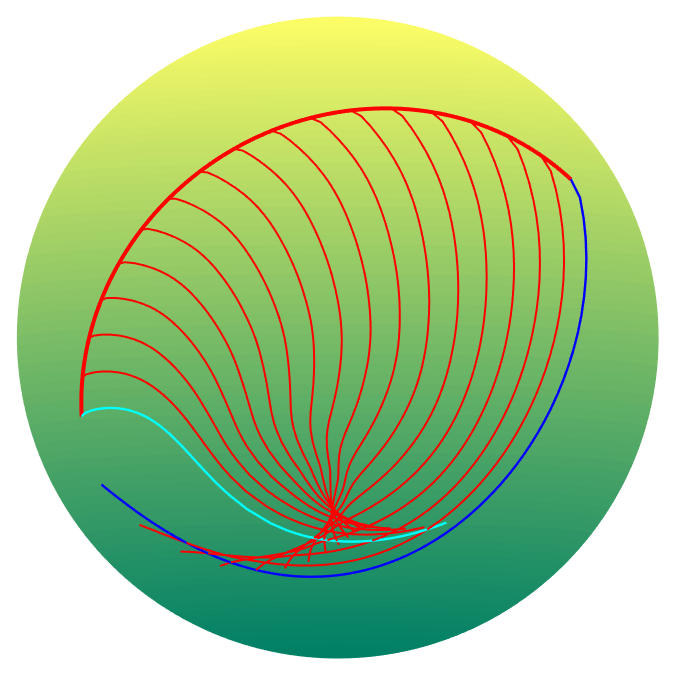} \\
 (a) &  & (b) &  
\end{tabular}
\end{center}
\caption{Panel (a) shows $d^2(p_0, p, \theta)$ with its derivative $\frac{\partial}{\partial \theta} d^2(p_0, p, \theta)$. Panel (b) left side shows the base-curve $\beta$ which determines the geodesic on $ {\mathbb{B}}$ between $p_0$ and $p$, and the right side shows the geodesic between $p_0$ and $p$.}
\label{fig:geodesics1}
\end{figure}

\subsection{The exponential and inverse exponential maps on $ {\mathbb{B}}$}
\label{subsec:InverseExponential}

For statistical analysis of a set of observations on $ {\mathbb{B}}$, two other important tools are the exponential and inverse exponential maps. Let $(u, w) \in T_{(x, q)} {\mathbb{B}}$, the exponential map $\exp_{(x, q)}(s(u, w))$ is a mapping from $T_{(x, q)} {\mathbb{B}}$ to $\C$, and gives a geodesic  $(\beta(s), q(s, t))$ on $\C$, where $s \in [0, 1]$.   Given $p_0$ and $p$ on $\S^2$, and their TSRVC representations $(x_0, q_0)$ and $(x, q)$, the inverse exponential map $ \exp^{-1}_{(x_0, q_0)}(x, q)$ gives a tangent vector $(u, w)$ on $T_{(x_0, q_0)} {\mathbb{B}}$, such that $\exp_{(x_0, q_0)}((u, w)) = (x, q)$. Since we consider $p$ and $(x, q)$ as the same, for notation simplicity $\exp_p(s(u, w))$ is also used to denote the exponential map from $T_p {\mathbb{B}}$ to $ {\mathbb{B}}$ and $\exp_{p_0}^{-1}p$ to denote for the inverse exponential map from $ {\mathbb{B}}$ to $T_{p_0} {\mathbb{B}}$.

On $\C$, once we find the optimal geodesic parameterized by $\theta \in \left[-\frac{\pi}{2}, \frac{\pi}{2}\right]$ for $p_0$ and $p$ by Algorithm \ref{alg:parageo}, it is straightforward to get the inverse exponential map
\begin{align} \label{eqn:InverseExp}
 \exp_{p_0}^{-1}p =  & \big(\dot{\beta}(0; \theta), q^{||}_{\beta} - q_0\big),
\end{align}
where $\beta$ is determined by $\theta$. Figure \ref{fig:InverseExponentialMap} shows one example of $p_0$ and $p$ on $\S^2$ and the inverse exponential map $\exp_{p_0}^{-1}p$. 
\begin{figure}
\begin{center}
\begin{tabular}{ccccc}
 \includegraphics[width=0.2\textwidth]{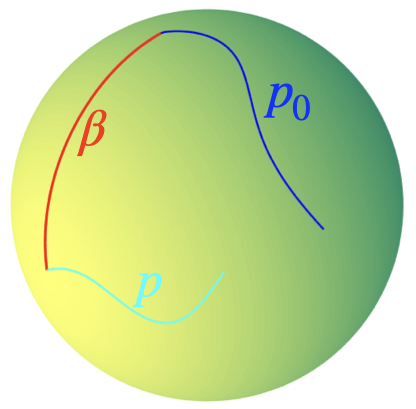} & 
 \raisebox{6.5\height}{\scalebox{2}{$\Rightarrow$}} &
 \includegraphics[width=0.23\textwidth]{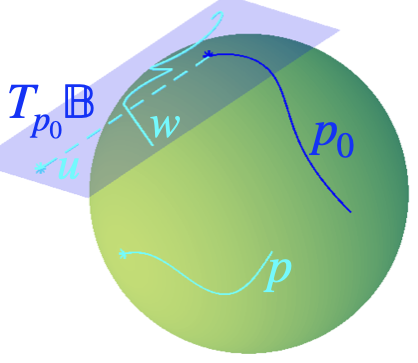} & 
 \raisebox{6.5\height}{\scalebox{2}{$\simeq$}} &
 \includegraphics[width=0.35\textwidth]{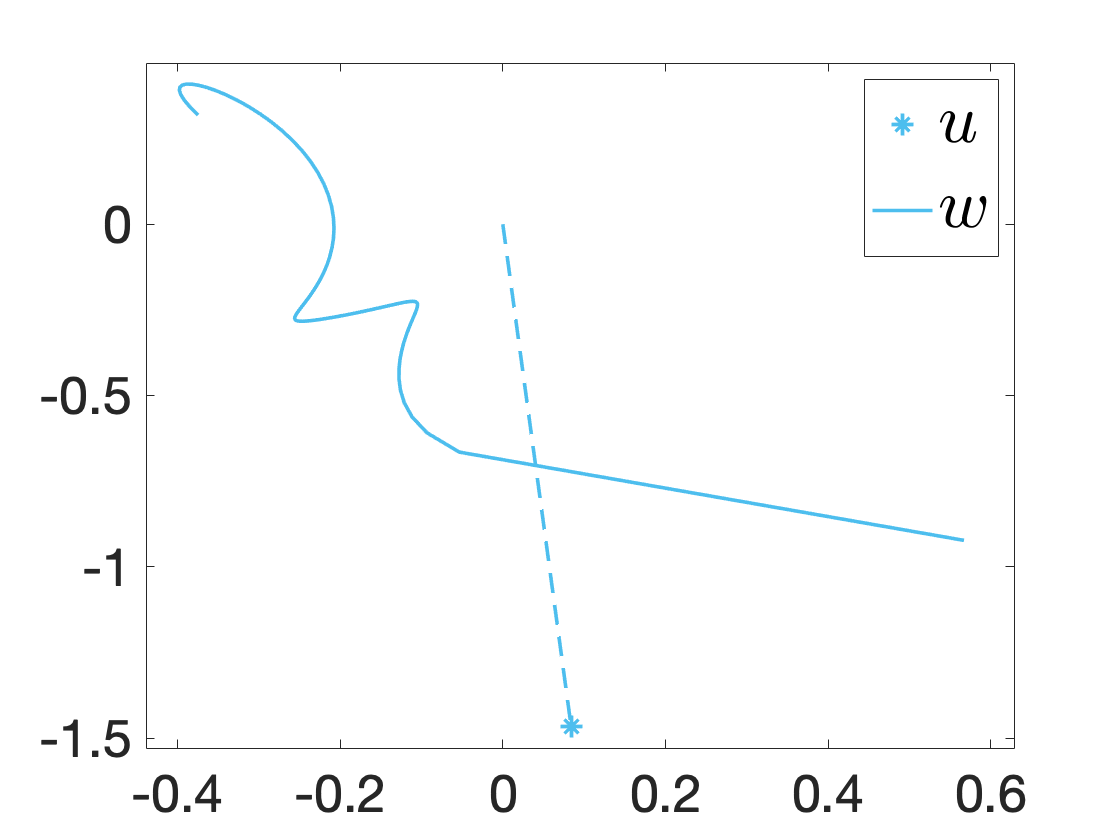}\\
 (a) &  &  & (b) &  
\end{tabular}
\end{center}
\caption{Panel (a) shows the base-curve $\beta$ which determines the geodesics on $ {\mathbb{B}}$ between two functions $p_0$ and $p,$.  Panel (b) shows $(u, w) = \exp_{p_0}^{-1}p$ on the tangent space $T_{x_0}\mathbb{S}^2.$}
\label{fig:InverseExponentialMap}
\end{figure}

The exponential mapping $\exp_{p_0}((u, w))$ is harder to compute. Let $(\beta(s), q(s, \cdot)) = \exp_{p_0}(s(u, w))$ for $s \in [0,1]$. We need to construct a $(\beta(s), q(s, \cdot))$ that satisfies the two properties in Proposition \ref{prop1} and $\dot{\beta}(0) = u$. We describe our procedure to construct such $(\beta(s), q(s, \cdot))$ as follows.

For a given $u\in T_{x_0}\mathbb{S}^{2}$, we can construct a set of $p$-optimal curves  determined by a set of planes $\pi_{\beta}$ that intersect with $T_{x_0}\mathbb{S}^2$ by the line $x_0 + t u$ for $t\in\mathbb{R}$, where $\pi_{\beta}$ is determined by a normal vector
\begin{align}\label{eqn:normal1}
 n(\vartheta; x_0, u) = x_0\cos\vartheta + \frac{x_0 \times u}{|x_0 \times u|} \sin\vartheta \qquad {\rm for} \quad \vartheta \in \left(\arcsin\left(\frac{|u|}{\pi}\right), \pi - \arcsin\left(\frac{|u|}{\pi}\right)\right).
\end{align}
One can find out that
\begin{align}\label{eqn:exp1}
\begin{split}
 \beta(s; \vartheta) & = x_0 \cos(s\phi) + (n\times x_0) \sin(s\phi) + n\langle n, x_0\rangle \big(1 - \cos(s\phi)\big) \qquad \text{with} \quad \phi(\vartheta) = \frac{|u|}{\sin\vartheta}, \\
  q(s, t) &= \big(s w(t) + q_0(t)\big)_{\beta(s)}^{||}.
 \end{split}
\end{align}
It is easy to see that the above path \eqref{eqn:exp1} satisfies the two properties in Proposition \ref{prop1}. By setting $s = 1$, we get the end point $p = \exp_{p_0}\big((u, w)\big)$ with
\begin{align*} 
p = (x, q) \qquad \text{where} \quad
\begin{split}
 x & = x_0 \cos\phi + (n\times x_0) \sin\phi + n\langle n, x_0\rangle (1 - \cos\phi),\\
 q &= (w + q_0)_{\beta}^{||}.
\end{split}
\end{align*}

Figure \ref{fig:ExponentialMap1} panel (a) illustrates a set of planes $\pi_\beta$ that contain the vector $u\in T_x\mathbb{S}^2$ (parameterized by $\vartheta$ with normal vectors in \eqref{eqn:normal1}), and panel (b) shows $\beta(s; \vartheta)$ and  $p$ which are also parameterized by $\vartheta$.

\begin{figure}
\begin{center}
\begin{tabular}{ccc}
 \includegraphics[width=0.265\textwidth]{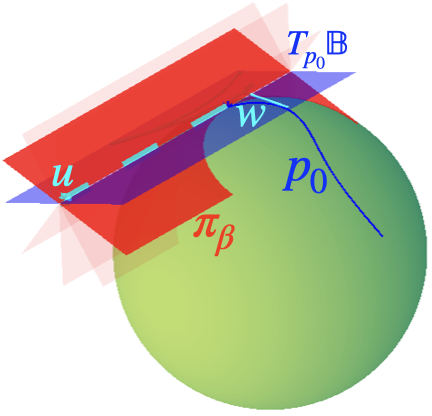} & \raisebox{4.5\height}{\larger[7]$\Rightarrow$} &
 \includegraphics[width=0.29\textwidth]{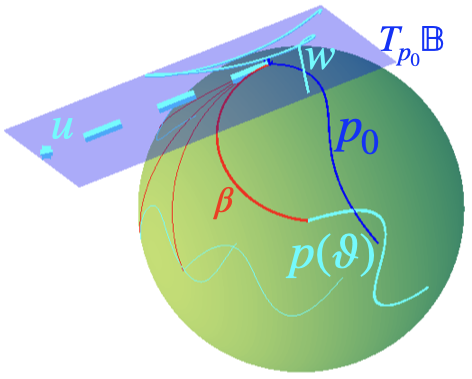}\\
 (a) & & (b) \\
\end{tabular}
\end{center}
\caption{Illustration on how to compute $\exp_{p_0}((u, w))$. Panel (a) shows a set of planes $\pi_\beta$ containing the vector $u\in T_x\mathbb{S}^2$, and panel (b) shows $\beta(s; \vartheta)$ (the interactions between $\pi_\beta$ and $\S^2$) and  $p$ which are also parameterized by $\vartheta$.}
\label{fig:ExponentialMap1}
\end{figure}

\begin{figure}
\begin{center}
\begin{tabular}{ccccc}
 \includegraphics[width=0.35\textwidth]{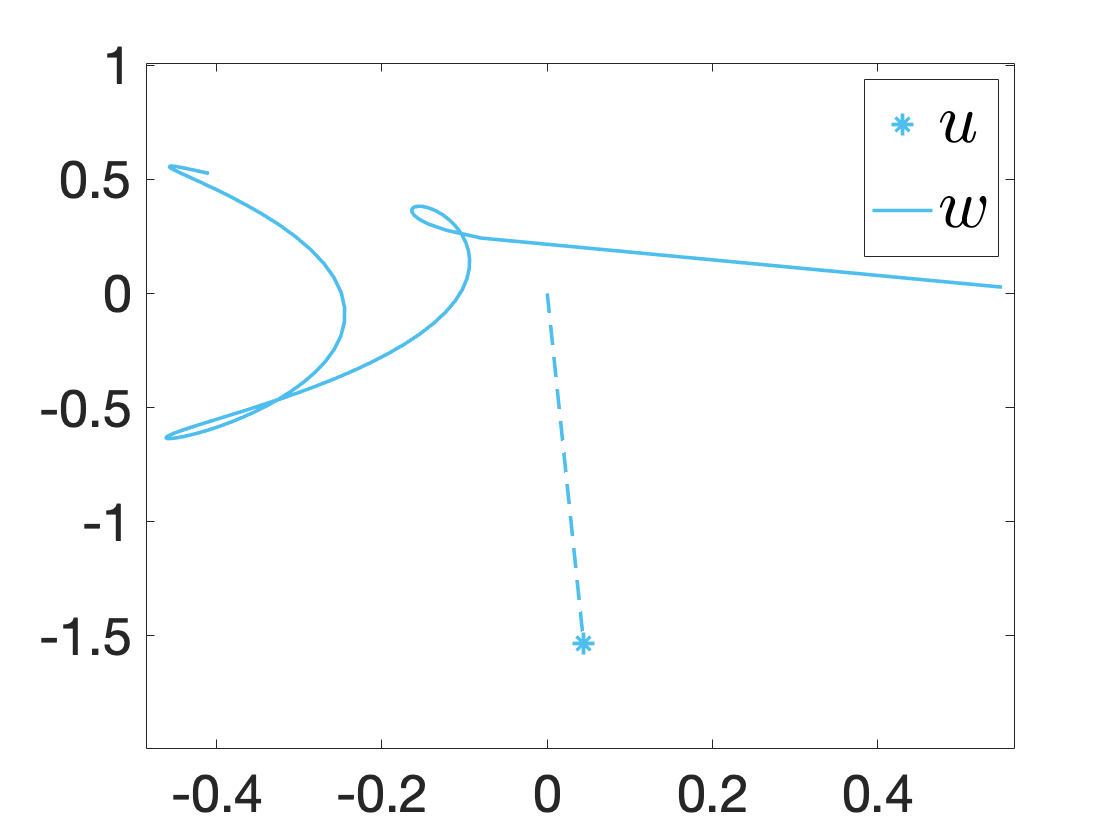}& 
 \raisebox{6.5\height}{\larger[4]$\simeq$}&
 \includegraphics[width=0.27\textwidth]{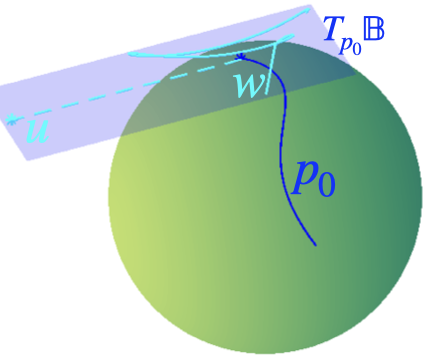}& 
 \raisebox{6.5\height}{\larger[4]$\Rightarrow$}&
 \includegraphics[width=0.2\textwidth]{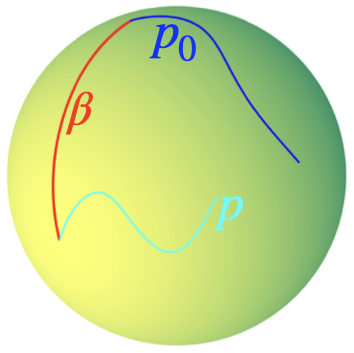}\\
  & (a) &  & & (b)
\end{tabular}
\end{center}
\caption{Panel (a) illustrates an element $(u, w)\in T_{p_0} {\mathbb{B}},$ where $u$ is a vector and  $w$ is a smooth function.  Panel (b) shows $p = \exp_{p_0}((u, w))$ and  the base-curve $\beta$.} 
\label{fig:ExponentialMap2}
\end{figure}

To this end the challenge is to find an appropriate $\vartheta,$ such that the angle $\theta(\vartheta) = \arcsin\left(\sqrt{\frac{2}{1 + \langle x_0, x\rangle}}\cos\vartheta\right)$ is optimal for $p_0$ and $p$, i.e., the base-curve $\beta$ gives the smallest distance between $p_0$ and $p$. We then derive a gradient descent method to find the optimal $\vartheta \in \left(\arcsin\left(\frac{|u|}{\pi}\right), \pi - \arcsin\left(\frac{|u|}{\pi}\right)\right)$ { in Algorithm 1 in the supplement}.

\subsection{Geodesic calculation between the amplitudes of functions on $\C/\Gamma$}
\label{subsec:ampldist}

Now we consider the amplitude of smooth functions on $\S^2$, which is defined as elements in the quotient space $ {\mathbb{B}}/\tilde{\Gamma}$, e.g., $[p_0], [p] \in  {\mathbb{B}}/\tilde{\Gamma}$ . Formula \eqref{eqn:unparametic} is used to calculate the geodesic distance between $[p_0]$ and $[p]$. In this case, we have a bivariate optimization problem: in addition to $\theta$, we need to find a time warping function $\gamma$ to minimize the distance between two orbits $[p_0]$ and $[p]$. We adopt a coordinate descent approach to iteratively solve the single variable optimization problems: (i) given $\theta$, we solve $\gamma$ using the Dynamic Programming algorithm in \cite{Bertsekas1995DynamicPA}: $ \gamma = \argmin_{\gamma\in\Gamma} d^2(p_0\circ\gamma, p, \theta)$; and (ii) given $\gamma$, we solve $\theta$ based on the Algorithm \ref{alg:parageo}.

We use following Algorithm \ref{alg:ampldist} to find the base-curve $\beta^*$ on $ {\mathbb{B}}/\Gamma$ and the optimal time warping $\gamma^*$ to align $p_0$ to $p$. 

\begin{algorithm}[ht]\label{alg:ampldist}
 \caption{The method for finding geodesical $\theta^{*}([p_0], [p])$ and $\gamma^*$ in $ {\mathbb{B}}/\Gamma$}
\KwInput{ 
 two smooth functions $p_0, p$, 
 $\theta_0\in\left[-\frac{\pi}{2}, \frac{\pi}{2}\right],$
 the gradient step $\lambda\in\mathbb{R},$ 
 and the degree of accuracy $\varepsilon\in\mathbb{R}$}
\KwOutput{a point $\theta^{*}\in\left[-\frac{\pi}{2}, \frac{\pi}{2}\right]$ and the re-parameterization $\gamma^{*}\in\Gamma$}
\Begin{
 Set $\theta^{*} = \theta_0$. Find $\gamma^{*} = \arg\min_{\gamma\in\Gamma} d^2(p_0, p, \theta^{*}).$ Then get $\tilde{p}_0 = p_0\circ\gamma^{*}.$\\

 \Repeat { \rm $\left|\frac{\partial}{\partial\theta} d^2(\tilde{p}_0, p, \theta^{*})\right| < \varepsilon$ }
 {
  Compute $\theta^{*}$ based on Algorithm \ref{alg:parageo} with input $(\tilde{p}_0, p, \theta^{*}, \lambda, \varepsilon)$.
	
  Find $\gamma^{*} = \argmin_{\gamma\in\Gamma} d^2(p_0, p, \theta^{*}).$ Update $\tilde{p}_0 = p_0\circ\gamma^{*}.$
 }
}
\end{algorithm}

\begin{figure}
\centering
\begin{tabular}{c|c c c}
 \includegraphics[width=0.25\textwidth]{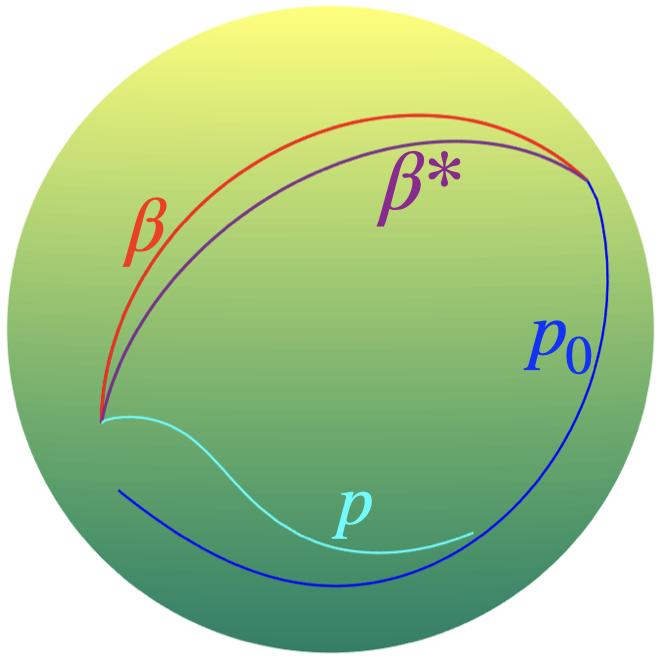} & 
 \includegraphics[width=0.25\textwidth]{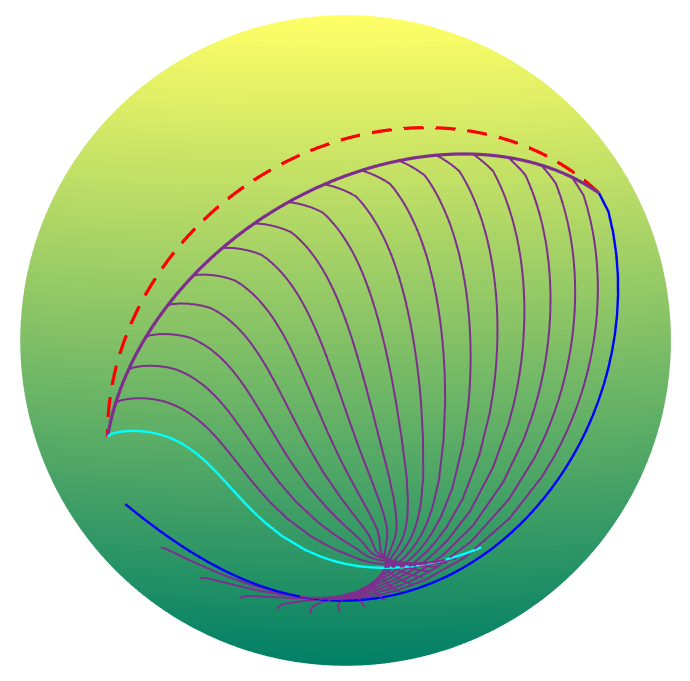} & &
 \includegraphics[width=0.35\textwidth]{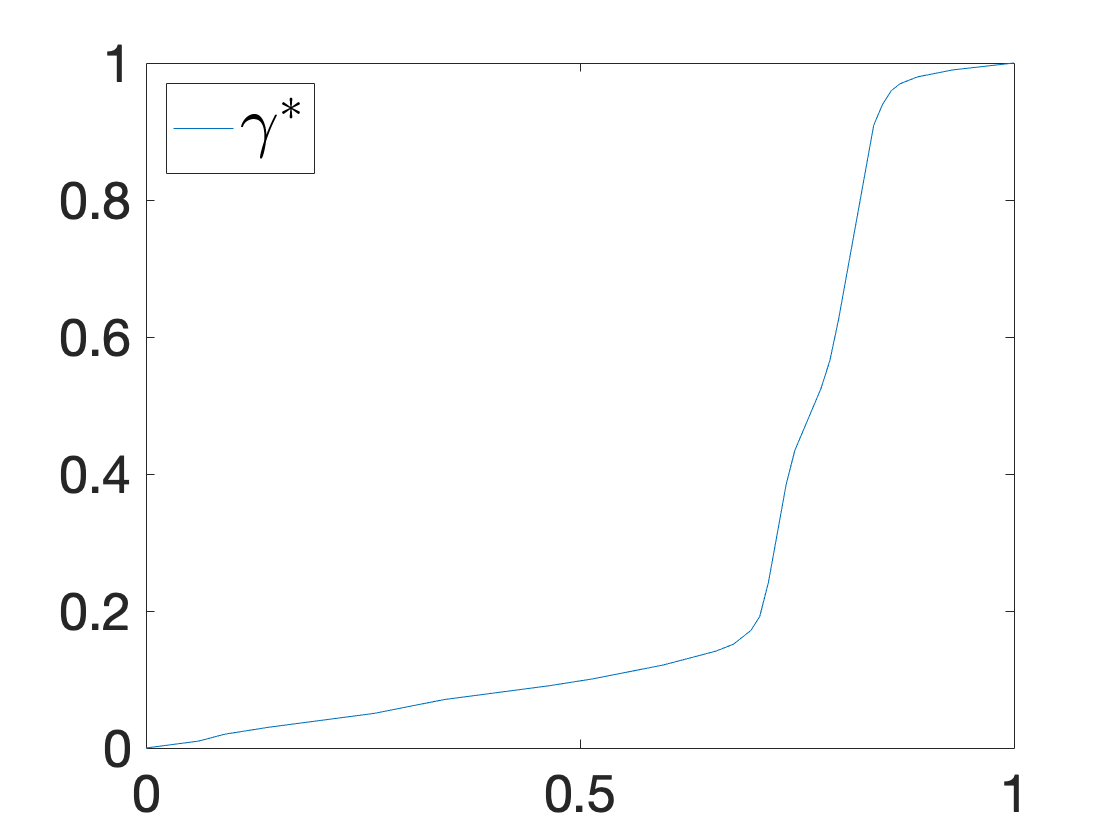}\\
 \includegraphics[width=0.25\textwidth]{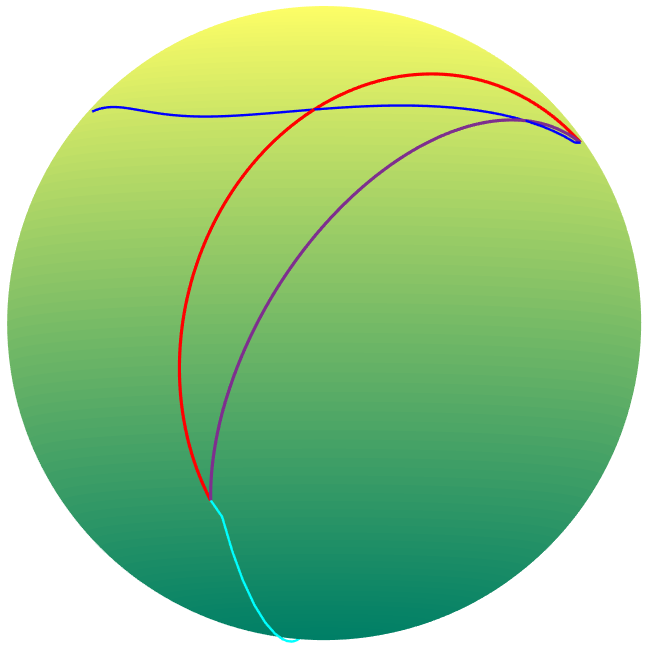} & 
 \includegraphics[width=0.25\textwidth]{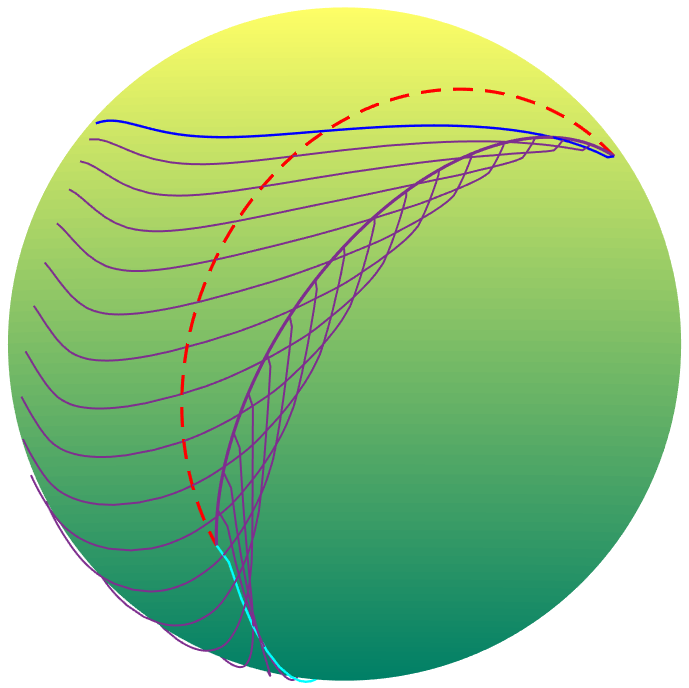} & &
 \includegraphics[width=0.35\textwidth]{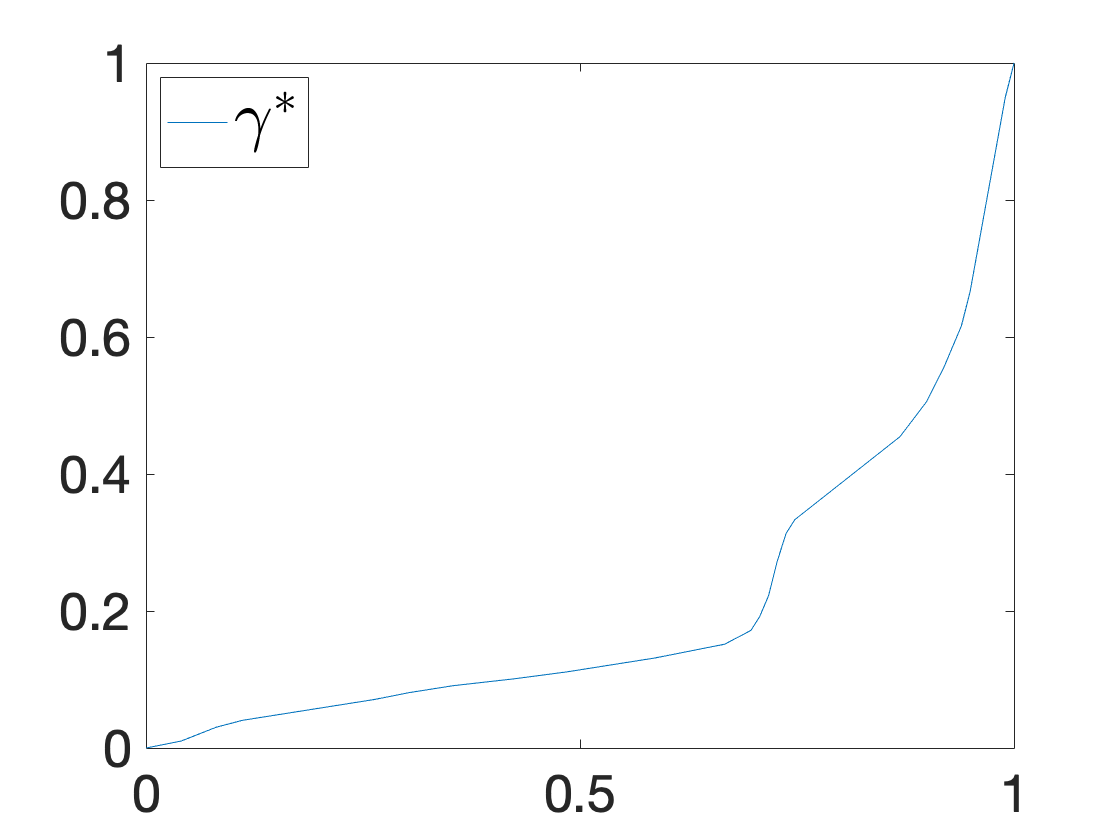}\\
 (a) &  & (b) &  
\end{tabular}
\caption{Comparison of base-curves $\beta$ (for geodesics on $\C$) and $\beta^*$ (for geodesics on $\C/\Gamma$).  Panel (a) shows two examples of $p$, $p_0$ and the geodesic base-curves $\beta$ and $\beta^{*}$. Panel (b) shows the geodesics between amplitudes of functions on  $ {\mathbb{B}}/\Gamma$ with the corresponding time warping functions $\gamma^{*}\in\Gamma$ on the right. }
\label{fig:geodesics2}
\end{figure}

It is important to notice that the geodesic base-curve $\beta^{*}$ in the space of function amplitudes $ {\mathbb{B}}/\Gamma$ can be different from the geodesic base-curve $\beta$ on $ {\mathbb{B}}$. Figure \ref{fig:geodesics2} shows two examples of $p_0$ and $p$ on $\S^2$, the base-curve $\beta$ in $ {\mathbb{B}}$, the base-curve $\beta^*$ in $ {\mathbb{B}}/\Gamma$, the geodesic path between $p_0$ and $p$ in $ {\mathbb{B}}/\Gamma$, and the optimal $\gamma^*$ for aligning $p_0$ to $p$. 

\section{The Sample Fr\'{e}chet Mean on $ {\mathbb{B}}$ and $ {\mathbb{B}}/\tilde{\Gamma}$}\label{sec:Frechet}

With a clear understanding of the geometry of $ {\mathbb{B}}$ and $ {\mathbb{B}}/\tilde{\Gamma}$, we now are ready to study the Fr\'{e}chet means of smooth functions on $\S^2$ (elements on $ {\mathbb{B}}$) and their amplitudes (elements on $ {\mathbb{B}}/\tilde{\Gamma}$),  {and explore their use in statistical modeling of random functions on $\S^2$.} Let us denote a stochastic Riemannian process (e.g., Gaussian process) on $\S^2$ as $\{p(t)\}_{t\in[0, 1]}$, and with associated TSRVC representation $(x, q)$ given by 
\begin{equation*}
x = p(0), \qquad \quad q(t) = \left(\frac{\dot{{p}}(t)}{\sqrt{|\dot{{p}}(t)|}}\right)_{{p}(t) \rightarrow {p}(0)}^{||}.
\end{equation*}

\subsection{The sample Fr\'{e}chet mean on $ {\mathbb{B}}$}
\label{subsec:frechetC}

To estimate the mean of the stochastic Riemannian process, we consider the Fr\'{e}chet function $F(p_{\mu}) = \mathbb{E} \left( d^2_{ {\mathbb{B}}}(p, p_{\mu}) \right)$, and its finite-version
\begin{equation*}
F_n(p_{\mu}; {p}_1, \dots, {p}_n) = \frac{1}{n} \sum_{i=1}^n d_{ {\mathbb{B}}}^2({p}_i, p_{\mu}).
\end{equation*}
Then the estimator of the  mean is given by the sample Fr\'{e}chet mean
\begin{equation}\label{eqn:samplefm}
p_{\mu} = \argmin_{p\in\mathcal{F}} F_n(p; {p}_1, \dots, {p}_n) .
\end{equation}
 {The theoretical properties of the  Fr\'echet mean in a general manifold have been extensively studied in \cite{bhattacharya2005large,afsari2011riemannian}. Here we focus on computational tools and derive computational algorithms to solve (\ref{eqn:samplefm}).}  {
Namely we are using a gradient descent algorithm to find the sample Fr\'{e}chet mean on $\mathbb{B}$ and $\mathbb{B}/\Gamma$ (for more general manifolds, we refer the readers to \cite{pennec1998computing,le_2001,Groisser2004NewtonsMZ}).}  We rewrite the Fr\'{e}chet function on $ {\mathbb{B}}$ as a function of $x \in \S^2$ and base-curves $\beta_i$:
\begin{align*}
 F_n(x, \beta_1, \dots, \beta_n) =& \frac{1}{n}\sum_{i=1}^n d_{\beta_i}^2\Big(\big(x_i, q_i(t)\big), \big(x, q(t, x, \beta_1, \dots, \beta_n)\big)\Big) \\
 =& \frac{1}{n} \sum_{i=1}^n \Big(\ell_{\beta_i}^2 + \int_0^1 \big|q - q_{i, \beta_i}^{||}\big|^2\d t\Big),
\end{align*}
where we have    
\begin{equation}\label{eqn:TSRVC}
 q = q(t, x, \beta_1, \dots, \beta_n) = \frac{1}{n}\sum_{i=1}^n q_{i, \beta_i}^{||}(t),
\end{equation}
with ${q}_{i, \beta_i}^{||}$ being the  parallel transportations of  ${q}_i(t)$ along $\beta_i,$ ${q}_{i, \beta_i}^{||}(t) = \big(q_i(t)\big)_{\beta_i(0) \rightarrow \beta_i(1)}^{||}$. We then optimize $F_n$ iteratively with respect to $x$ and $\beta_i$'s to obtain the sample Fr\'{e}chet mean on $ {\mathbb{B}}$:
\begin{align*}
 \big(\beta_1^x, \dots, \beta_n^x\big) &= \argmin_{\beta_1, \dots, \beta_n} F_n(x, \beta_1, \dots, \beta_n),\\
 x &= \argmin_{x\in\S^2} F_n\Big(x, q\big(t, x, \beta_1^x, \dots, \beta_n^x\big)\Big).
\end{align*}

Therefore the problem of finding the sample Fr\'{e}chet mean reduces to the optimization problem on the product space $\mathbb{S}^2\times\left[-\frac{\pi}{2}, \frac{\pi}{2}\right]^n$. Since  $q$ is determined by the starting point $x\in\mathbb{S}^2$ and the base-curves $\beta_1, \dots, \beta_n,$ which are defined by the angles $\theta_1, \dots, \theta_n\in\left[-\frac{\pi}{2}, \frac{\pi}{2}\right]$, for a given $x\in\mathbb{S}^2,$ we use the gradient descent method to find the optimal 
\begin{equation}\label{eqn:beta}
 \theta = (\theta_1, \dots, \theta_n) = \argmin_{(\theta_1, \dots, \theta_n)\in \left[-\frac{\pi}{2}, \frac{\pi}{2}\right]^n} F_n\big(x, \beta_1(\theta_1), \dots, \beta_n(\theta_n)\big) \quad,
\end{equation}
where we have to compute
\begin{equation*}
 \nabla_{\theta}F_{n}\big(x, \beta_1(\theta_1), \dots, \beta_n(\theta_n)\big) = \big(\hat{F}_{1n\theta}({p}_1, p, \theta_1), \dots, \hat{F}_{nn\theta}({p}_n, p, \theta_n)\big), \end{equation*}
where $\hat{F}_{in\theta}({p}_i, p, \theta_i) = \frac{\partial}{\partial\theta_i} d^2(p_i, p, \theta)$. We propose the following Algorithm \ref{alg:opttheta} to find the optimal  
$\theta = \big(\theta_1, \dots, \theta_n\big).$

\begin{algorithm}[ht]\label{alg:opttheta}
\caption{The method for finding optimal $\theta=(\theta_1, \dots, \theta_n)$ for \eqref{eqn:beta}}
\KwInput{observed functions $p_1,\cdots,p_n$, the point $x\in\mathbb{S}^2,$ the initial value $\theta_0 \in \left[-\frac{\pi}{2}, \frac{\pi}{2}\right]^n,$ the gradient step $\lambda_1\in\mathbb{R},$ and the degree of accuracy $\varepsilon_1\in\mathbb{R}$}
\KwOutput{a vector $\theta = (\theta_1, \dots,  \theta_n) \in \left[-\frac{\pi}{2}, \frac{\pi}{2}\right]^n$}
\Begin{
 Set $\theta = \theta_0.$ 

 \Repeat { \rm $\big\|\nabla_{\theta}F_{n}(x, \theta_1, \dots, \theta_n)\big\| < \varepsilon_1$ }
 {
  Compute $\bar{\theta} = \theta - \lambda_1 \nabla_{\theta} F_n\big(x, \beta_1(\theta_1), \dots, \beta_n(\theta_n)\big).$
	
  \eIf{$\bar{\theta}\in\left[-\frac{\pi}{2}, \frac{\pi}{2}\right]^n$ \quad {\rm and} \quad $F_n\big(x, \beta_1(\bar{\theta}_1), \dots, \beta_n(\bar{\theta}_n)\big) < F_n\big(x, \beta_1(\theta_1), \dots, \beta_n(\theta_n)\big)$}{ $\theta = \bar{\theta}$\;}{ $\lambda_1 = \frac{\lambda_1}{2}$\;}
 }
}
\end{algorithm}

Similarly, for given $\theta = (\theta_1, \cdots, \theta_n)$, we perform the gradient descent algorithm to find the optimal $x$ according to
$\argmin_{x\in\mathbb{S}^2} F_{n}\big(x, \beta_1(\theta_1), \dots, \beta_n(\theta_n)\big),$ where we have to compute
\begin{equation*}
 \nabla_{x} F_{n}\big(x, \beta_1(\theta_1), \dots, \beta_n(\theta_n)\big) = \frac{1}{n} \sum_{i=1}^n \nabla_x d^2(p_i, p, \theta_i), 
\end{equation*}
The explicit expression for $\nabla_x d^2(p_i, p, \theta_i)$ can be found in the supplement. 
Procedure \ref{alg:optx} describes the full algorithm to find the sample Fr\'{e}chet mean on $ {\mathbb{B}}$.

\begin{algorithm}[ht]\label{alg:optx}
\caption{Sample Fr\'{e}chet mean calculation on $ {\mathbb{B}}$}
\KwInput{ the observed functions $p_1, \dots, p_n$, the gradient steps $\lambda_1, \lambda_2 \in\mathbb{R},$ and the degrees of accuracy $\varepsilon_1, \varepsilon_2 \in\mathbb{R}$}
\KwOutput{ $x_\mu \in \mathbb{S}^2$, $q_\mu \in  {\mathbb{B}}_{x_\mu}$ and the vector $\theta \in \left[-\frac{\pi}{2}, \frac{\pi}{2}\right]^n.$}
\Begin{
 Set $x = \frac{x_1 + \dots + x_n}{|x_1 + \dots + x_n|},$ where $x_i = p_i(0)$.
	
 \Repeat { \rm $\Big|\nabla_{x} F_{n}\big(x, \beta_1(\theta_1), \dots, \beta_n(\theta_n)\big)\Big| < \varepsilon_2$ }{
 Compute $\theta$ using Algorithm \ref{alg:opttheta} with input $({p}_1, \dots, {p}_n, x, \theta, \lambda_1, \varepsilon_1).$ \\

 Set $x = \exp_x \Big(-\lambda_2 \nabla_x F_{n}\big(x, \beta_1(\theta_1), \dots, \beta_n(\theta_n)\big)\Big),$ where $\exp_x(\cdot)$ represents the exponential map on $\S^2.$}
}
\end{algorithm}

Remark: since each squared distance function $d_{\C}^2({p}_i, \cdot)$ is convex in some neighborhood $U_i \in \C$ of ${p}_i$  {(see \cite{zhang_sra_2016}, \cite{NIPS2017_Liu_Shang_Cheng_Cheng_Jiao}, \cite{bacak2014convex})},  the Fr\'{e}chet function should be convex at the intersection of the convex neighborhoods $U_i$, $U = \bigcap_{i=1}^n U_i$. So if we choose an appropriate initial point $p_\mu^0\in U,$ for any gradient descent algorithm, we should converge to the local optimal point $p_\mu$ by linear convergence rate. 

\subsection{The sample Fr\'echet mean on $ {\mathbb{B}}/\tilde{\Gamma}$}
\label{subsec:FrechetCGamma}

To calculate the mean of functions in $ {\mathbb{B}}/\tilde{\Gamma}$, we have to incorporate the alignment process. Given a set of smooth functions $p_1, \dots, p_n$, we analyze their amplitudes by choosing an element $\tilde{p}_i$ from each $[p_i]$ and working with $\tilde{p}_1, \dots, \tilde{p}_n$ using the geometry structure defined for $ {\mathbb{B}}$. Here  $\tilde{p}_1, \dots, \tilde{p}_n$ are well aligned among each other so that  $I = \argmin_{\gamma \in \tilde{\Gamma}}d_ {\mathbb{B}}(\tilde{p}_i(\gamma),\tilde{p}_j)$ for $i\neq j$ with $I$ being the identity function. Therefore, with an appropriate initialization, we take the following iterative procedure to calculate the Fr\'echet mean $\tilde{p}_{\mu}$ and its TSRVC $(\tilde{x}_\mu, \tilde{q}_\mu)$ on $ {\mathbb{B}}/\tilde{\Gamma}$:
\begin{enumerate}
    \item Update alignment between  $\tilde{p}_1, \dots, \tilde{p}_n$ and $\tilde{p}_\mu$, and update $\tilde{p}_\mu$;
    \item Update the base-curves $\beta_i^*$ between $x_1,...,x_n$ and $\tilde{x}_\mu$
    \item Update $\tilde{x}_\mu$
\end{enumerate}

For step 1, we adopt the Procrustes process from \cite{RamsayLi1998}. For given $\tilde{x}_\mu \in \mathbb{S}^2$, $\beta^*_i$, the $\tilde{q}_\mu$ is first computed as the average of the transported TSRVCs $\tilde{q}_\mu(t) = \frac{1}{n} \sum_{i=1}^n q_{i, \beta^*_i}^{||}(t).$ We then pair-wisely align every transported TSRVC $q_{i, \beta_i^*}^{||}(t)$ to $\tilde{q}_\mu$ with $\gamma_i = \argmin_{\gamma\in\Gamma} d^2_{\beta_i^*}(p_i\circ\gamma, \tilde{p}_\mu),$ and set $\tilde{p}_i = p_i(\gamma_i)$. Next, set $\tilde{q}_{\mu}(t) = \frac{1}{n} \sum_{i=1}^n \tilde{q}_{i, \beta^*_i}^{||}(t),$ where $\tilde{q}_{i, \beta^*_i}^{||}(t)$ is the transported TSRVC of $\tilde{p}_i$.  For steps 2 and 3, we use similar algorithms as the ones presented in section \ref{subsec:frechetC}. Therefore, we develop the following algorithms for computing $\tilde{p}_\mu$. 

\begin{algorithm}[ht]\label{alg:basecurvecg}
 \caption{The method for finding optimal base-curves $\beta_1^*, \dots, \beta_n^*$ on $ {\mathbb{B}}/\tilde{\Gamma}$}
\KwInput{ 
${p}_1, \dots, {p}_n$,
 the point $\tilde{x}_\mu \in\mathbb{S}^2,$ 
 the initial vector $\theta_0\in\left[-\frac{\pi}{2}, \frac{\pi}{2}\right]^n,$
 the gradient step $\lambda_1\in\mathbb{R},$ 
 and the degree of accuracy $\varepsilon_1\in\mathbb{R}$}
\KwOutput{a vector $\theta = (\theta_1, \dots, \theta_n)$ for $\beta_1^*, \dots, \beta^*_n$ and aligned paths $\tilde{p}_1, \dots, \tilde{p}_n$
}
\Begin{
 Set $\tilde{p}_i = p_i = (x_i, \title{q}_i)$  for $i=1, \dots, n$ and $\theta = \theta_0.$ 

 \Repeat { $\big|\nabla_{\theta} F_{n}(x, \beta_1(\theta_1), \dots, \beta_n(\theta_n))\big| < \varepsilon_1$ }
 {
  Align $q_1, \dots, q_n$ to $\tilde{q}_\mu = \frac{1}{n} \sum_{i=1}^n q_{i,\beta_i(\theta_i)}^{||}$ at $ {\mathbb{B}}_{x_\mu}$,
	
  Compute $\bar{\theta} = \theta - \lambda_1 \nabla_{\theta} F_n\big(x, \beta_1(\theta_1), \dots, \beta_n(\theta_n)\big).$
	
  \eIf{$\bar{\theta} \in \left[-\frac{\pi}{2}, \frac{\pi}{2}\right]^n$ 
   \quad {\rm and} $F_n\big(x, \beta_1(\bar{\theta}_1), \dots, \beta_n(\bar{\theta}_n)\big) < F_n\big(x, \beta_1(\theta_1), \dots, \beta_n(\theta_n)\big)$} {
   $\theta = \theta - \lambda_1 \nabla_{\theta} F_{n}\big(x, \beta_1(\theta_1), \dots, \beta_n(\theta_n)\big)$\;
   }{
   $\lambda_1 = \frac{\lambda_1}{2}$\;
   }
  }
 }
\end{algorithm}

Algorithm \ref{alg:basecurvecg} allows us to finish steps 1 and 2. We now integrate results from Algorithm \ref{alg:basecurvecg} with an algorithm to update $x_\mu$ to finalize our computational procedure (presented in  Algorithm \ref{alg:aloptx}) for calculating the sample mean amplitude.  

\begin{algorithm}[ht]\label{alg:aloptx}
\caption{Sample Fr\'echet mean calculation on $ {\mathbb{B}}/\tilde{\Gamma}$}
 \KwInput{ 
 ${p}_1, \dots, {p}_n\in {\mathbb{B}},$ two gradient steps $\lambda_1, \lambda_2 \in \mathbb{R},$ and degrees of accuracy $\varepsilon_1, \varepsilon_2 \in \mathbb{R}_{+}$}
 \KwOutput{Fr\'echet mean $\tilde{p}_\mu$ represented as $(\tilde{x}_{\mu}, \tilde{q}_\mu)$, aligned functions $\tilde{p}_1, \dots, \tilde{p}_n$ to $\title{p}_\mu$, and base-curves $\beta_i$ connecting $x_1, \dots, x_n$ with $\tilde{x}_\mu$ in the form of $\theta$.}
 \Begin{
  Set $x = \frac{{x}_1 + \dots + {x}_n}{|{x}_1 + \dots + {x}_n|},$ $\theta = 0,$ and $\tilde{p}_i = p_i$ for $i=1, \dots, n.$
	
  \Repeat { \rm $\Big|\nabla_x F_n\big(x, \beta_1(\theta_1), \dots, \beta_n(\theta_n)\big)\Big| < \varepsilon_2$ }
  {
   Compute $\theta(x), \tilde{p}_1, \dots, \tilde{p}_n $ from Algorithm \ref{alg:basecurvecg} with inputs $(p_1, \dots, p_n, x, \theta, \lambda_1, \varepsilon_1).$ \\ 
   	
   Set $x = \exp_x \Big(-\lambda_2 \nabla_x \hat{F}_{n}\big(x, \beta_1(\theta_1), \dots, \beta_n(\theta_n)\big)\Big).$
  }
}
\end{algorithm}

\section{ {Covariance on the Tangent Space of the Fr\'echet mean}}
\label{sec:StatModel}
 {For a set of observed functions, in addition to the mean, another interesting statistical quantity is the sample covariance. For example, a Gaussian process (GP) can be uniquely decided by its mean and covariance functions.} 
For the Riemannian stochastic process $p = (x, q) \in {\mathbb{B}}$ we have the Fr\'echet mean
\begin{align*}
 p^{*} = (x^{*}, q^{*}) = \argmin_{\hat{p}\in {\mathbb{B}}} \mathbb{E} d_{ {\mathbb{B}}}^2(\hat{p}, p).
\end{align*} 

 {To compute the covariance function, we consider the tangent space at $p^{*} = (x^*, q^*)$ (denoted as $T_{p^*}\C$ for notation simplicity), which is a functional space $\mathbb{L}_2([0, 1], T_{x^{*}}\mathbb{S}^2 \times T_{x^{*}}\mathbb{S}^2)$ with imposed constraints that first functions are constants.} The main issue here is that there exists statistical framework for vectors and vector-functions but not for the direct product of a vector space and vector-function space. The difficulty is to determine the  {covariance} function $\Sigma.$ We consider our space of interest as the 
principal in the vector-function space and therefore define the  {covariance} function at the vector-function space. We have the following  {covariance} function on $\mathbb{L}_2([0, 1], T_{x^{*}}\mathbb{S}^2\times T_{x^{*}}\mathbb{S}^2)$
\begin{align*}
 \Sigma = \frac{1}{4} \mathbb{E}\nabla d_{ {\mathbb{B}}}^2(p, p^{*}) \otimes \nabla d_{ {\mathbb{B}}}^2(p, p^{*}) = \mathbb{E} \exp_{p^{*}}^{-1}p \otimes \exp_{p^{*}}^{-1}p,
\end{align*}
that maps $(\bar{u}, \bar{w})\in \mathbb{L}_2([0, 1], T_{x^{*}} \mathbb{S}^2\times T_{x^{*}}\mathbb{S}^2)$ with respect to an orthonormal basis for $T_{x^{*}}\mathbb{S}^2$ as follow
\begin{align*}
 \big(\Sigma (\bar{u}, \bar{w})\big)(s) = \big(\hat{u}, \hat{w}(s)\big), \qquad \text{with} \quad
 \begin{split}
  \begin{pmatrix}
   \hat{u}^1 \\
   \hat{u}^2
  \end{pmatrix} =&  
  \begin{pmatrix}
   \kappa^1_1 & \kappa^1_2 \\ 
   \kappa^2_1 & \kappa^2_2
  \end{pmatrix} 
  \begin{pmatrix}
   \bar{u}^1 \\
   \bar{u}^2
  \end{pmatrix} + \int_0^1 \begin{pmatrix}
   k^1_1(t) & k^1_2(t) \\ 
   k^2_1(t) & k^2_2(t)
  \end{pmatrix} \begin{pmatrix}
   \bar{w}^1(t) \\
   \bar{w}^2(t)
  \end{pmatrix}\d t, \\
  \begin{pmatrix}
   \hat{w}^1(s) \\
   \hat{w}^2(s)
  \end{pmatrix} =& \begin{pmatrix}
   k^1_1(s) & k^1_2(s) \\ 
   k^2_1(s) & k^2_2(s)
  \end{pmatrix}\begin{pmatrix}
   \bar{u}^1 \\
   \bar{u}^2
  \end{pmatrix} + \int_0^1\begin{pmatrix}
   K^1_1(s, t) & K^1_2(s, t) \\ 
   K^2_1(s, t) & K^2_2(s, t)
  \end{pmatrix} \begin{pmatrix}
   \bar{w}^1(t) \\
   \bar{w}^2(t)
  \end{pmatrix}\d t,
 \end{split}
\end{align*}
for
\begin{align} \label{equ:cov}
  \kappa^{i}_j = \mathbb{E} u^iu^j, \quad k^{i}_j(t)=\mathbb{E} u^i w^j(t), \quad K^{i}_j(s, t)=\mathbb{E} w^i(s)w^j(t).
\end{align}

 { In finite sample case, let $p_\mu$ be the sample Fr\'echet mean, the sample covariance function is defined as 
\begin{align*}
\hat{\Sigma} = \frac{1}{n-1} \sum_{i=1}^n \exp_{p_\mu}^{-1}p_i \otimes \exp_{p_\mu}^{-1}p_i. 
\end{align*}
With a way to estimate the mean and covariance functions, we now can use  statistical models to quantify the uncertainty of observed data, e.g., Gaussian process (GP) or mixture of GPs. Note that although we have only discussed the covariance function computation on $\mathbb{B}$, extension to $\mathbb{B}/\Gamma$ is straightforward since in the process of computing the sample Fr\'echet mean on $\mathbb{B}/{\Gamma}$ (i.e., Algorithm \ref{alg:aloptx}), we align all functions.}

\section{Simulation Studies}

In this section, we demonstrate the computational tools we developed using simulated data and compare them with existing ones, e.g. those in \cite{ZhengwuZhang2018}. 

We first simulate curves on $\S^2$ according 
the following probabilistic model. A mean function $p^*$ is first simulated and then we generate a random covariate function $\Sigma$ on the tangent space $T_{p^{*}} {\mathbb{B}}$. Using a Gaussian process (GP) model 
\begin{align}\label{eqn:sim1}
 (u_i, w_i)\simeq GP\big(0, \Sigma\big), \quad {\rm with} \quad
 \Sigma = \begin{pmatrix}
  \kappa_{1}^1 & \kappa_{2}^1 & k_{1}^1(t) & k_{2}^1(t) \\
  \kappa_{1}^2 & \kappa_{2}^2 & k_{1}^2(t) & k_{2}^2(t) \\
  k_{1}^1(s) & k_{2}^1(s) & K_{1}^1(s, t) & K_{2}^1(s, t) \\
  k_{1}^2(s) & k_{2}^2(s) & K_{1}^2(t, s) & K_{2}^2(s, t) 
 \end{pmatrix}.
\end{align}
We simulate random tangent vectors $(u, w)$ and map them back to $\S^2$ using the exponential mapping $\exp_{p^{*}}(\cdot)$. In total, we simulated 8 mean functions $p^*$ with different shapes, and for each $p^*$, we generated a random covariance function $\Sigma$, and sampled 10 functions. 
Next, for a total of 80 functions, we inserted phase variability to these functions (Figure \ref{fig:SimCurves2} describes the process of inserting phase variability). For any two functions $p_0$, $p$, Algorithm \ref{alg:ampldist} was used to align them and compare their amplitude difference. Figure \ref{fig:PairsOfSimCurves} shows a few examples of geodesics between the amplitudes of a pair of functions. We compared our Algorithm \ref{alg:ampldist} with the one described in \cite{ZhengwuZhang2018} (equation (7) in \cite{ZhengwuZhang2018}). It is important to note that the algorithm from \cite{ZhengwuZhang2018} only computes approximated geodesic distance. More specifically, their parallel transports along the base-curve $\beta$ are compromised by a numerical approximation procedure. The parallel transport of TSRVC is approximated by many small parallel transports along a piece-wise geodesic approximation of $\beta$. Moreover, to find the best $\beta$, they use an exhaustive search algorithm. As a consequence, the algorithm from \cite{ZhengwuZhang2018} crucially depends on $N$, the number of discrete points on $\beta,$ and $M$, the number of discrete searching points on the range of $[0, 2\pi)$ for finding the optimal $\beta.$  

From Figure \ref{fig:PairsOfSimCurves} we see that our new algorithm can give better geodesic distances compared with the one in \cite{ZhengwuZhang2018}. The improvement is more significant when the shapes of functions compared are more complex (e.g., the second to the fourth rows). For the algorithm in \cite{ZhengwuZhang2018},  one can observe that when $M$ increases the squared distance decreases, since the chance of finding the best $\beta$ increases. When $N$ increases the geodesic increases. This is because when $N$ increases, the quality of approximation of $\beta$ improves and the squared length of the baseline $\ell_{\beta}^2$ increases (in \cite{ZhengwuZhang2018} this values is approximated from below).

We then compared pair-wise differences among the 80 simulated functions, and Figure \ref{fig:SquareDistanceMatrices2} panel (a) shows pairwise squared distance matrix computed with our Algorithm \ref{alg:ampldist}. The same matrix was also computed using the algorithm in \cite{ZhengwuZhang2018} with $N = 120$ and $M = 240$. Panel (b) shows the difference matrix between the two algorithms (squared distance of \cite{ZhengwuZhang2018} -  squared distance of Algorithm \ref{alg:ampldist}). Let $d^2_a$ represent the squared distance from Algorithm \ref{alg:ampldist} and $d^2_b$ from \cite{ZhengwuZhang2018}. We calculate the percentage of improvement of Algorithm \ref{alg:ampldist} based on $100*(d^2_b -d^2_a)/d^2_b$, and show a histogram of these percentages in panel (c).  We see that the proposed algorithm improves the geodesic calculation in most cases. Actually, the average percentage of improvement is around 9\%, indicating that we have significantly improved the geodesic computation. 

\begin{figure}
 \begin{center}
  \begin{tabular}{ccc}
   \includegraphics[width=0.365\textwidth]{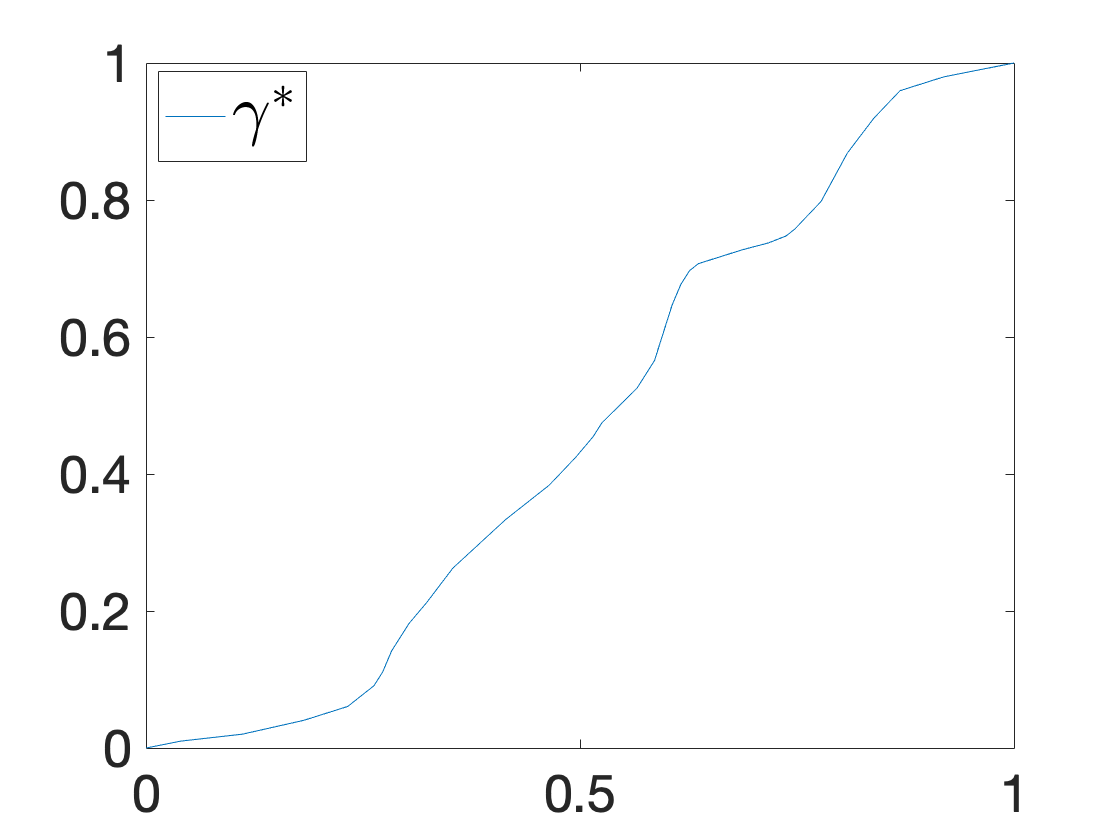} & 
   \includegraphics[width=0.265\textwidth]{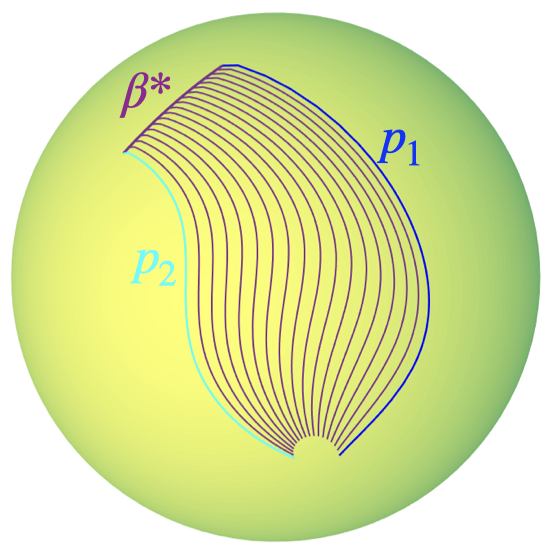} &
   \raisebox{1.1\height}{\scalebox{0.7}{\begin{tabular}{c|c}
       Methods & $d^2_{ {\mathbb{B}}/\Gamma}(p_1, p_2)$ \\
       \hline\hline 
       Algorithm \ref{alg:ampldist} & 0.778398 \\
       \hline
       \cite{ZhengwuZhang2018}
        with $N = 30, M = 60$ & 0.810474 \\
       \hline
       \cite{ZhengwuZhang2018}
        with $N = 30, M = 120$ & 0.810315 \\ 
       \hline
       \cite{ZhengwuZhang2018}
        with $N = 30, M = 240$ & 0.810237 \\  
       \hline
       \cite{ZhengwuZhang2018}
        with $N = 60, M = 60$ & 0.810529 \\
       \hline
       \cite{ZhengwuZhang2018}
        with $N = 60, M = 120$ & 0.810369 \\
          \hline
       \cite{ZhengwuZhang2018}
        with $N = 60, M = 240$ & 0.81029 \\ 
       \hline
       \cite{ZhengwuZhang2018}
        with $N = 120, M = 60$ & 0.810542 \\
       \hline
       \cite{ZhengwuZhang2018}
        with $N = 120, M = 120$ & 0.810382 \\
       \hline
       \cite{ZhengwuZhang2018}
        with $N = 120, M = 240$ & 0.810304 \\
   \end{tabular}}} \\
   \includegraphics[width=0.365\textwidth]{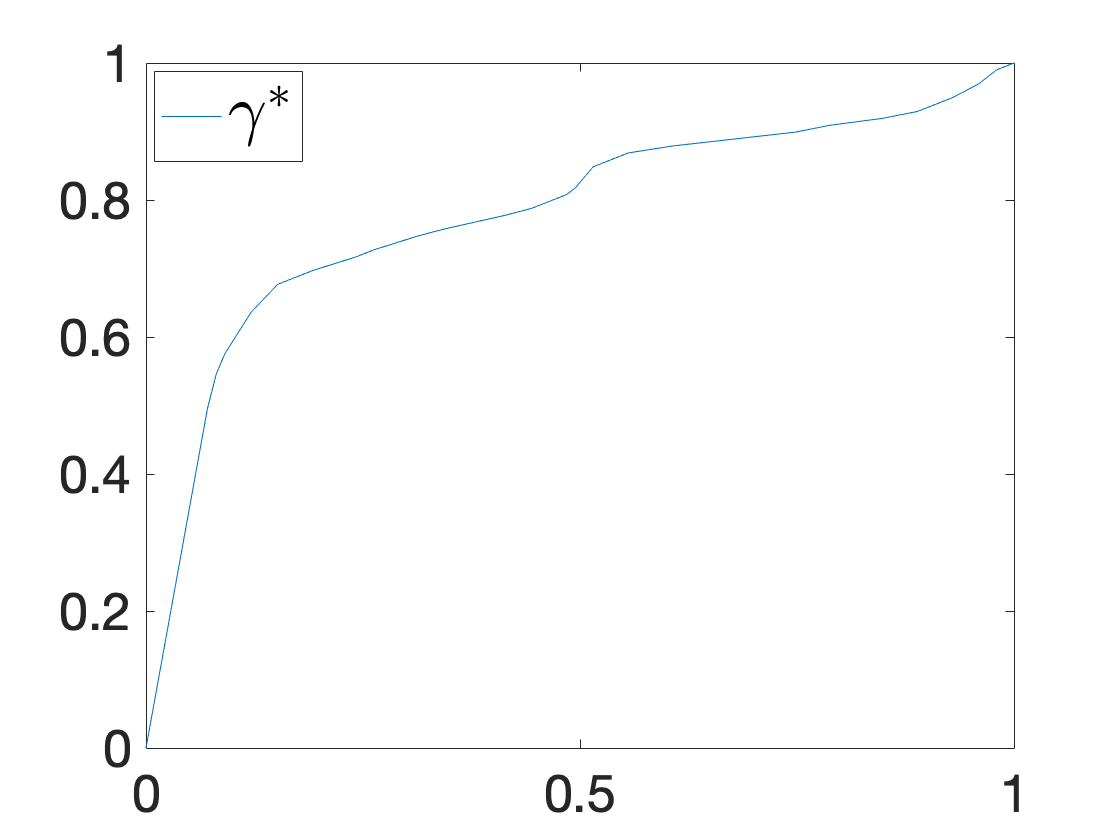} & 
   \includegraphics[width=0.265\textwidth]{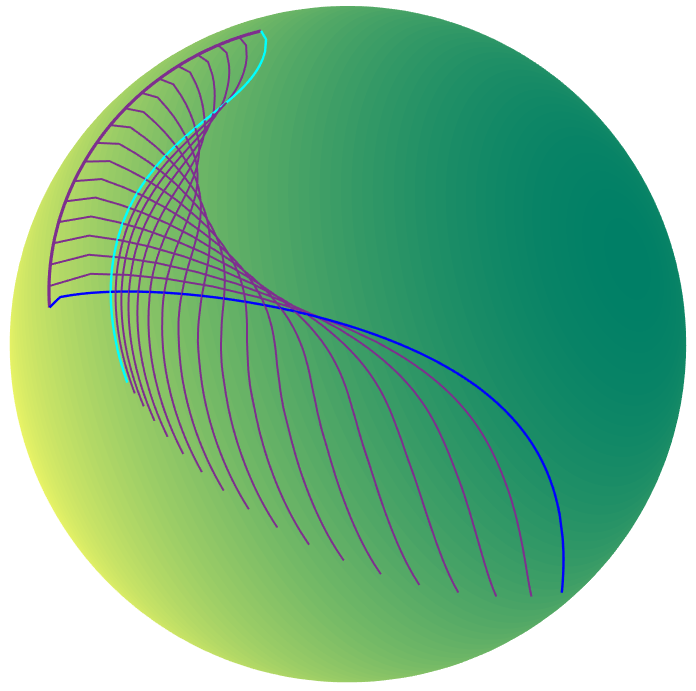} &
   \raisebox{1.1\height}{\scalebox{0.7}{\begin{tabular}{c|c}
       Methods & $d^2_{ {\mathbb{B}}/\Gamma}(p_1, p_2)$ \\
       \hline\hline 
       Algorithm \ref{alg:ampldist} & 2.52071 \\
       \hline
       \cite{ZhengwuZhang2018}
        with $N = 30, M = 60$ & 2.56984 \\
       \hline
       \cite{ZhengwuZhang2018}
        with $N = 30, M = 120$ & 2.56965 \\ 
       \hline
       \cite{ZhengwuZhang2018}
        with $N = 30, M = 240$ & 2.56952 \\  
       \hline
       \cite{ZhengwuZhang2018}
        with $N = 60, M = 60$ & 2.57047 \\
       \hline
       \cite{ZhengwuZhang2018}
        with $N = 60, M = 120$ & 2.57031 \\
          \hline
       \cite{ZhengwuZhang2018}
        with $N = 60, M = 240$ & 2.57016 \\ 
       \hline
       \cite{ZhengwuZhang2018}
        with $N = 120, M = 60$ & 2.57062 \\
       \hline
       \cite{ZhengwuZhang2018}
        with $N = 120, M = 120$ & 2.57047 \\
       \hline
       \cite{ZhengwuZhang2018}
        with $N = 120, M = 240$ & 2.57034 \\
   \end{tabular}}} \\
   \includegraphics[width=0.365\textwidth]{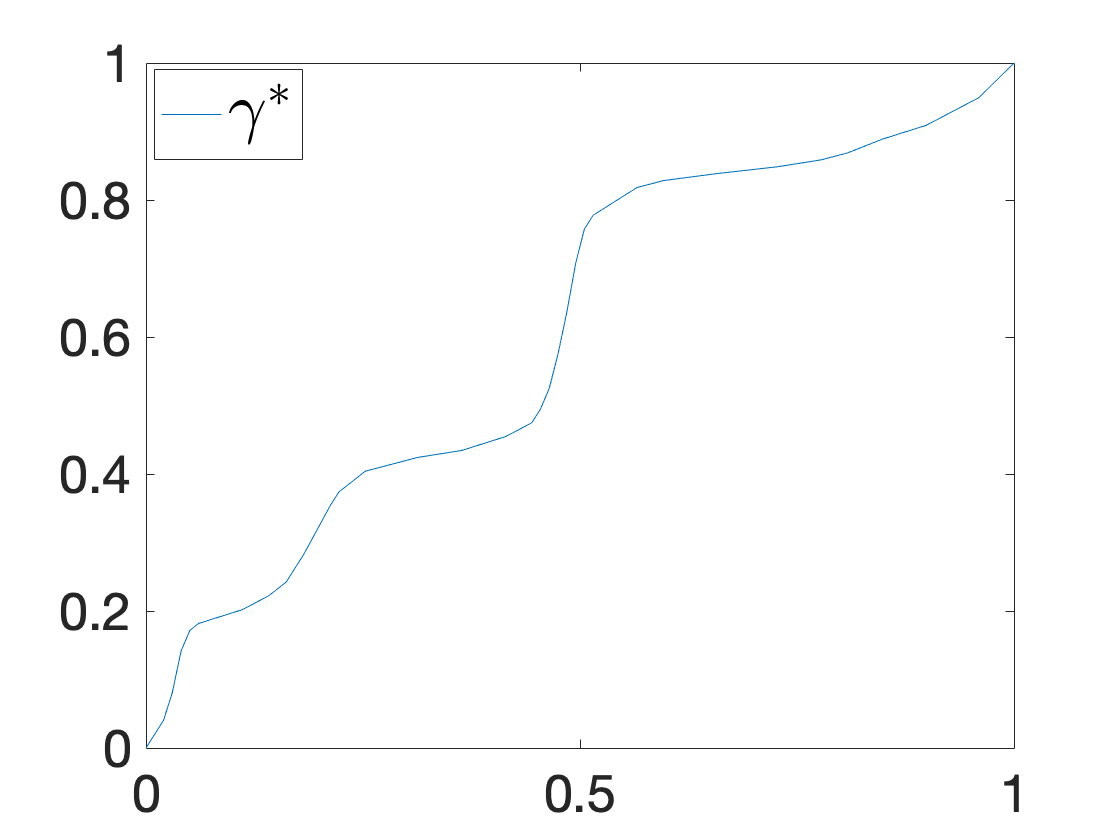} & 
   \includegraphics[width=0.265\textwidth]{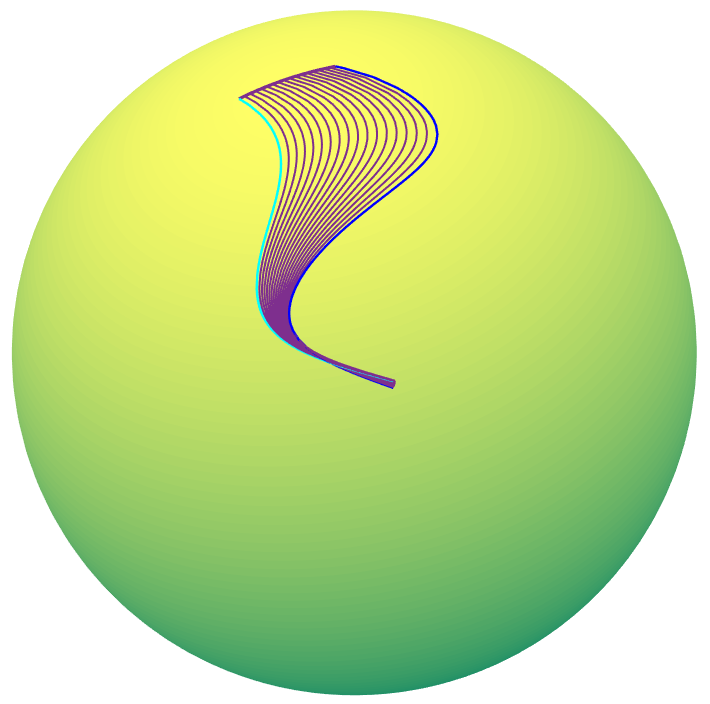} &
   \raisebox{1.1\height}{\scalebox{0.7}{\begin{tabular}{c|c}
       Methods & $d^2_{ {\mathbb{B}}/\Gamma}(p_1, p_2)$ \\
       \hline\hline 
       Algorithm \ref{alg:ampldist} & 0.237116 \\
       \hline
       \cite{ZhengwuZhang2018}
        with $N = 30, M = 60$ & 0.270649 \\
       \hline
       \cite{ZhengwuZhang2018}
        with $N = 30, M = 120$ & 0.270589 \\ 
       \hline
       \cite{ZhengwuZhang2018}
        with $N = 30, M = 240$ & 0.270527 \\  
       \hline
       \cite{ZhengwuZhang2018}
        with $N = 60, M = 60$ & 0.270651 \\
       \hline
       \cite{ZhengwuZhang2018}
        with $N = 60, M = 120$ & 0.270593 \\
          \hline
       \cite{ZhengwuZhang2018}
        with $N = 60, M = 240$ & 0.27053 \\ 
       \hline
       \cite{ZhengwuZhang2018}
        with $N = 120, M = 60$ & 0.270651 \\
       \hline
       \cite{ZhengwuZhang2018}
        with $N = 120, M = 120$ & 0.270593 \\
       \hline
       \cite{ZhengwuZhang2018}
        with $N = 120, M = 240$ & 0.270531 \\
   \end{tabular}}} \\
   \includegraphics[width=0.365\textwidth]{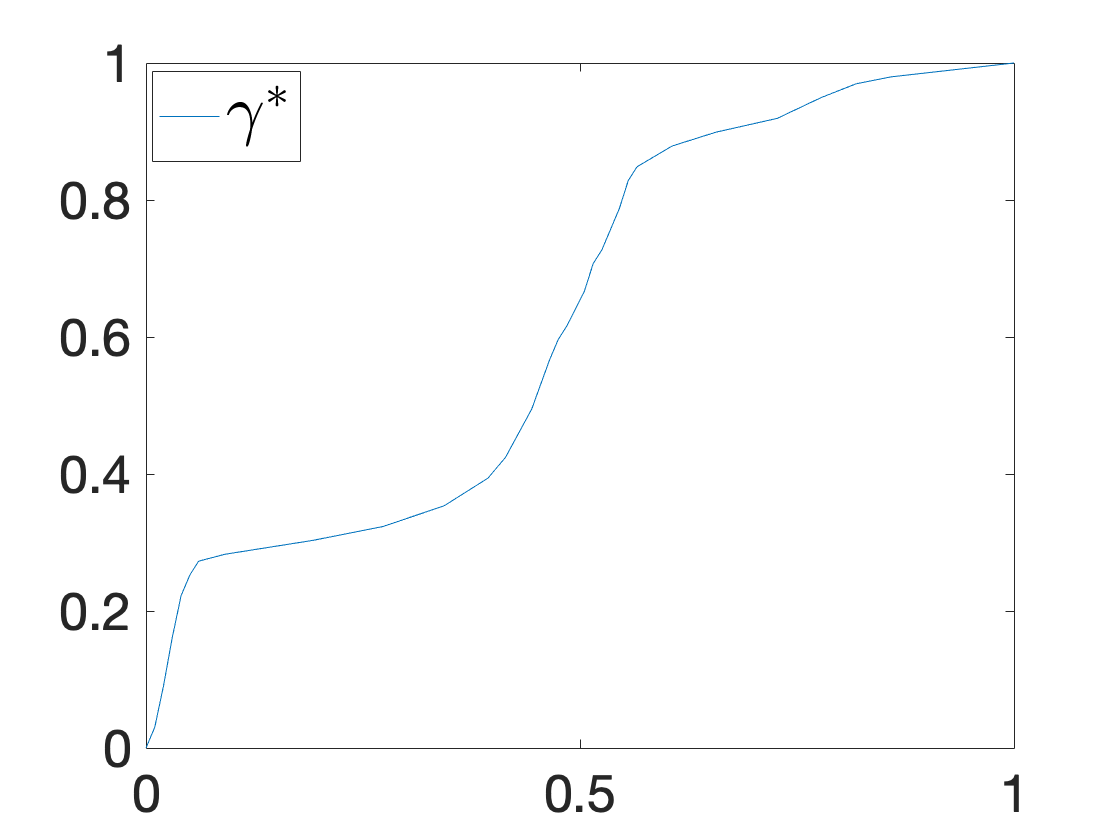} & 
   \includegraphics[width=0.265\textwidth]{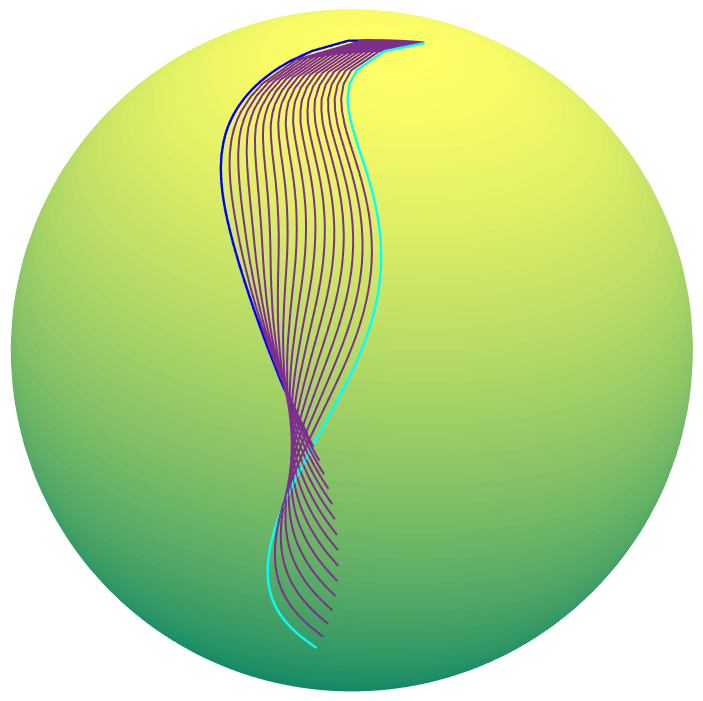} &
   \raisebox{1.1\height}{\scalebox{0.7}{\begin{tabular}{c|c}
       Methods & $d^2_{ {\mathbb{B}}/\Gamma}(p_1, p_2)$ \\
       \hline\hline 
       Algorithm \ref{alg:ampldist} & 0.483469 \\
       \hline
       \cite{ZhengwuZhang2018}
        with $N = 30, M = 60$ & 0.540953 \\
       \hline
       \cite{ZhengwuZhang2018}
        with $N = 30, M = 120$ & 0.540949 \\ 
       \hline
       \cite{ZhengwuZhang2018}
        with $N = 30, M = 240$ & 0.540947 \\  
       \hline
       \cite{ZhengwuZhang2018}
        with $N = 60, M = 60$ & 0.54096 \\
       \hline
       \cite{ZhengwuZhang2018}
        with $N = 60, M = 120$ & 0.540956 \\
          \hline
       \cite{ZhengwuZhang2018}
        with $N = 60, M = 240$ & 0.540955 \\ 
       \hline
       \cite{ZhengwuZhang2018}
        with $N = 120, M = 60$ & 0.540962 \\
       \hline
       \cite{ZhengwuZhang2018}
        with $N = 120, M = 120$ & 0.540958 \\
       \hline
       \cite{ZhengwuZhang2018}
        with $N = 120, M = 240$ & 0.540956 \\
   \end{tabular}}} \\
   \includegraphics[width=0.365\textwidth]{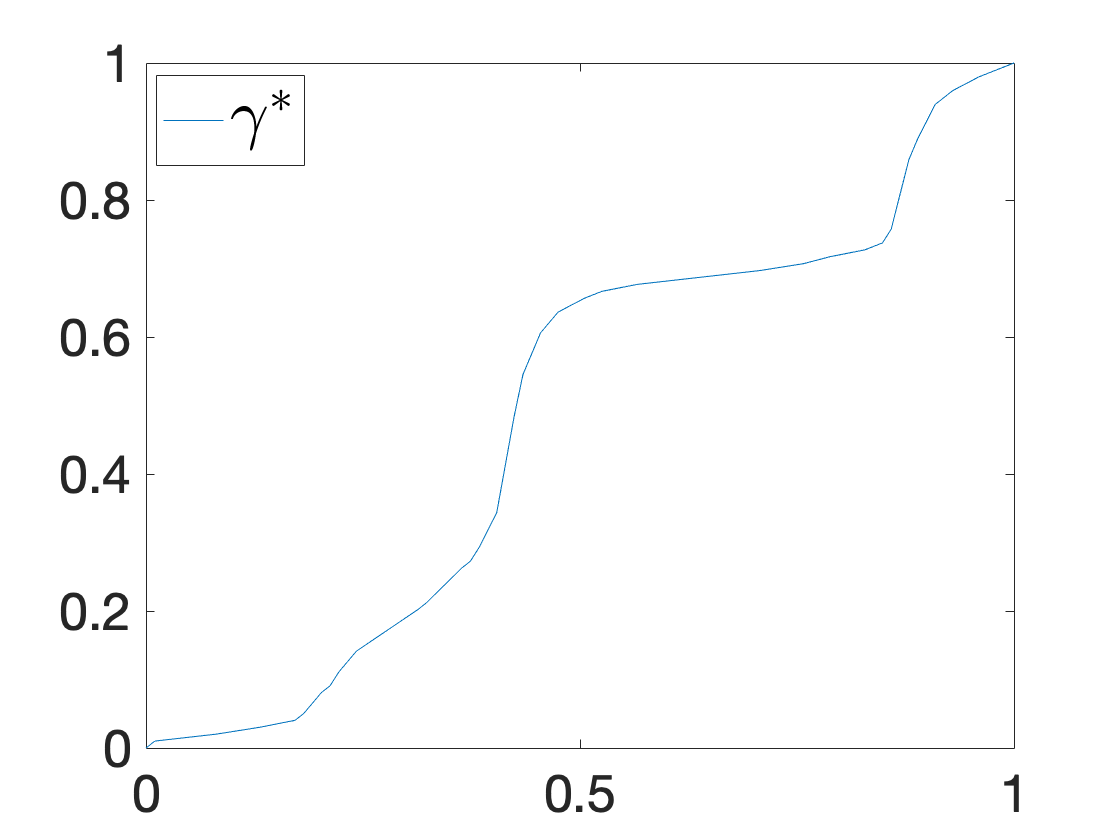} & 
   \includegraphics[width=0.265\textwidth]{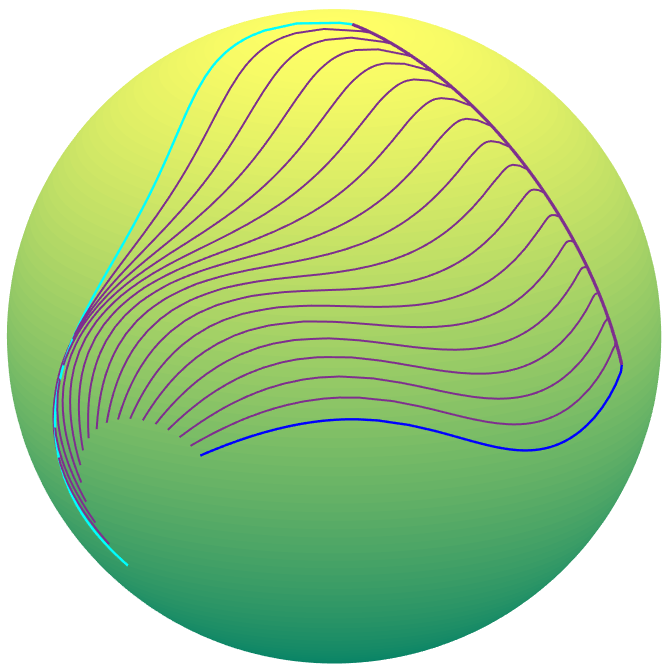} &
   \raisebox{1.1\height}{\scalebox{0.7}{\begin{tabular}{c|c}
       Methods & $d^2_{ {\mathbb{B}}/\Gamma}(p_1, p_2)$ \\
       \hline\hline 
       Algorithm \ref{alg:ampldist} & 2.63583 \\
       \hline
       \cite{ZhengwuZhang2018}
        with $N = 30, M = 60$ & 2.64099 \\
       \hline
       \cite{ZhengwuZhang2018}
        with $N = 30, M = 120$ & 2.63927 \\ 
       \hline
       \cite{ZhengwuZhang2018}
        with $N = 30, M = 240$ & 2.6393 \\  
       \hline
       \cite{ZhengwuZhang2018}
        with $N = 60, M = 60$ & 2.64242 \\
       \hline
       \cite{ZhengwuZhang2018}
        with $N = 60, M = 120$ & 2.64071 \\
          \hline
       \cite{ZhengwuZhang2018}
        with $N = 60, M = 240$ & 2.64064 \\ 
       \hline
       \cite{ZhengwuZhang2018}
        with $N = 120, M = 60$ & 2.64277 \\
       \hline
       \cite{ZhengwuZhang2018}
        with $N = 120, M = 120$ & 2.64106 \\
       \hline
       \cite{ZhengwuZhang2018}
        with $N = 120, M = 240$ & 2.64096 \\
   \end{tabular}}} \\
   (a) & (b) &  (c)   
  \end{tabular}
 \end{center}
 \caption{Geodesic calculation between two function amplitudes. Panel (a) shows the warping function $\gamma^{*}$ obtained by Algorithm \ref{alg:ampldist}, panel (b) shows the two functions on $\S^2$ and the geodesic between them, and panel (c) shows the table of geodesic distances computed by Algorithm \ref{alg:ampldist} and the algorithm from \cite{ZhengwuZhang2018} with various $N$ and $M$.}
 \label{fig:PairsOfSimCurves}
\end{figure}

\begin{figure}
 \begin{tabular}{c|c|c}
  \includegraphics[width=0.315\linewidth]{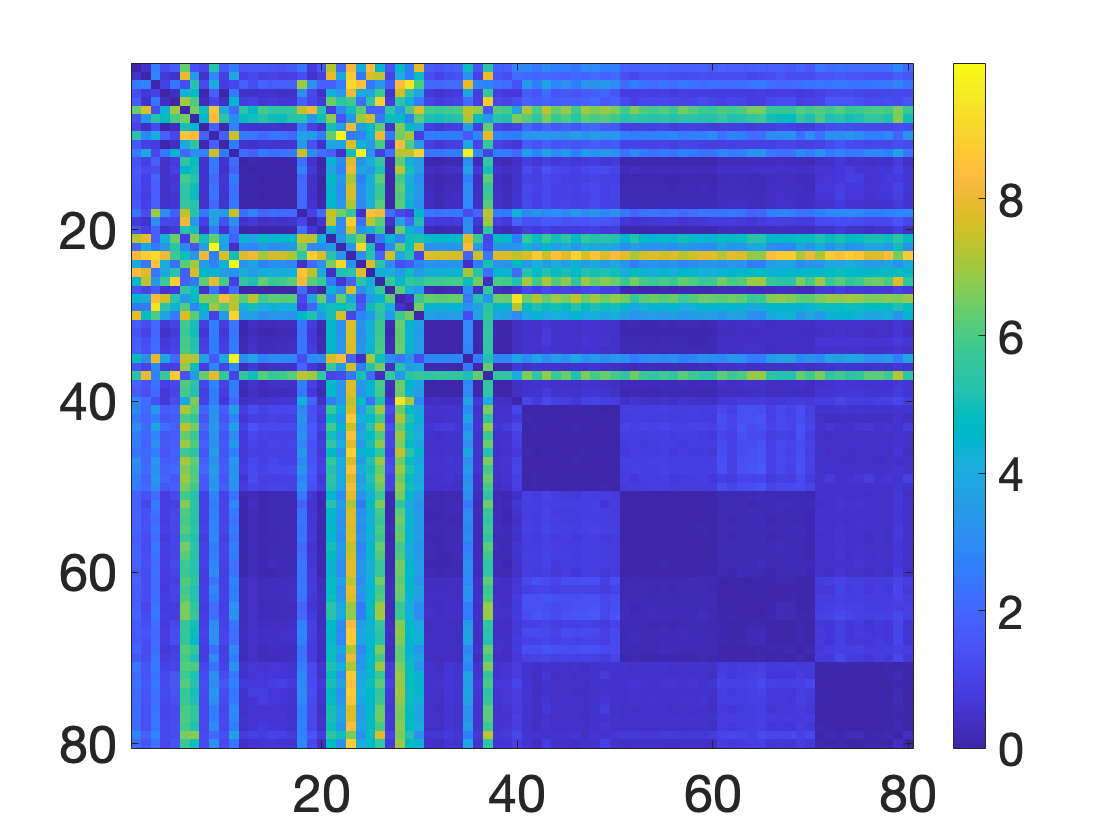} & 
  \includegraphics[width=0.315\linewidth]{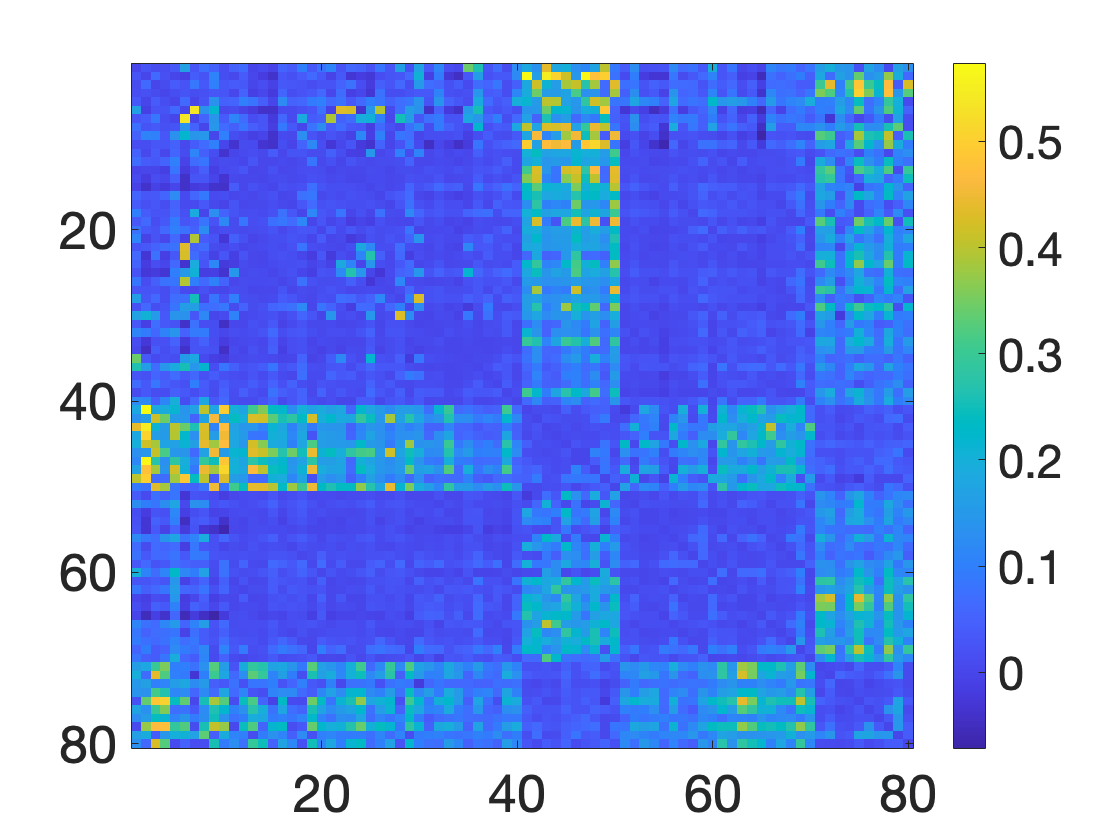} &
  \includegraphics[width=0.315\linewidth]{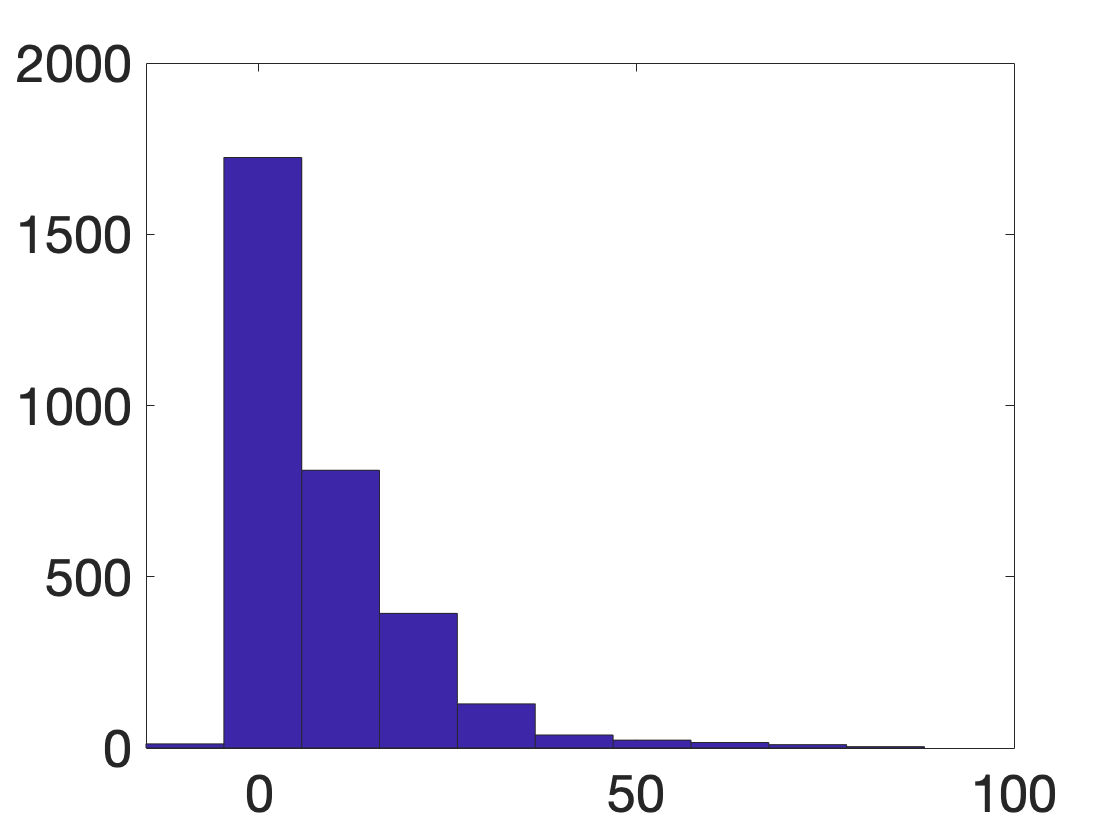} \\
  (a) & (b) & (c) 
\end{tabular}
 \caption{Pairwise squared distance among the amplitudes of $80$ simulated functions. Let $D_a$ be the squared distance matrix calculated by Algorithm \ref{alg:ampldist}, and $D_b$ be the matrix calculated by the algorithm in \cite{ZhengwuZhang2018} (with $N = 120$ and $M = 240$).  Panel (a) shows $D_a$, panel (b) shows $D_b-D_a$, and panel (c) shows the histogram of the upper triangle of $100*(D_b-D_a)./D_b$.}
 \label{fig:SquareDistanceMatrices2}
\end{figure}

\begin{figure}
\begin{center}
\begin{tabular}{ccccc}
 \includegraphics[width=0.3\textwidth]{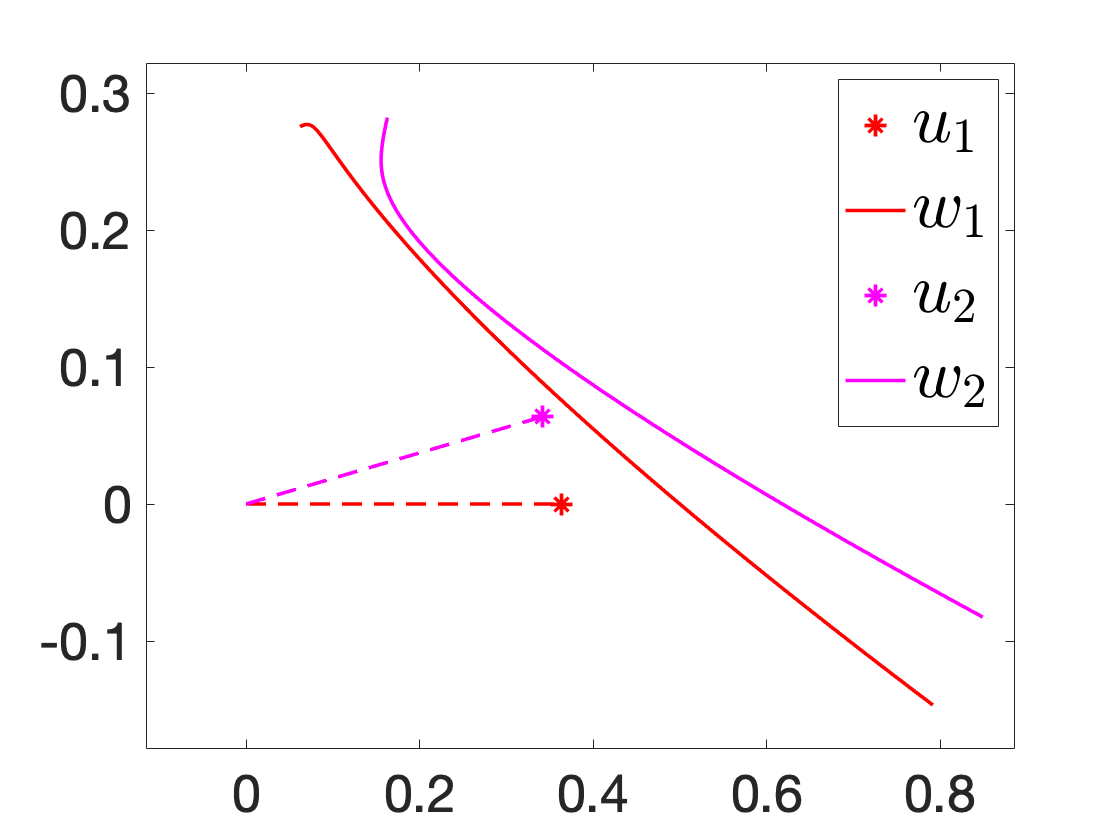} & 
 \raisebox{6.5\height}{\scalebox{2}{$\simeq$}} &
 \includegraphics[width=0.19\textwidth]{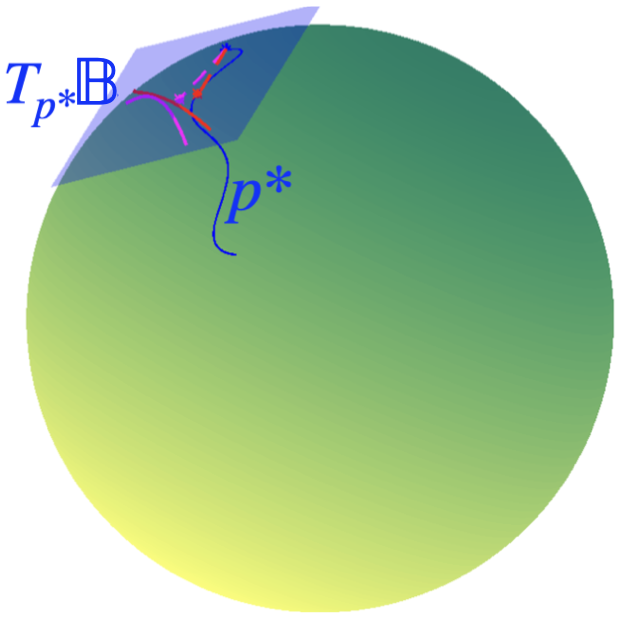} & \raisebox{2.5\height}{\scalebox{1.5}{$\xLongrightarrow{\exp_{p^{*}}(u_i, w_i)}$}} &
 \includegraphics[width=0.19\textwidth]{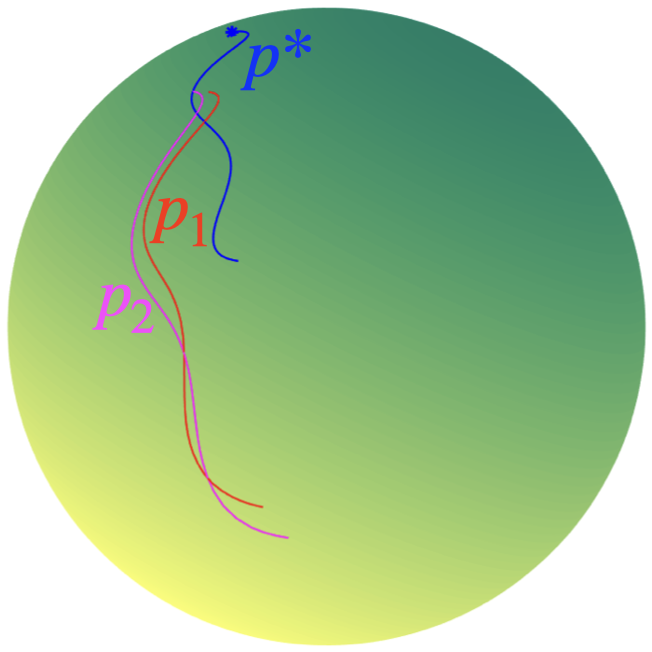} \\
   & (a) & & & (b)  
\end{tabular}
\end{center}
\caption{Illustration on data simulation. Panel (a) shows two samples on $T_{p^{*}} {\mathbb{B}}$ simulated using the probability model \eqref{eqn:sim1}.  Panel (b) shows two curves $\exp_{p^{*}}(u_1, w_1)$ and $\exp_{p^{*}}(u_2, w_2)$ that are interest here.}
\label{fig:SimCurves1}
\end{figure}

\begin{figure}
\begin{center}
\begin{tabular}{ccccc}
 \includegraphics[width=0.22\textwidth]{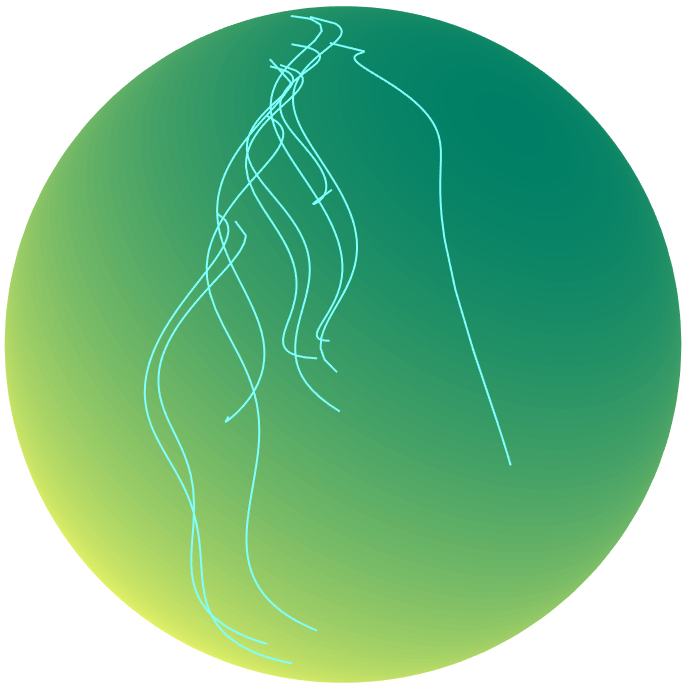} & 
 \raisebox{6.5\height}{\scalebox{2}{$+$}} &
 \includegraphics[width=0.3\textwidth]{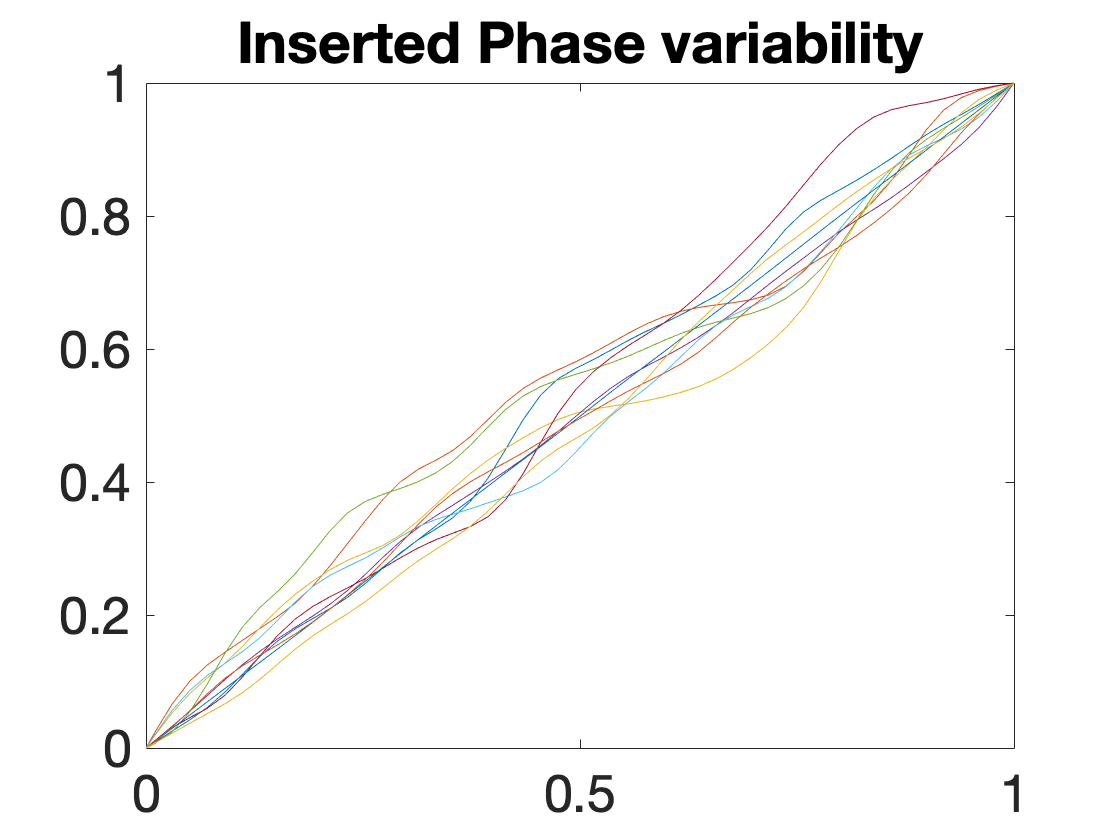} & \raisebox{6.5\height}{\scalebox{2}{$+$}} &
 \includegraphics[width=0.3\textwidth]{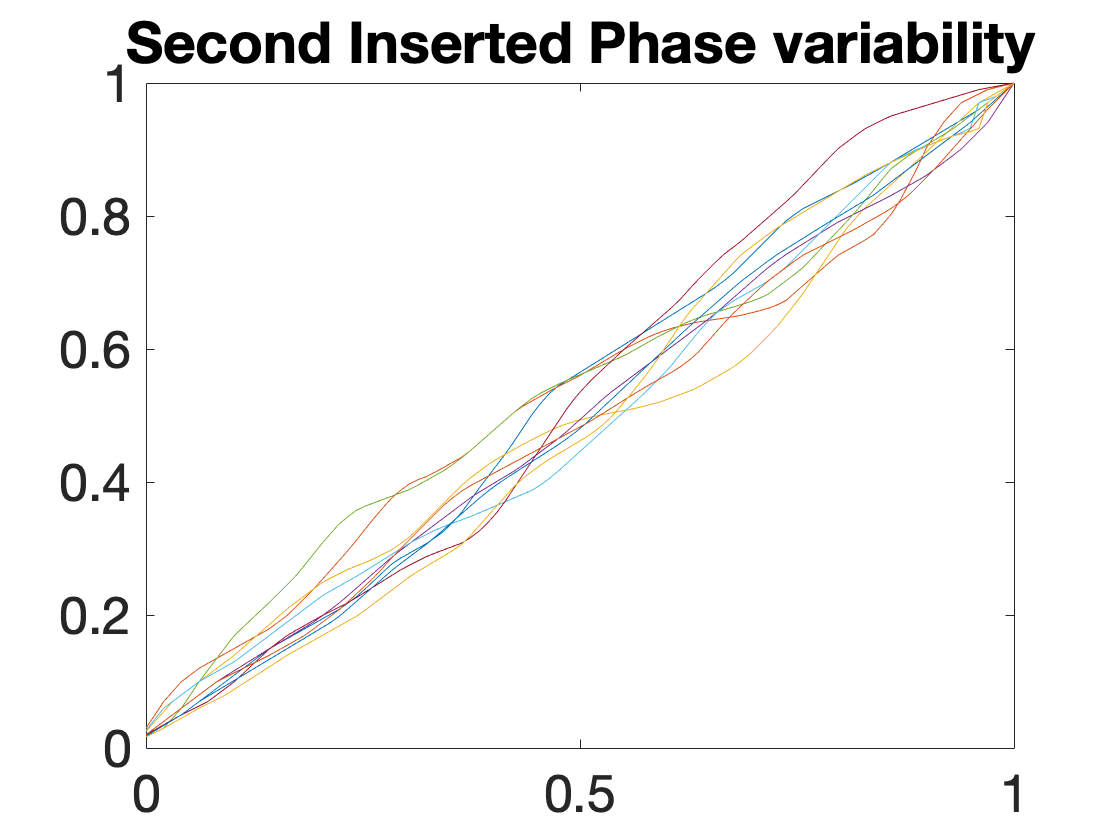} \\
   (a) & & (b) & & (c)  
\end{tabular}
\end{center}
\caption{Adding phase variability to simulated functions on $\S^2$. Panel (a) shows curves simulated using the probability model \eqref{eqn:sim1}.  Panel (b) shows the phase variability that was inserted into functions. Panel (c) shows the phase variability that was inserted into functions after inserting the phase variability (b).}
\label{fig:SimCurves2}
\end{figure}

Next, we consider the sample  Fr\'{e}chet mean on $\C/\tilde{\Gamma}$. Given a set of functions $\{p_1,...,p_n\}$, to optimize the Fr\'{e}chet function, the proposed Algorithm \ref{alg:aloptx} performs the following steps: 1) for a given starting point $x\in\S^2$, it jointly updates the base-curve $\beta_i,$  the warping function $\gamma_i,$ and the TSRVC $q_{\mu}$ of the sample Fr\'{e}chet mean $p_{\mu}$; and then 2) performs the gradient step using the exponential mapping on $\S^2,$ since the gradient of the Fr\'{e}chet function corresponding to the TSRVC is equal to 0. Similarly, we used a Gaussian process model \eqref{eqn:sim1} to simulate tangent vectors on $T_{p^{*}} {\mathbb{B}}$ then map it to $ {\mathbb{B}}$ using the exponential mapping $\exp_{p^{*}}(\cdot)$. Figure \ref{fig:SimCurves1} panel (a) shows two examples of the sampled $(u, w)$ and panel (b) shows the corresponding functions on $\S^2$. We also inserted phase variability to the simulated functions (see Figure \ref{fig:SimCurves2}). Next, we compared our Algorithm \ref{alg:aloptx} with the one from \cite{ZhengwuZhang2018} (Algorithm 3 in \cite{ZhengwuZhang2018}) to find the optimal sample Fr\'echet mean. Figure \ref{fig:SimCurves3} shows four different examples, where panel (a) shows the phase variability obtained from Algorithm \ref{alg:aloptx}, panel (b) shows the simulated functions $p_j$, the true mean $p^{*}$, and the sample mean $\tilde{p}_{\mu}$, and panel (c) shows the average of mean squared distances obtained from different algorithms. Given the complexity of the simulated functions, apparently,  extrinsic mean method, i.e., first calculate the Euclidean mean in $\mathbb{R}^3$ and then map it back to $\S^2$, will not perform well, and hence we did not include in our comparison. From the results, we see that our Fr\'echet mean captures the amplitude (the shape) of these functions well, and gives a smaller Fr\'echet function value.  

\begin{figure}
 \begin{center}
  \begin{tabular}{ccc}
   \includegraphics[width=0.365\textwidth]{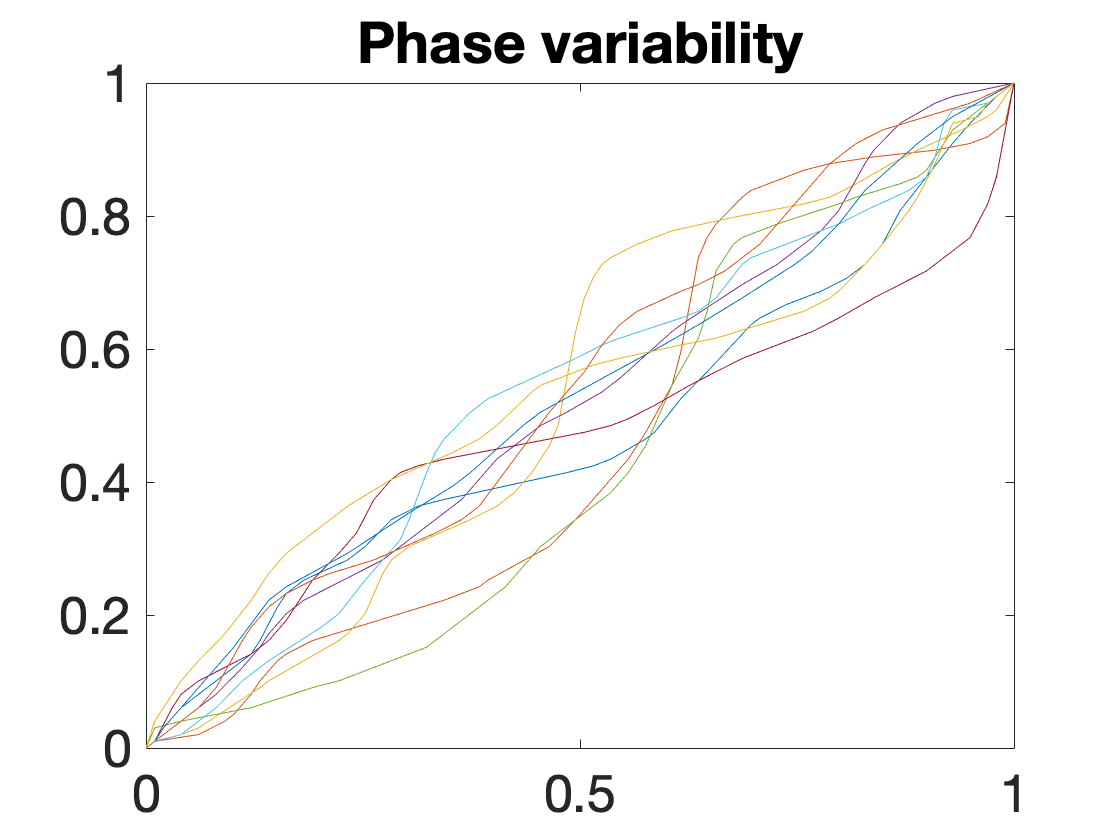} & 
   \includegraphics[width=0.265\textwidth]{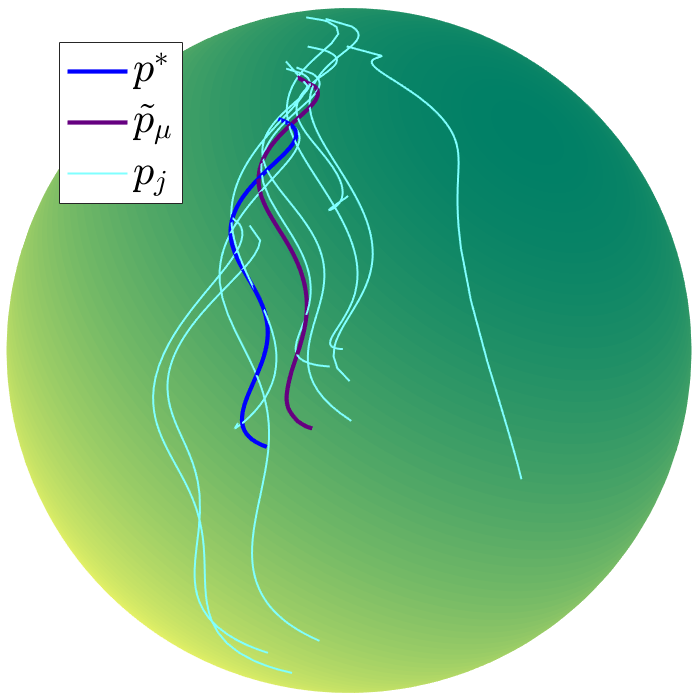} &
   \raisebox{1.1\height}{\scalebox{0.7}{\begin{tabular}{c|c}
       Methods & $\frac{1}{n}\sum_{i=1}^nd^2_{ {\mathbb{B}}/\Gamma}(p_i, \tilde{p}_{\mu})$ \\
       \hline\hline 
       Algorithm \ref{alg:aloptx} & 0.272878 \\
       \hline
       \cite{ZhengwuZhang2018}
        with $N = 30, M = 60$ & 0.28613 \\
       \hline
       \cite{ZhengwuZhang2018}
        with $N = 30, M = 120$ & 0.286135 \\ 
       \hline
       \cite{ZhengwuZhang2018}
        with $N = 30, M = 240$ & 0.286153 \\  
       \hline
       \cite{ZhengwuZhang2018}
        with $N = 60, M = 60$ & 0.286137 \\
       \hline
       \cite{ZhengwuZhang2018}
        with $N = 60, M = 120$ & 0.286144 \\
          \hline
       \cite{ZhengwuZhang2018}
        with $N = 60, M = 240$ & 0.28616 \\ 
       \hline
       \cite{ZhengwuZhang2018}
        with $N = 120, M = 60$ & 0.286139 \\
       \hline
       \cite{ZhengwuZhang2018}
        with $N = 120, M = 120$ & 0.286146 \\
       \hline
       \cite{ZhengwuZhang2018}
        with $N = 120, M = 240$ & 0.286162 \\
   \end{tabular}}} \\
   \includegraphics[width=0.365\textwidth]{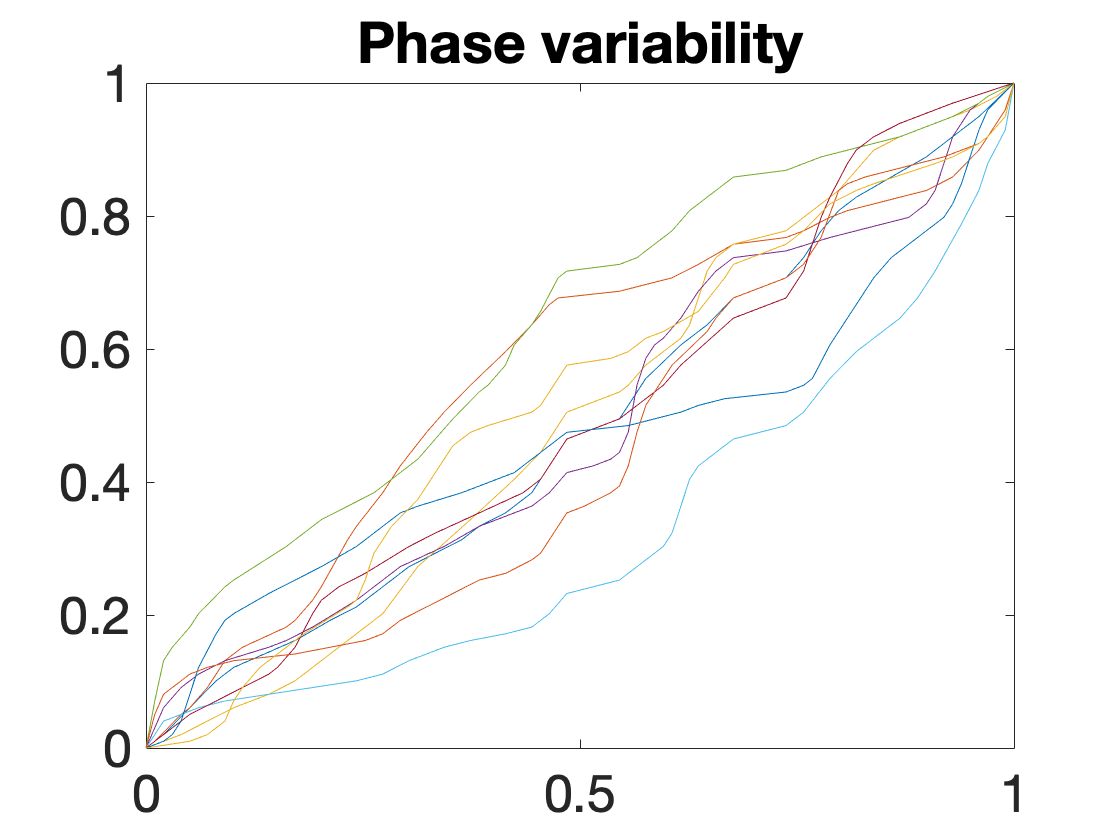} & 
   \includegraphics[width=0.265\textwidth]{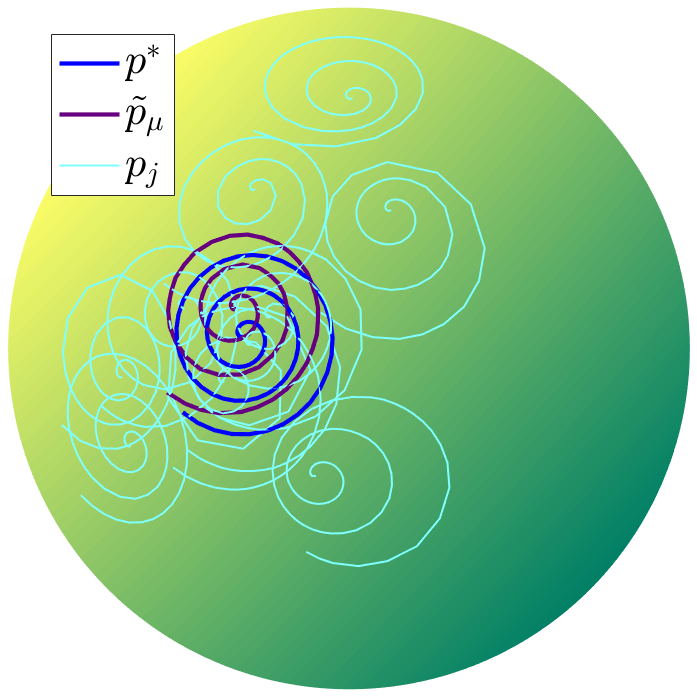} &
   \raisebox{1.1\height}{\scalebox{0.7}{\begin{tabular}{c|c}
       Methods & $\frac{1}{n}\sum_{i=1}^nd^2_{ {\mathbb{B}}/\Gamma}(p_i, \tilde{p}_{\mu})$ \\
       \hline\hline 
       Algorithm \ref{alg:aloptx} & 0.225475 \\
       \hline
       \cite{ZhengwuZhang2018}
        with $N = 30, M = 60$ & 0.243124 \\
       \hline
       \cite{ZhengwuZhang2018}
        with $N = 30, M = 120$ & 0.236718 \\ 
       \hline
       \cite{ZhengwuZhang2018}
        with $N = 30, M = 240$ & 0.236772 \\  
       \hline
       \cite{ZhengwuZhang2018}
        with $N = 60, M = 60$ & 0.243322 \\
       \hline
       \cite{ZhengwuZhang2018}
        with $N = 60, M = 120$ & 0.236672 \\
          \hline
       \cite{ZhengwuZhang2018}
        with $N = 60, M = 240$ & 0.236792 \\ 
       \hline
       \cite{ZhengwuZhang2018}
        with $N = 120, M = 60$ & 0.243327 \\
       \hline
       \cite{ZhengwuZhang2018}
        with $N = 120, M = 120$ & 0.236677 \\
       \hline
       \cite{ZhengwuZhang2018}
        with $N = 120, M = 240$ & 0.236797 \\
   \end{tabular}}} \\
   \includegraphics[width=0.365\textwidth]{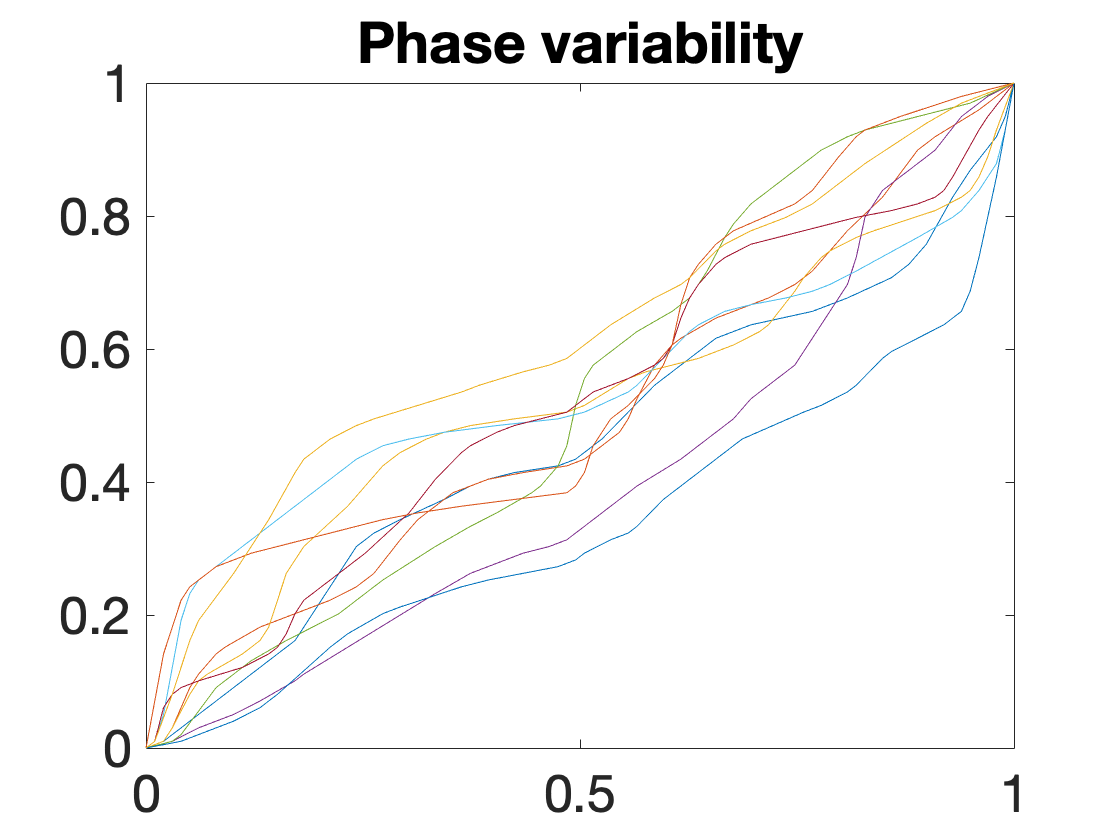} & 
   \includegraphics[width=0.265\textwidth]{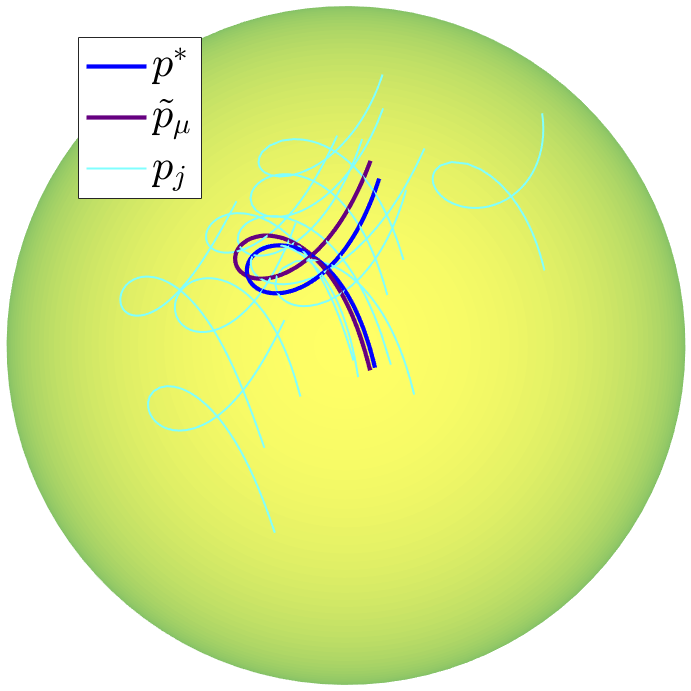} &
   \raisebox{1.1\height}{\scalebox{0.7}{\begin{tabular}{c|c}
       Methods & $\frac{1}{n}\sum_{i=1}^nd^2_{ {\mathbb{B}}/\Gamma}(p_i, \tilde{p}_{\mu})$ \\
       \hline\hline 
       Algorithm \ref{alg:aloptx} & 0.124914 \\
       \hline
       \cite{ZhengwuZhang2018}
        with $N = 30, M = 60$ & 0.130731 \\
       \hline
       \cite{ZhengwuZhang2018}
        with $N = 30, M = 120$ & 0.13071 \\ 
       \hline
       \cite{ZhengwuZhang2018}
        with $N = 30, M = 240$ & 0.130721 \\  
       \hline
       \cite{ZhengwuZhang2018}
        with $N = 60, M = 60$ & 0.130741 \\
       \hline
       \cite{ZhengwuZhang2018}
        with $N = 60, M = 120$ & 0.130719 \\
          \hline
       \cite{ZhengwuZhang2018}
        with $N = 60, M = 240$ & 0.130731 \\ 
       \hline
       \cite{ZhengwuZhang2018}
        with $N = 120, M = 60$ & 0.130743 \\
       \hline
       \cite{ZhengwuZhang2018}
        with $N = 120, M = 120$ & 0.130721 \\
       \hline
       \cite{ZhengwuZhang2018}
        with $N = 120, M = 240$ & 0.130733 \\
   \end{tabular}}} \\
   \includegraphics[width=0.365\textwidth]{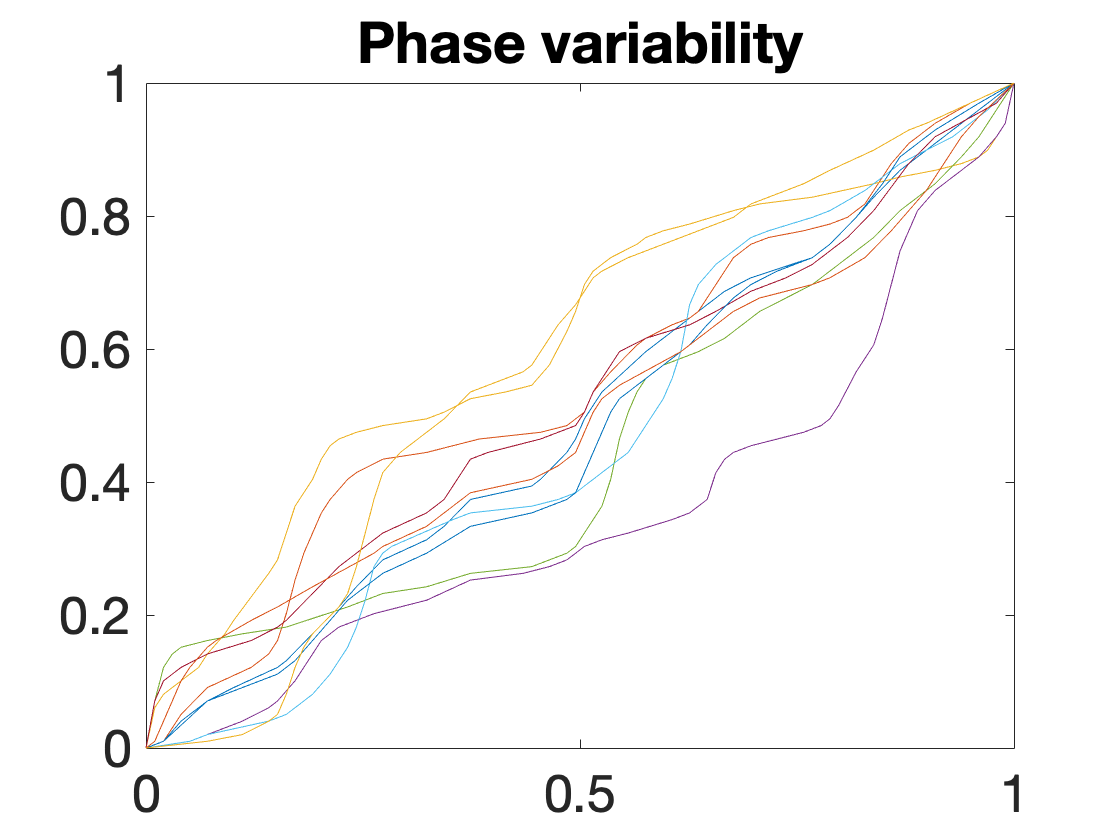} & 
   \includegraphics[width=0.265\textwidth]{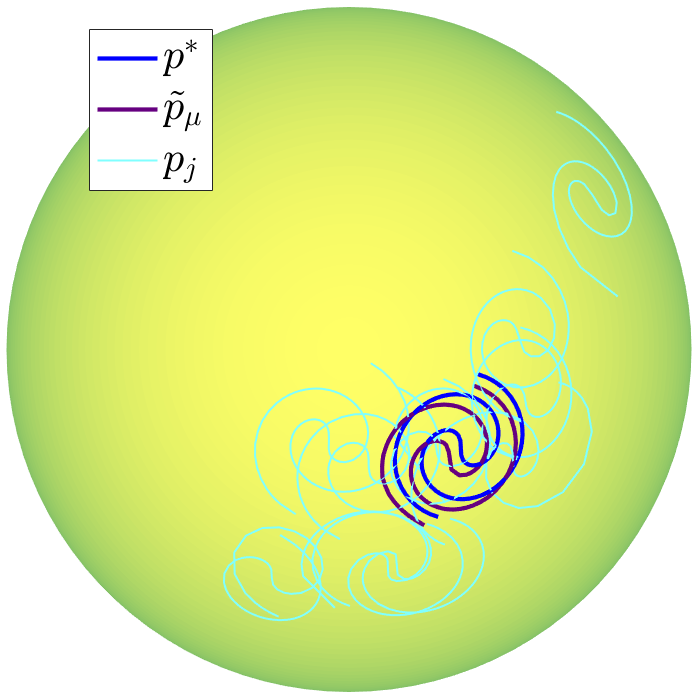} &
   \raisebox{1.1\height}{\scalebox{0.7}{\begin{tabular}{c|c}
       Methods & $\frac{1}{n}\sum_{i=1}^nd^2_{ {\mathbb{B}}/\Gamma}(p_i, \tilde{p}_{\mu})$ \\
       \hline\hline 
       Algorithm \ref{alg:aloptx} & 0.217113 \\
       \hline
       \cite{ZhengwuZhang2018}
        with $N = 30, M = 60$ & 0.231254 \\
       \hline
       \cite{ZhengwuZhang2018}
        with $N = 30, M = 120$ & 0.23092 \\ 
       \hline
       \cite{ZhengwuZhang2018}
        with $N = 30, M = 240$ & 0.230834 \\  
       \hline
       \cite{ZhengwuZhang2018}
        with $N = 60, M = 60$ & 0.231281 \\
       \hline
       \cite{ZhengwuZhang2018}
        with $N = 60, M = 120$ & 0.230947 \\
          \hline
       \cite{ZhengwuZhang2018}
        with $N = 60, M = 240$ & 0.230861 \\ 
       \hline
       \cite{ZhengwuZhang2018}
        with $N = 120, M = 60$ & 0.231288 \\
       \hline
       \cite{ZhengwuZhang2018}
        with $N = 120, M = 120$ & 0.230953 \\
       \hline
       \cite{ZhengwuZhang2018}
        with $N = 120, M = 240$ & 0.230868 \\
   \end{tabular}}} \\
   (a) & (b) &  (c)   
  \end{tabular}
 \end{center}
 \caption{Examples of sample Fr\'echet mean for function amplitudes. Panel (a) shows the warping function $\{\gamma_1, \dots, \gamma_n\}$ obtained by Algorithm \ref{alg:aloptx}, panel (b) shows the sample data $\{p_1, \dots, p_n\},$ the Fr\'{e}chet mean $p^{*}$, the sample Fr\'{e}chet mean $p_{\mu}$, and panel (c) shows a table of optimal sample F\'{e}chet functions computed by Algorithm \ref{alg:aloptx} and the algorithm from \cite{ZhengwuZhang2018} with various $N$ and $M$.}
 \label{fig:SimCurves3}
\end{figure}

\section{Real Data Analysis}

We also illustrate our framework on  two real datasets: bird migration data \cite{Kochert} and Atlantic hurricane tracks \cite{Landsea2013AtlanticHD} (see Figure \ref{fig:intro_hurrican_bird} for a snapshot of the data).

\begin{itemize}
\item The {\bf bird migration data} \cite{Kochert} has 35 migration trajectories of Swainson’s Hawk from western North America to Argentina and back, observed from 1995 to 1997. The collected data were used to help identify the most important areas for pesticide control to reduce fatalities of the bird in Argentina.

\item The {\bf hurricane data} \cite{Landsea2013AtlanticHD} contains hurricane tracks originated from the Atlantic ocean and Gulf of Mexico. The US National Hurricane Center (NHC) conducted post-storm analyses of hurricanes that have been recorded by the National Oceanic and Atmospheric Administration (NOAA). HURDAT2 is the database we used here and and can be found on \url{https://www.nhc.noaa.gov/data/}.
\end{itemize}

We implemented Algorithm \ref{alg:optx} to find the sample Fr\'{e}chet mean of Swainson hawk migrations on $ {\mathbb{B}}$ (see the first row in Figure \ref{fig:FM} (a)) and Algorithm \ref{alg:aloptx} to find  the sample Fr\'{e}chet mean on $ {\mathbb{B}}/\Gamma$ (see the first row in Figure \ref{fig:FM} (b)). Comparing the two means, we can see that birds travel with non-synchronized speed and the amplitude mean is more representative of the common pattern in the data, e.g., the mean on $\C$ is much shorter than the mean on $\C/\Gamma$ and deviates from the major path due to averaging the non-synchronized trajectories or functions. Similar analysis was also done for the hurricane data and the results are presented in the second row of Figure \ref{fig:FM}. There is also a descent phase variability in the hurricane data. 

\begin{figure}
\begin{center}
\begin{tabular}{c|cc}
 \includegraphics[width=0.25\textwidth]{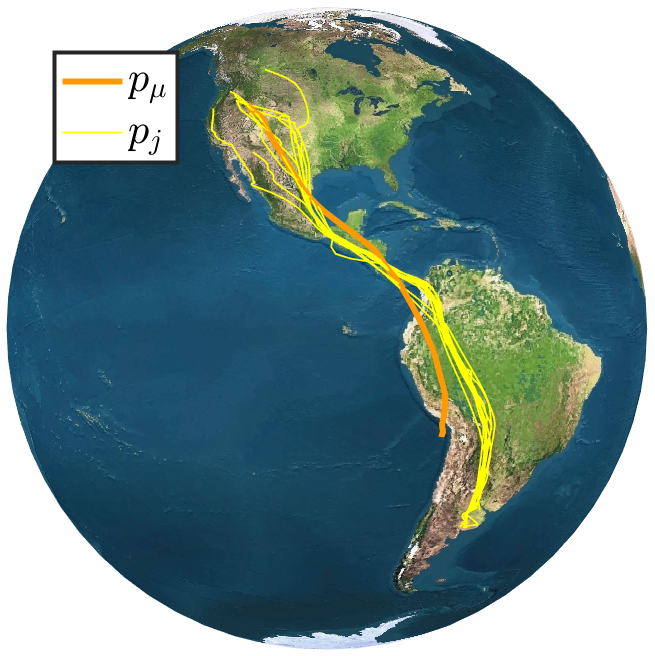} &
  \includegraphics[width=0.25\textwidth]{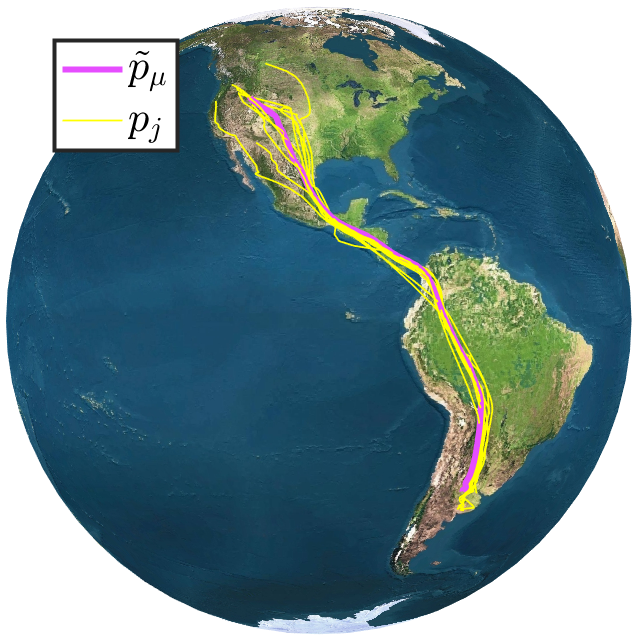} &
 \includegraphics[width=0.35\textwidth]{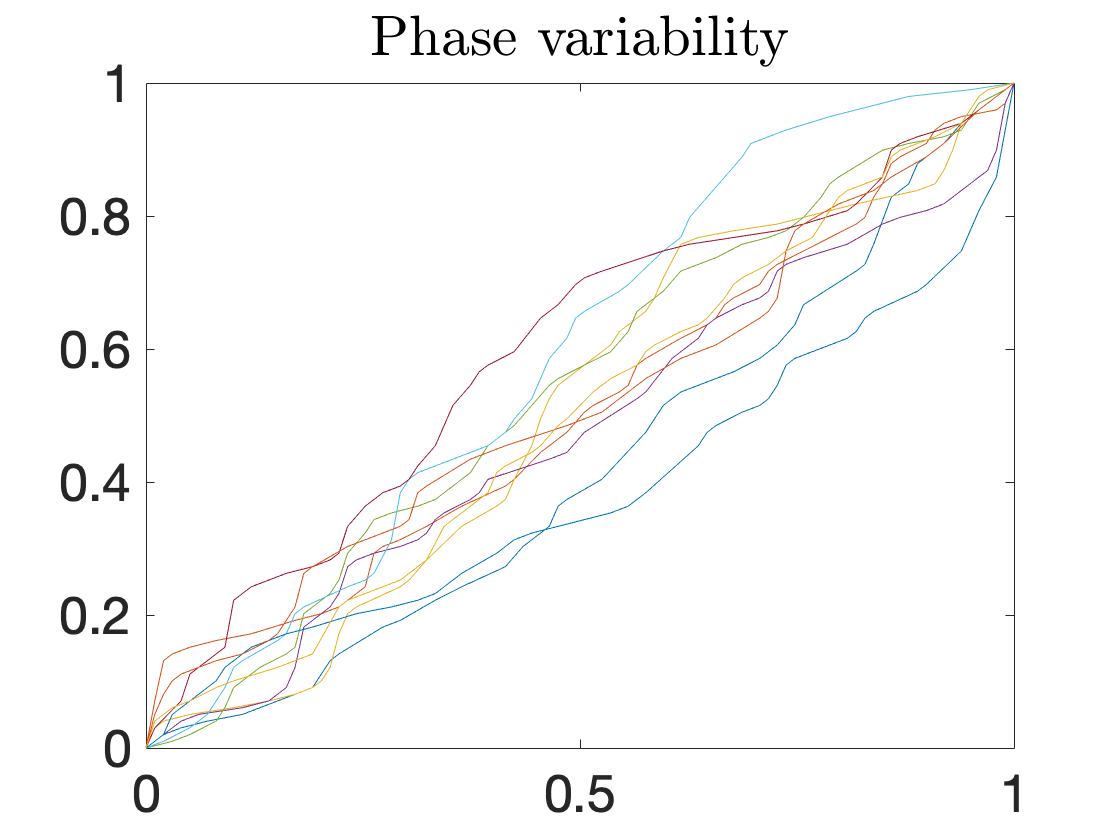} \\
 \includegraphics[width=0.25\textwidth]{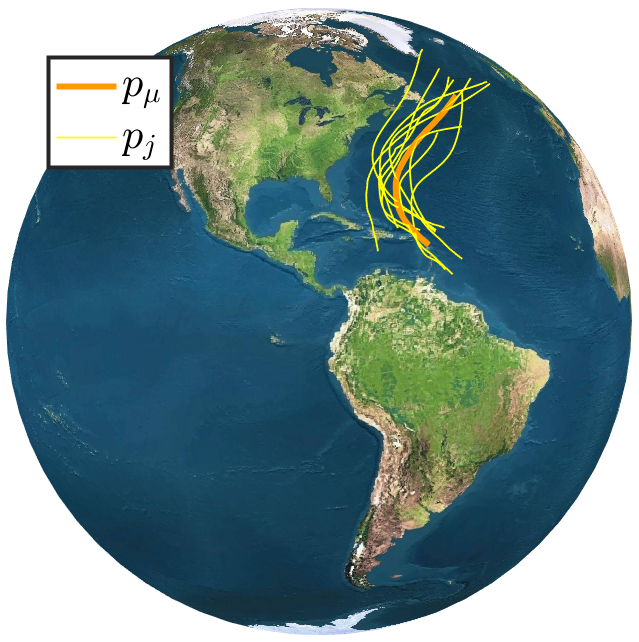} &
  \includegraphics[width=0.25\textwidth]{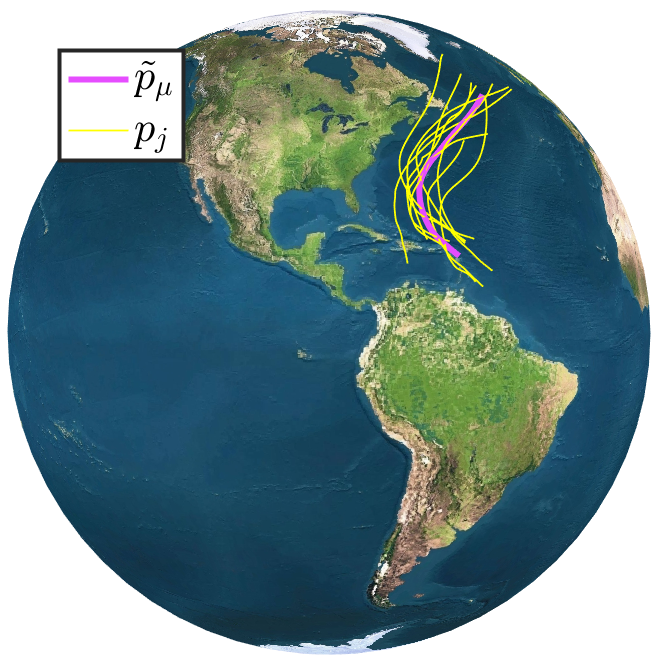} &
 \includegraphics[width=0.35\textwidth]{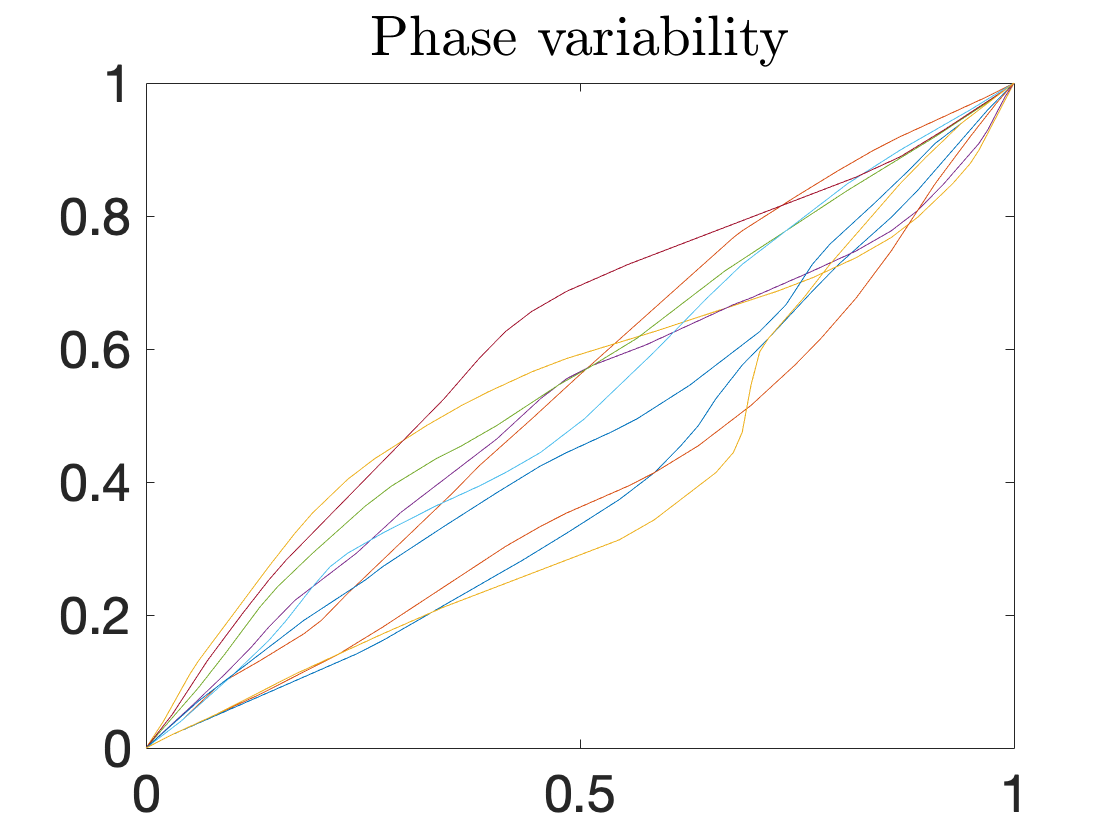}  \\
   (a) & (b) & (c) \\
\end{tabular}
\end{center}
\caption{Real data analysis on bird migration trajectories and hurricane paths. Panel (a) shows the sample Fr\'echet mean $p_{\mu}$ obtained by Algorithm \ref{alg:optx}.  Panel (b) shows the sample Fr\'echet mean $\tilde{p}_{\mu}$ derived by Algorithm \ref{alg:aloptx} and the corresponding time warping functions for alignment.}
\label{fig:FM}
\end{figure}

To better compare and visualize data on $\C$ and $\C/\Gamma$, we computed the sample covariance matrix of TSRVCs along the sample Fr\'echet mean before and after alignment. On $T_{p_\mu} {\mathbb{B}}$ (before alignment), at time $t$ along the mean, we calculate the covariance of the sample TSRVCs as
\begin{align*}
 K_{\mu}(t) = \frac{1}{n-1}\sum_{j=1}^n w_j(t)w_j(t)^T \qquad {\rm where} \quad (u_j, w_j) = \exp_{p_{\mu}}p_j,
\end{align*}
and on $T_{\tilde{p}_\mu} {\mathbb{B}}/\Gamma$, we have
\begin{align*}
 \tilde{K}_{\mu}(t) = \frac{1}{n - 1}\sum_{j=1}^n \tilde{w}_j(t)\tilde{w}_j(t)^T \qquad {\rm where} \quad (\tilde{u}_j, \tilde{w}_j) = \exp_{\tilde{p}_{\mu}}\tilde{p}_j.
\end{align*}
Each $K_\mu(t)$ is a $2\times 2$ tensor and can be plot as an ellipsoid. We parallel transport $K_\mu(t)$ from the tangent space on $T_{p_{\mu}(0)}\S^2$ to $T_{p_{\mu}(t)}\S^2$ along $p_{\mu}$ and display them along $p_{\mu}$. Figure \ref{fig:V} shows these covariance matrices before alignment (panel (a)) and after alignment (panel (c)). We also computed the trace of $K_{\mu}(t)$ ($\rho_{\mu}(t) = \text{trace}\big(K_{\mu}(t)\big)$ and $\tilde{\rho}_{\mu}(t) = \text{trace}\big(\tilde{K}_{\mu}(t)\big)$) to summarize the variance as one number at time $t$, which is displayed on panel (b).  These results show the importance of considering phase variability in functional data analysis on $\S^2$. 

\begin{figure}
\begin{center}
\begin{tabular}{c|c|c}
 \includegraphics[width=0.25\textwidth]{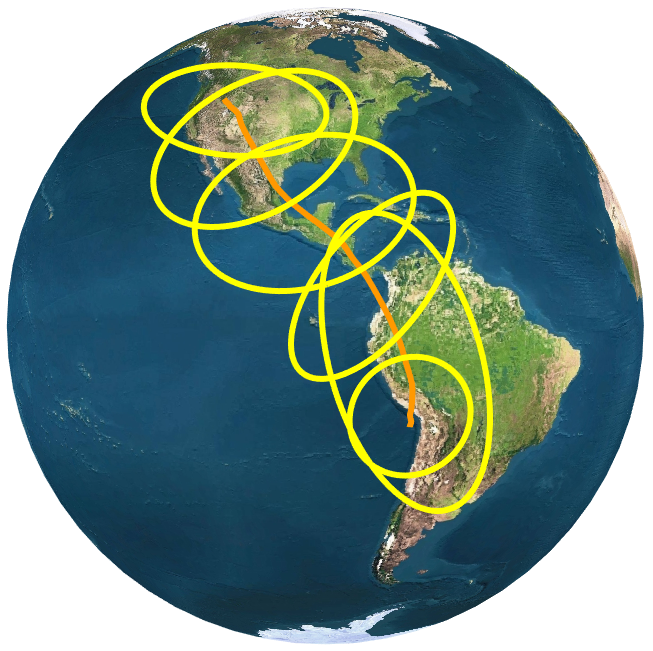} & 
 \includegraphics[width=0.35\textwidth]{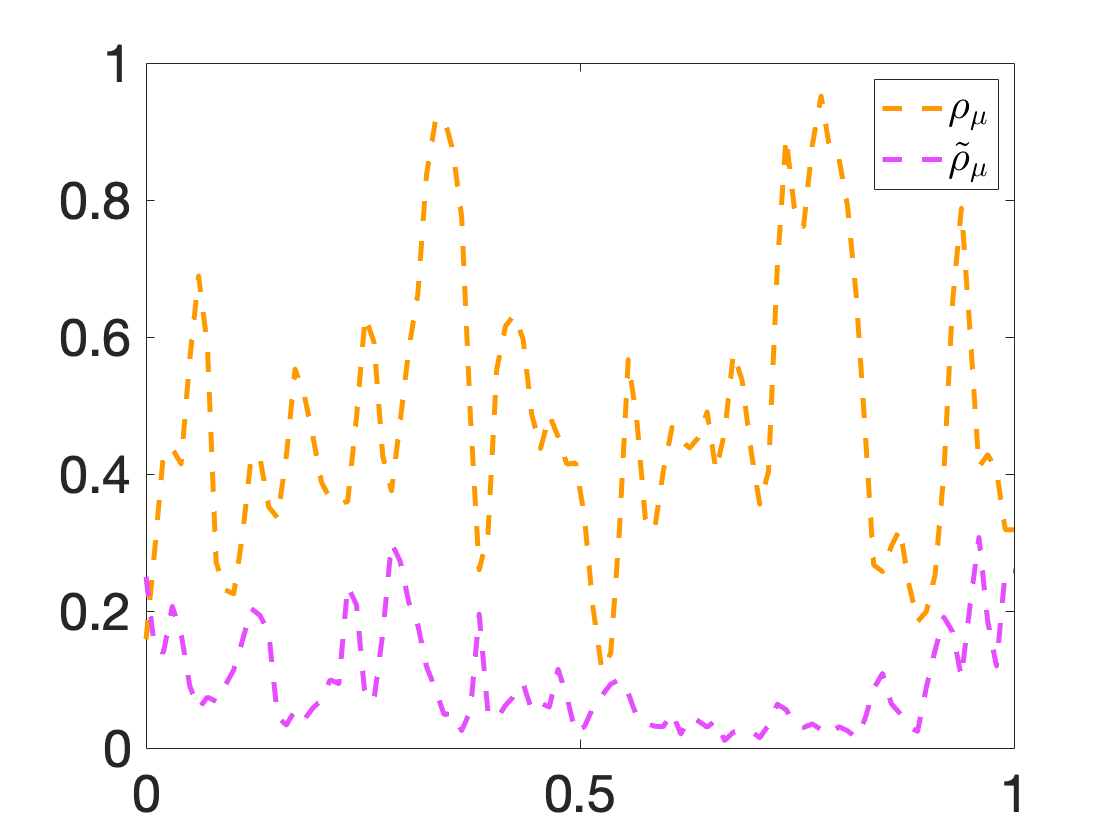} & 
 \includegraphics[width=0.25\textwidth]{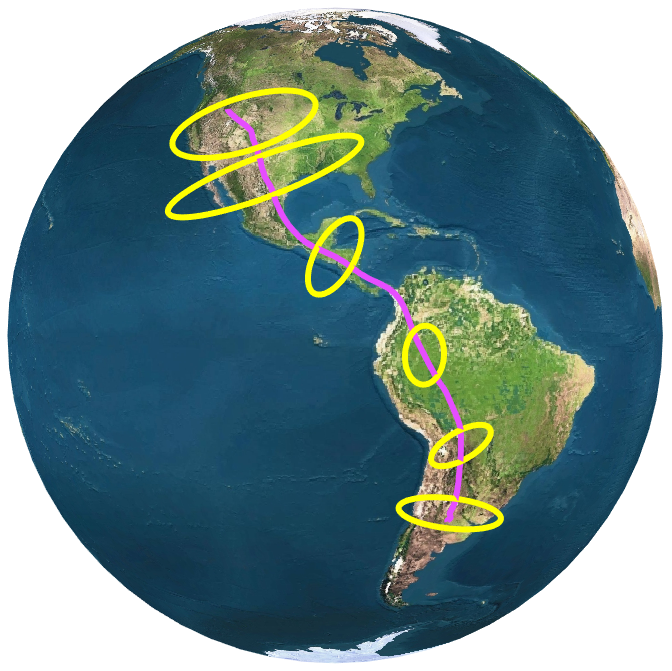} \\
 \includegraphics[width=0.25\textwidth]{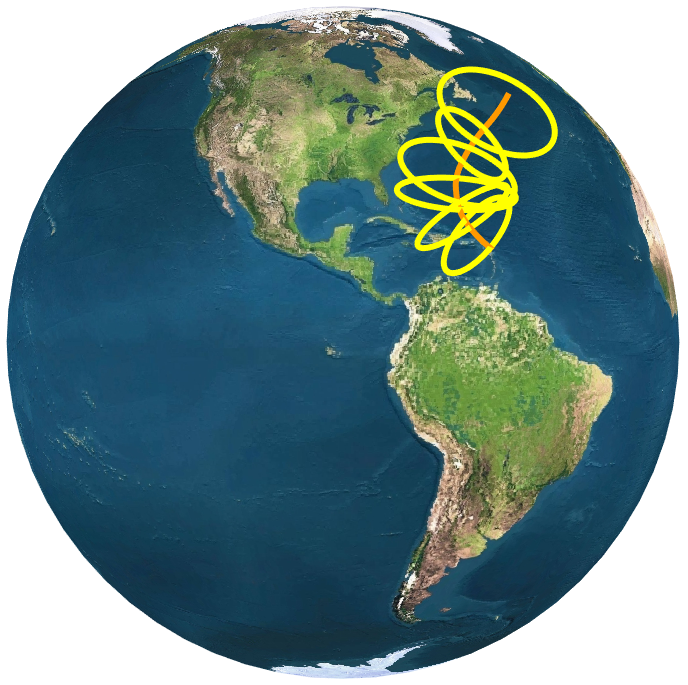} & 
 \includegraphics[width=0.35\textwidth]{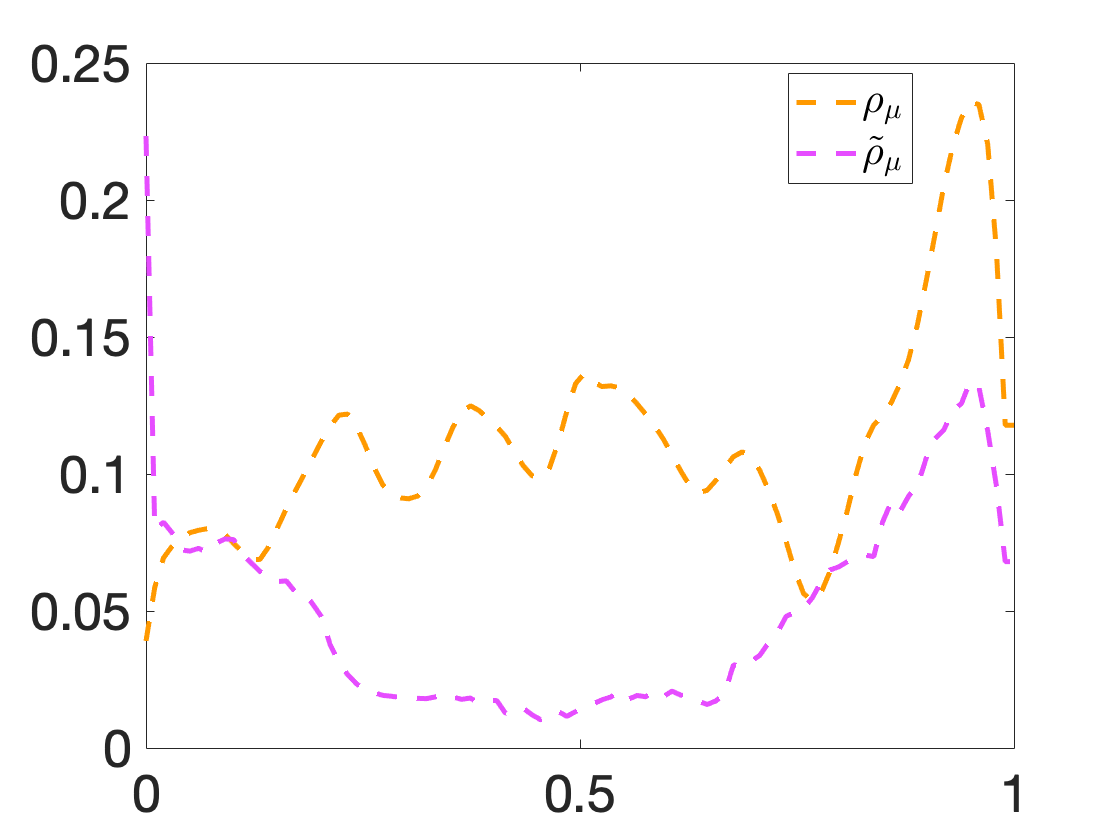} & 
 \includegraphics[width=0.25\textwidth]{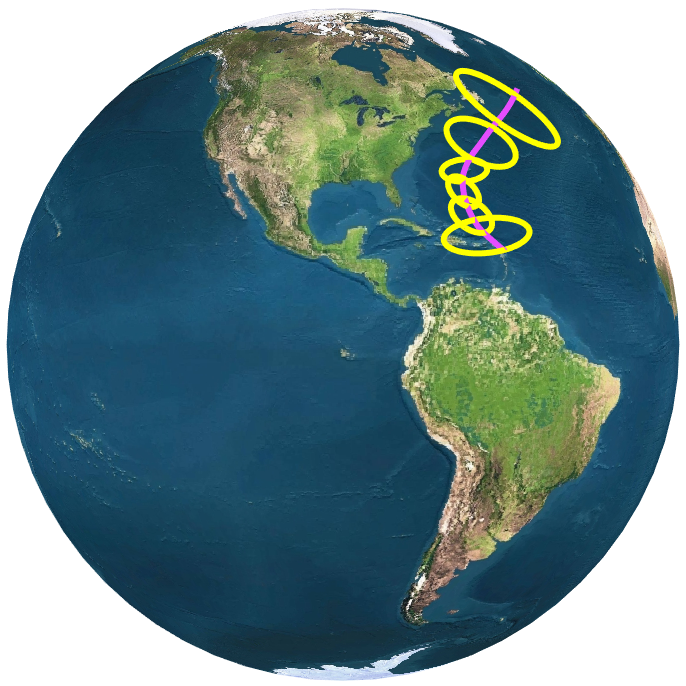} \\
   (a) & (b) & (c) \\
\end{tabular}
\end{center}
\caption{Visualization of variability of TSRVCs before and after alignment. 
Panel (a) shows the covariance ($K_\mu(t)$) of TSRVCs before alignment, and panel (c) shows the covariance after alignment. Panel (b) shows the trace of $K_\mu(t)$s.}
\label{fig:V}
\end{figure}

At last, we compared our Algorithm \ref{alg:aloptx} with the one from \cite{ZhengwuZhang2018} (Algorithm 3 in \cite{ZhengwuZhang2018})  in Table \ref{tab:RealData}. We can see that proposed framework achieves a significant improvement (around 20\%) in the Swainson hawk data, and a slight improvement in the hurricane data (around 3\%) when optimizing the Fr\'{e}chet function.  

\begin{table}
\caption{Comparison of the Algorithm \ref{alg:aloptx} with the one from \cite{ZhengwuZhang2018} to find the sample Fr\'echet mean on real data.}
\label{tab:RealData}       
\begin{tabular}{c||c}
Swainson hawk data & Hurricane data \\
\hline\hline\hline 
\begin{tabular}{c|c}
 Methods & $\frac{1}{n}\sum_{i=1}^n d^2_{ {\mathbb{B}}/\Gamma}(p_i, \tilde{p}_{\mu})$ \\
 \hline\hline 
 Algorithm \ref{alg:aloptx} & 0.105873 \\
 \hline
 \cite{ZhengwuZhang2018} with $N = 30, M = 60$ & 0.136115 \\
 \hline
 \cite{ZhengwuZhang2018} with $N = 30, M = 120$ & 0.136112 \\ 
 \hline
 \cite{ZhengwuZhang2018} with $N = 30, M = 240$ & 0.136113 \\  
 \hline
 \cite{ZhengwuZhang2018} with $N = 60, M = 60$ & 0.136115 \\
 \hline
 \cite{ZhengwuZhang2018} with $N = 60, M = 120$ & 0.136112 \\
 \hline
 \cite{ZhengwuZhang2018} with $N = 60, M = 240$ & 0.136113 \\ 
 \hline
 \cite{ZhengwuZhang2018} with $N = 120, M = 60$ & 0.136116 \\
 \hline
 \cite{ZhengwuZhang2018} with $N = 120, M = 120$ & 0.136112 \\
 \hline
 \cite{ZhengwuZhang2018} with $N = 120, M = 240$ & 0.136113 
\end{tabular} &
\begin{tabular}{c|c}
 Methods & $\frac{1}{n}\sum_{i=1}^nd^2_{ {\mathbb{B}}/\Gamma}(p_i, \tilde{p}_{\mu})$ \\
 \hline\hline 
 Algorithm \ref{alg:aloptx} & 0.052998 \\
 \hline
 \cite{ZhengwuZhang2018} with $N = 30, M = 60$ & 0.0545199 \\
 \hline
 \cite{ZhengwuZhang2018} with $N = 30, M = 120$ & 0.0545143 \\ 
 \hline
 \cite{ZhengwuZhang2018} with $N = 30, M = 240$ & 0.0545157 \\  
 \hline
 \cite{ZhengwuZhang2018} with $N = 60, M = 60$ & 0.0545207 \\
 \hline
 \cite{ZhengwuZhang2018} with $N = 60, M = 120$ & 0.0545143 \\
 \hline
 \cite{ZhengwuZhang2018} with $N = 60, M = 240$ & 0.0545158 \\ 
 \hline
 \cite{ZhengwuZhang2018} with $N = 120, M = 60$ & 0.0545207 \\
 \hline
 \cite{ZhengwuZhang2018} with $N = 120, M = 120$ & 0.0545143 \\
 \hline
 \cite{ZhengwuZhang2018} with $N = 120, M = 240$ & 0.0545158 
\end{tabular}
\end{tabular}
\end{table}

\section{Conclusion}
This paper studies functional data on $\S^2$, and provides a method to intrinsically compute the amplitude mean for a set of samples. Functions on $\S^2$ are represented as a pair, consisting of its starting point and a transported square-root curve (TSRVC). With this representation, the domain of interest becomes a fiber bundle, denoted as $\C$. We then study the Riemannian structure of $\C$, and its quotient space $ {\mathbb{B}}/\tilde{\Gamma}$ containing the amplitude of smooth functions. 
Gradient descent algorithms are developed to find the geodesic between two points on $\C$ and $\C/\tilde{\Gamma}$.
In this process, we simplify the geodesic search by developing several novel tools, including closed form solutions for parallel transport of vectors along circle arcs on $\S^2$, and a closed form gradient of the distance function between two functions. From there, we extend our toolbox to perform analysis of a set of functions and by developing computational algorithms for finding the sample Fr\'echet means on $\C$ and $ {\mathbb{B}}/\tilde{\Gamma}$, where novel exponential and inverse exponential maps are introduced. The proposed framework is comprehensively evaluated in both simulated and real data, and is shown to be superior to its competitor. Implementation of these tools is published on GitHub repository: \url{https://github.com/Bayan2019/2DSphericalTrajectories}.

\bibliographystyle{spmpsci}
\bibliography{bibfile, sample}

\end{document}